\documentclass[12pt]{article}


\textheight 8.15 in
\textwidth 6.75 in
\topmargin -0.25 in
\oddsidemargin 0.0in

\RequirePackage[OT1]{fontenc}
\RequirePackage{amsthm,amsmath,amsmath,amsfonts,amssymb,mathabx}
\RequirePackage{natbib}
\RequirePackage[colorlinks,citecolor=blue,urlcolor=blue]{hyperref}
\usepackage{times}
\usepackage{appendix}
\usepackage{multirow}
\usepackage{enumitem}
\usepackage{footmisc}
\usepackage{xcolor}
\usepackage{xr}
\externaldocument{supp}

\usepackage{url}
\usepackage{hyperref}

\usepackage{graphicx}

\usepackage{authblk}

\allowdisplaybreaks[4]

\newtheorem{definition}{Definition}

\newtheorem{theorem}{Theorem}
\newtheorem{lemma}{Lemma}
\newtheorem*{lemma*}{Lemma}
\newtheorem{corollary}{Corollary}
\newtheorem{assumption}{Assumption}
\newtheorem{proposition}{Proposition}

\theoremstyle{definition}

\usepackage[linesnumbered,boxed]{algorithm2e}
\SetKwRepeat{Do}{do}{while}%
\SetKwInput{KwInput}{Input}
\SetKwInput{KwOutput}{Output}
\SetKwInput{KwFunction}{Function}
\SetKwInput{KwParameters}{Parameters}
\SetKwComment{Comment}{$\triangleright$\ }{}
\SetCommentSty{itshape}

\let\oldnl\nl
\newcommand{\nonl}{\renewcommand{\nl}{\let\nl\oldnl}}
\newcommand{\mat}{\mathbf }
\newcommand{\vct}{\boldsymbol }

\newcommand{\argmax}{\mathrm{argmax}}

\newcommand{\tr}{\mathrm{tr}}

\newcommand{\supp}{\mathrm{supp}}

\newcommand\RR{\mathbb{R}}

\def\cA{\mathcal{A}}

\def\cN{\mathcal{N}}
\def\cO{\mathcal{O}}

\renewcommand{\hat}{\widehat}
\renewcommand{\tilde}{\widetilde}
\renewcommand{\bar}{\overline}

\definecolor{DSgray}{cmyk}{0,1,0,0}

\makeatletter
\def\singlespace{\def\baselinestretch{1}\@normalsize}

\newcommand{\blind}{1}

\begin{document}

\def\spacingset#1{\renewcommand{\baselinestretch}%
{#1}\small\normalsize} \spacingset{1}

\if1\blind
{
\title{Nearly Dimension-Independent Sparse Linear Bandit over Small Action Spaces via Best Subset Selection}
\author{Yining Wang, Yi Chen, Ethan X. Fang, Zhaoran Wang and Runze Li}
\date{}
\maketitle
\begin{singlespace}
\begin{footnotetext}[1]
{
Yining Wang is an assistant professor at Information Systems and Operations Management at the Warrington College of Business of University of Florida, Gainesville, FL 32611, USA. Yi Chen is a Ph.D. student at Department of Industrial Engineering and Management Sciences, Northwestern University, Evanston, IL 60208, USA. Ethan X. Fang is an assistant professor at Department of Statistics, Pennsylvania State University,
University Park, PA 16802-2111, USA. Zhaoran Wang is an assistant professor at Department of Industrial Engineering and Management Sciences, Northwestern University, Evanston, IL 60208, USA.
Runze Li is the Eberly Chair Professor at Department of Statistics, Pennsylvania State University,
University Park, PA 16802-2111, USA.
Fang and Li's research was supported by NSF DMS 1820702, DMS 1953196, and DMS 2015539.
}
\end{footnotetext}
\begin{footnotetext}[2]
    {Wang and Chen contribute equally to this manuscript.
    }
    \end{footnotetext}

\end{singlespace}
}
\fi
\if0\blind
{
\title{ Nearly Dimension-Independent Sparse Linear Bandit over Small Action Spaces via Best Subset Selection}
\author{}
\date{}
\maketitle
} \fi

\begin{abstract}
We consider the stochastic contextual bandit problem under the high dimensional linear model.
We focus on the case where the action space is finite and random, with each action associated with a randomly generated contextual covariate.  This setting finds essential applications such as personalized recommendation, online advertisement, and personalized medicine. However, it is very challenging as we need to balance  exploration and exploitation.
We propose  doubly growing epochs and estimating  the parameter using the best subset selection method, which is easy to implement in practice. This approach achieves $ \tilde{\mathcal{O}}(s\sqrt{T})$ regret with high probability, which is nearly independent in the ``ambient'' regression model dimension $d$. We further attain a sharper $\tilde{\mathcal{O}}(\sqrt{sT})$ regret by using the \textsc{SupLinUCB} framework and match the minimax  lower bound of low-dimensional linear stochastic bandit problems. Finally, we conduct extensive numerical experiments to demonstrate the applicability and robustness of our algorithms empirically.
\end{abstract}

\noindent\textbf{Keyword}: Best subset selection, high-dimensional models,  regret analysis,
stochastic bandit.

\newpage
\spacingset{1.45} 

\pagestyle{plain}


\section{Introduction}

Contextual bandit problems receive significant attention over the
past years in different communities, such as statistics,
operations research, and computer science.  This class of problems
studies how to make optimal sequential decisions with new
information in different settings, where we aim to maximize our
accumulative reward, and we iteratively improve our policy given
newly observed results. It finds many important modern applications
such as personalized recommendation
\citep{li2010contextual,li2011unbiased}, online advertising
\citep{krause2011contextual}, cost-sensitive classification
\citep{agarwal2014taming}, and personalized medicine
\citep{goldenshluger2013linear,bastani2015online,keyvanshokooh2019contextual}.
In most contextual bandit problems, at each iteration, we first
obtain some new information. Then, we  take  action  based on a
certain policy and observe a new reward. Our goal is to maximize
the total reward, where we iteratively improve our policy. This
paper is concerned with the linear stochastic bandit model, one of the most fundamental models in
contextual bandit problems.
We model the expected reward at each time period
as a linear function of some random information depending on our
action.  This model receives considerable attention
\citep{auer2002using,abe2003reinforcement,dani2008stochastic,rusmevichientong2010linearly,chu2011contextual},
and as we will discuss in more detail, it naturally finds
applications in optimal sequential treatment regimes.

In a linear stochastic bandit model, at each time period $t\in\{1,2,\cdots,T\}$,
we are given some action space $\mathcal A_t$.
Here each feasible action  $i\in\mathcal A_t$ is  associated with a $d$-dimensional context covariate $X_{t,i}\in\mathbb \RR^d$ that is known before any action is taken.
Next, we take an  action $i_t\in\mathcal A_t$ and then observe a {reward} $Y_t\in\mathbb R$.
Assume that the reward follows   a linear model
\begin{equation}
Y_t = \langle X_{t,i_t},\theta^*\rangle + \varepsilon_t,
\label{eq:model}
\end{equation}
where $\theta^*\in\mathbb R^d$ is an unknown $d$-dimensional
parameter, and $\{\varepsilon_t\}_{t=1}^T$ are noises. Without
loss of generality, we assume that we aim to maximize the total
reward. We measure the performance of the selected sequence of
actions $\{i_t\}_{t=1}^T$ by the accumulated regret
\begin{equation}  \label{eqn:regret}
R_T(\{i_t\};\theta^*):=\sum_{t=1}^T \max_{i\in \mathcal A_t} \langle X_{t,i},\theta^*\rangle - \langle X_{t,i_t},\theta^*\rangle.
\end{equation}
Essentially, the regret measures the  difference between the
``best" we can achieve if we know the true parameter $\theta^*$
and the ``noiseless" reward we get. It is clear that regret is
always nonnegative, and our goal is to minimize the regret.



Meanwhile, due to the advance of technology, there are many modern
applications where we encounter the high-dimensionality issue,
i.e., the dimension $d$ of the covariate  is large. Note that here
the total number of iterations $T$ is somewhat equivalent to the
sample size, which is the number of pieces of information we are
able to ``learn" the true parameter $\theta^*$. Such a
high-dimensionality issue presents in practice. For example, given
the genomic information of some patients, we aim to find a policy
that assigns each patient to the best treatment for him/her.
It is usually the case that the number of patients is much smaller
than the dimension of the covariate. In this paper, we consider
the linear stochastic contextual bandit problem under such a
high-dimensional setting,  where the parameter $\theta^*$ is of
high dimension that $d\gg T$. We assume that $\theta^*$ is sparse
that only at most $s\ll d$ components of $\theta^*$ are non-zero.
This assumption is commonly imposed in high-dimensional statistics
and signal processing literature
\citep{donoho2006compressed,buhlmann2011statistics}.

In addition, for the  action spaces $\{\mathcal{A}_t\}_{t=1}^T$,
it is known in literature
\citep{dani2008stochastic,shamir2015complexity,szepesvari2016banditlog}
that if there are infinitely many feasible actions at each
iteration, the minimax lower bound is of order
$\tilde{\mathcal{O}}(\sqrt{sdT})$, which does not solve the curse
of dimensionality. We assume that the action spaces $\{\mathcal
A_t\}_{t=1}^T$ are {finite, small}, and {random}. In particular,
we assume that for all $t$, $|\mathcal A_t|=k\ll d$, and each
action in $\mathcal A_t$ is associated with independently randomly
generated contextual covariates. In most practical applications,
this finite action space setting is naturally satisfied. For
example, in the treatment example, there are usually only a small
number of feasible treatments available. We refer the readers to
Section~\ref{sec:assumptions} for a complete description and
discussion of the assumptions.

\subsection{Literature Review}

We  briefly review existing works on linear stochastic bandit
problems under both low-dimensional and high-dimensional settings.
Under the classical low-dimensional setting, \cite{auer2002using}
pioneers the use of upper-confidence-bound (UCB) type algorithms
for the linear stochastic bandit, which is one of the most
powerful and fundamental algorithms for this class of problems,
and is also considered in \cite{chu2011contextual}.
\cite{dani2008stochastic,rusmevichientong2010linearly} study
linear stochastic bandit problems with large or infinite action
spaces, and derive corresponding lower bounds. Under the
high-dimensional setting, where we assume that $\theta^*$ is
sparse, when the action spaces $\mathcal A_t$'s are hyper-cube
spaces $[-1,1]^d$, \cite{lattimore2015linear} develop the SETC
algorithm that attains nearly dimension-independent regret bounds.
We point out that this algorithm exploits the unique structure of
hyper-cubes and is unlikely to be applicable for general action
spaces including the ones of our interest where the action spaces
are finite. \cite{abbasi2012online} consider a UCB-type algorithm
for general action sets and obtain a regret upper bound of
$\widetilde \cO(\sqrt{sdT})$, which depends {polynomially} on the
ambient dimension $d$. Note that here the $\widetilde{\cO}(\cdot)$
notation is the big-O notation that ignores all logarithmic
factors. \cite{carpentier2012bandit} consider a different reward
model and obtain an $\widetilde \cO(\|\theta^*\|_2^2s\sqrt{T})$
regret upper bound.
\cite{goldenshluger2013linear,bastani2015online} study a variant
of the linear stochastic bandit problem in which only one
contextual covariate $X_t$ is observed at each time period $t$,
while each action $i_t$ corresponds to a different unknown model
$\theta_{i_t}^*$. We point out that this model is a special case
of our model \eqref{eq:model}, as discussed in
\cite{foster2018practical}.

Another closely related problem is the {online sparse prediction}
problem \citep{gerchinovitz2013sparsity,foster2016online}, in
which sequential predictions $\widehat Y_t$'s  of $Y_t=\langle
X_t,\theta^*\rangle + \varepsilon_t$ are of the interest, and the
regret is measured in mean-square error $\sum_t |\widehat
Y_t-Y_t|^2$. It can be further generalized to online
empirical-risk minimization \citep{langford2009sparse} or even the
more general {derivative-free/bandit convex optimization}
\citep{nemirovsky1983problem,flaxman2005online,agarwal2010optimal,
shamir2013complexity,besbes2015non,bubeck2017kernel,wang2017stochastic}.
{ Most existing works along this direction have continuous
(infinite) action spaces $\{\mathcal A_t\}$. They allow
small-perturbation type methods like estimating gradient descent.}
We summarize the existing theoretical results on stochastic linear
bandit problems in Table \ref{tab:summary}.

\begin{table}[t]
\caption{Summary of existing results on low- and high-dimensional linear stochastic bandit.
Contextual covariates $X$ regression model $\theta^*$ are in $\mathbb R^d$.
Logarithmic terms are dropped.}
\vskip 0.1in
\centering
\scalebox{0.9}{
\begin{tabular}{ccccc}
\hline
Example paper& Model& Action spaces& Model constrains& Regret\\
\hline
\cite{auer2002using}& $Y=\langle X,\theta^*\rangle+\varepsilon$& $|\mathcal A_t|\leq N$& $|\langle X,\theta^*\rangle|\leq 1$& $\widetilde \cO(\sqrt{dT})$\\
\cite{dani2008stochastic}& $Y=\langle X,\theta^*\rangle+\varepsilon$& $\mathbb R^d$& $|\langle X,\theta^*\rangle|\leq 1$& $\widetilde \cO(d\sqrt{T})$\\
\cite{lattimore2015linear}& $Y=\langle X,\theta^*\rangle+\varepsilon$& $[-1,1]^d$& $\|\theta^*\|_0\leq s,\|\theta^*\|_1\leq 1$& $\widetilde \cO(s\sqrt{T})$\\
\cite{abbasi2012online}& $Y=\langle X,\theta^*\rangle + \varepsilon$& $\mathbb R^d$& $\|\theta^*\|_0\leq s,|\langle X,\theta^*\rangle|\leq 1$& $\widetilde \cO(\sqrt{sdT})$ \\
\cite{carpentier2012bandit}& $Y=\langle X,\theta^*\rangle +\varepsilon $& $\{x: \|x\|_2\leq 1\}$& $\|\theta^*\|_0\leq s$& $\widetilde \cO(\|\theta^*\|_2^2s\sqrt{T})$\\
\textbf{This paper}& $Y=\langle X,\theta^*\rangle + \varepsilon$& $|\mathcal A_t|\leq k$\textsuperscript{$\dagger$}& $\|\theta^*\|_0\leq s,\|\theta^*\|_2\leq r$& $\widetilde \cO(\sqrt{sT})$\\
\hline
\end{tabular}
}
\begin{flushleft}
\vskip-0.1in
{\footnotesize \textsuperscript{$\dagger$}Contextual covariates $X_{t,i}$ associated with each $i\in\mathcal A_t$ are randomly generated.}
\end{flushleft}
\label{tab:summary}
\end{table}

From the application perspective of finding the optimal treatment
regime, existing literatures focus on achieving the optimality
through batch settings. { General approaches  include model-based
methods such as Q-learning \citep{watkins1992q,murphy2003optimal,
moodie2007demystifying,chakraborty2010inference,goldberg2012q,song2015penalized}
and A-learning \citep{robins2000marginal,murphy2005experimental}
and model-free policy search methods
\citep{robins2008estimation,orellana2010dynamic,orellana2010dynamic2,zhang2012robust,zhao2012estimating,zhao2014doubly}.}
These methods are all developed based on batch settings where we
use the whole dataset to estimate the optimal treatment regime.
This approach is applicable after the clinical trial is completed,
or when the observational dataset is fully available.  However,
the batch setting approach is not applicable when it is emerging
to identify the optimal treatment regime. For a contemporary
example, during the recent outbreak of coronavirus disease
(COVID-19), it is extremely important to quickly identify  the
optimal or nearly optimal treatment regime to assign each patient
to the best treatment among a few choices. However, since the
disease is novel, there is no or very little historical data.
Thus, the batch setting approaches mentioned above are not
applicable. On the other hand,  our model naturally provides a
``learn while optimizing" alternative approach to sequentially
improve the policy/treatment regime. This  provides an important
motivating application of the proposed method.



\subsection{Major Contributions}
In this paper, we propose new algorithms, which iteratively learn the parameter $\theta^*$ while optimizing the regret.  Our algorithms use the ``doubling trick" and  modern optimization techniques,
which carefully balance the randomization for exploration to fully learn the parameter and maximizing the reward to achieve the near-optimal regret. In particular, our algorithms fall under the general  UCB-type algorithms \citep{lai1985asymptotically,auer2002finite}. Briefly speaking, we start with some pure exploration periods that we take random actions uniformly, and we estimate the parameter. Then, for the next certain periods, which we call an epoch, we take the action at each time period by optimizing some upper confidence bands using the previous estimator. At the end of each epoch, we renew the estimator using new information. We then enter the next epoch using the new estimator and renew the estimator at the end of the next epoch. We repeat this until the $T$-th time period.

The high-dimensional regime (i.e., $d\gg T$) poses significant challenges in our setting, which cannot be solved by exiting works.
First, unlike in the low-dimensional regime where ordinary least squares (OLS) always admits closed-form solutions and error bounds, in the high-dimensional regime, most existing methods like the Lasso \citep{tibshirani1996regression} or the Dantzig selector \citep{candes2007dantzig}
require the sample covariance matrix to satisfy certain restricted eigenvalue''conditions \citep{bickel2009simultaneous},
which do not hold under our setting for sequentially selected covariates.
Additionally, our action spaces $\{\mathcal A_t\}$ are finite. This rules out several existing algorithms, including the SETC method \citep{lattimore2015linear}
that exploits the specific structure of hyper-cube actions sets
and finite-difference type algorithms in stochastic sparse convex optimization \citep{wang2017stochastic,balasubramanian2018zeroth}.
We adopt the best subset selector estimator \citep{miller2002subset} to derive valid confidence bands only using ill-conditioned sample covariance matrices. Note that while the optimization for best subset selection is  NP-hard in theory \citep{natarajan1995sparse}, by the tremendous progress of modern optimization, solving such problems is practically efficient as discussed in \cite{pilanci2015sparse,bertsimas2016best}. { In addition, the renewed estimator may correlate with the previous one. This decreases the efficiency. We let the epoch sizes grow exponentially, which is known as the ``doubling trick" \citep{auer1995gambling}. This ``removes" the correlation between recovered support sets by best subset regression.} Our theoretical analysis is also motivated from some known analytical frameworks such as the elliptical potential lemma \citep{abbasi2011improved} and the SupLinUCB framework
 \citep{auer2002using} in order to obtain sharp regret bounds.

We summarize our main theoretical contribution in the following
corollary, which is essentially a simplified version of
Theorem~\ref{thm:suplinucb} in Section 3.3.

\begin{corollary} \label{cor:main}
\textit{We assume that $|\mathcal A_t|=k$,
$X_{t,i}\overset{i.i.d.}{\sim}\mathcal N(0, I_d)$, and
$\varepsilon_t\overset{i.i.d.}{\sim}\mathcal N(0,1)$, for each
$1\le t\le T$. Then there exists a policy $\pi$ such that with
probability at least $1-\delta$, for all $\|\theta^*\|_0\leq s$
and $\|\theta^*\|\leq r$,
$$
R_T(\{i_t\};\theta^*) \lesssim \sqrt{sT}\cdot \mathrm{polylog}(k,d,T,r),
$$
where $\mathrm{polylog}$ denotes a polynomial of logarithmic
factors. }
\end{corollary}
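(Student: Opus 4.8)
The plan is to obtain Corollary~\ref{cor:main} as a specialization of Theorem~\ref{thm:suplinucb}, so I sketch how I would prove the latter. The guiding idea is a reduction from the $d$-dimensional sparse bandit to a short sequence of low-dimensional ($s$-dimensional, up to logarithms) linear bandits, on each of which the analysis of the \textsc{SupLinUCB} framework \citep{auer2002using,chu2011contextual} already delivers $\tilde\cO(\sqrt{sT})$ regret. Split the horizon into exponentially growing epochs $\cE_1,\dots,\cE_M$ with $|\cE_m|\asymp 2^m$ and $M=\cO(\log T)$ (the ``doubling trick''); inside $\cE_m$, run \textsc{SupLinUCB} on the contexts $X_{t,i}$ projected onto a coordinate set $\hat S_{m-1}$ with $|\hat S_{m-1}|\le s$, produced at the end of epoch $\cE_{m-1}$ by best subset selection \citep{bertsimas2016best}. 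On $\cE_m$ the reward reads $Y_t=\langle (X_{t,i_t})_{\hat S_{m-1}},\theta^*_{\hat S_{m-1}}\rangle + \big(\langle (X_{t,i_t})_{\hat S_{m-1}^c},\theta^*_{\hat S_{m-1}^c}\rangle + \varepsilon_t\big)$; since the epoch-$m$ contexts are independent of $\hat S_{m-1}$ and $X_{t,i}\sim\cN(0,I_d)$, the parenthesized ``effective noise'' is conditionally $\cN\big(0,\,1+\|\theta^*_{\hat S_{m-1}^c}\|^2\big)$, so \textsc{SupLinUCB} sees a bona fide $k$-armed, $\le s$-dimensional, sub-Gaussian instance.

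Two ingredients drive the reduction. First, a support-recovery guarantee (the estimation lemma of Section~3): from $n$ rows of i.i.d.\ $\cN(0,I_d)$ design the best subset estimator recovers every coordinate of magnitude $\gtrsim\sqrt{\log(d/\delta)/n}$ and obeys $\|\hat\theta-\theta^*\|\lesssim\sqrt{s\log(d/\delta)/n}$ with probability $1-\delta$ --- this uses only the restricted-eigenvalue behaviour of Gaussian matrices \citep{bickel2009simultaneous}, which holds uniformly over $s$-sparse supports, so no $\beta$-min assumption on $\theta^*$ is needed. Second, and this is the point of the doubling, $\hat S_{m-1}$ is built from $\cE_{m-1}$, whose length $\asymp 2^{m-1}$ is a constant fraction of $|\cE_m|$, so the coordinates missed by $\hat S_{m-1}$ satisfy $\|\theta^*_{\hat S_{m-1}^c}\|\lesssim\sqrt{s\log d/|\cE_{m-1}|}\asymp\sqrt{s\log d/|\cE_m|}$. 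For soundness I would also check that the design feeding the best subset step behaves like an i.i.d.\ Gaussian design even though the epoch-$(m-1)$ contexts were selected adaptively: the components of the selected contexts \emph{outside} the then-active set $\hat S_{m-2}$ are untouched by the selection rule and so remain i.i.d.\ $\cN(0,I)$ --- exactly what is needed to detect newly relevant coordinates --- while inside $\hat S_{m-2}$ one restricts to \textsc{SupLinUCB}'s level-filtered subsets, on which the rewards are conditionally independent of the contexts.

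Granting these, the epoch-$m$ regret splits as $R^{(m)}\le R^{(m)}_{\mathrm{low}}+R^{(m)}_{\mathrm{bias}}$. The first term is the regret of \textsc{SupLinUCB} over $|\cE_m|$ rounds on a $k$-armed, $\le s$-dimensional instance, which by the elliptical-potential argument \citep{abbasi2011improved,chu2011contextual} is $\tilde\cO(\sqrt{s|\cE_m|})$. The second term is the cost of competing with $\theta^*$ rather than with its restriction to $\hat S_{m-1}$; a short computation bounds it by $2\max_i|\langle (X_{t,i})_{\hat S_{m-1}^c},\theta^*_{\hat S_{m-1}^c}\rangle|$ per round, which under $X_{t,i}\sim\cN(0,I)$ is $\lesssim\|\theta^*_{\hat S_{m-1}^c}\|\sqrt{\log k}$ with high probability, hence $\tilde\cO(\sqrt{s|\cE_m|})$ over the whole epoch --- this is exactly why \emph{exact} support recovery is never required. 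Summing over the $\cO(\log T)$ epochs, $\sum_m\sqrt{s|\cE_m|}\asymp\sqrt{s}\,\sqrt{|\cE_M|}\asymp\sqrt{sT}$ since the $|\cE_m|$ grow geometrically, which yields the claim after collecting logarithmic factors; $r$ appears only inside logarithms because $\|\theta^*\|\le r$ merely controls the sub-Gaussian scale of the rewards and the number of confidence levels. A union bound over the $\cO(\log T)$ epochs, the $\cO(\log T)$ levels and the Gaussian-design events --- each at level $\delta/\mathrm{polylog}$ --- gives overall probability $1-\delta$, and uniformity over all $s$-sparse $\theta^*$ with $\|\theta^*\|\le r$ is inherited from the uniformity of the restricted-eigenvalue event.

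The step I expect to be hardest is the interaction between adaptivity and support estimation: showing rigorously that best subset selection on the adaptively collected data of $\cE_{m-1}$ still recovers the support down to the threshold $\sqrt{s\log d/|\cE_{m-1}|}$ --- through the split into ``fresh'' out-of-active-set coordinates and level-filtered in-active-set coordinates --- and then verifying that the doubling schedule makes the induced per-round bias $\sqrt{s\log d/|\cE_m|}$ telescope \emph{against} the \textsc{SupLinUCB} term over the $|\cE_m|$ rounds of the epoch, rather than overwhelming it. A secondary annoyance is propagating the normalization $\|\theta^*\|\le r$ through every confidence radius so that $r$ never escapes a logarithm.
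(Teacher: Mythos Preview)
Your overall strategy matches the paper's: doubling epochs, best subset selection at epoch boundaries, \textsc{SupLinUCB} on the recovered support within each epoch, and the bias/low-dimensional split of the per-epoch regret. The step you correctly flag as hardest---making the BSS guarantee hold on adaptively collected data---is exactly where the paper invests its technical effort, and its mechanism differs from what you sketch.

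Two devices in the paper are absent from your proposal. First, every epoch opens with a \emph{pure-exploration stage} of length $n_0\asymp s^3\cdot\mathrm{polylog}$ in which arms are selected uniformly at random; this is what furnishes the restricted-eigenvalue lower bound on the empirical Gram matrix needed to convert the BSS error from the $\hat{\mat\Gamma}$-norm to an $\ell_1$ bound (Lemma~\ref{l1_norm_upper_bound}). Second, the recovered supports are \emph{nested and growing}, $S_{\tau-1}\subseteq S_\tau$ with $|S_\tau|\le\tau s$ rather than fixed at $s$; the inclusion $S_\tau\supseteq S_{\tau-1}$ is precisely what guarantees that coordinates outside $S_\tau$ were never touched by the epoch-$\tau$ selection rule (which sees only $S_{\tau-1}$), so that $[\hat{\mat\Gamma}_{\tau,\lambda}]_{S_\tau^c}$ genuinely has i.i.d.\ rows and the cross block $[\hat{\mat\Gamma}_{\tau,\lambda}]_{S_\tau S_\tau^c}$ is $\tilde\cO(\sqrt{|E_\tau|})$ entrywise (Lemma~\ref{inf_norm_upper_bound}). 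Without nesting, your independence claim need not hold for the \emph{newly} selected support. For the BSS step itself the paper does not restrict to level-filtered subsets as you suggest; it bounds $\|\hat\theta_{\tau,\lambda}-\theta^*\|_{\hat{\mat\Gamma}_{\tau,\lambda}}$ directly on the full epoch data via the self-normalized martingale inequality of \citep{abbasi2011improved}, which handles adaptivity natively, and then combines this with the two devices above to extract $\|[\theta^*]_{S_\tau^c}\|\lesssim\sqrt{s/|E_\tau|}$ (Lemma~\ref{bss_error}). A minor correction: $r$ does not stay purely inside logarithms in the paper's final bound---the additive $n_0\beta$ term is linear in $r$ through $\beta=\sigma r\sqrt{s\log(\cdot)}$---so the corollary should be read as the leading-order-in-$T$ statement.
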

Note that this corollary holds even if $T\ll d$.
Theorem~\ref{thm:suplinucb} provides more general results than
this corollary, and can be applied for a broader family of
distributions for contextual covariates $\{X_{t,i}\}$. A  simpler
and more implementable algorithm, with a weaker regret guarantee
of $\widetilde \cO(s\sqrt{T})$, is given in Algorithm
\ref{alg:main} and Theorem \ref{thm:main-upper}, respectively. The
form of regret guarantee in this Corollary \ref{cor:main} looks
particularly similar to its low-dimensional counterparts in Table
\ref{tab:summary}, with the ambient dimension $d$ replaced by the
intrinsic dimension~$s$.

\vspace{4pt}

\noindent {\bf Notations.} Throughout this paper, for an integer
$n$, we use $[n]$ to denote the set $\{1,2,\ldots,n\}$. We use
$\|\cdot \|_1, \| \cdot \|, \| \cdot \|_{\infty}$ to denote the
$\ell_1,\ell_2$ and $\ell_\infty$ norms of vector, respectively.
Given a matrix $\mat A $, we use $\| \cdot\|_{\mat A} $ to denote
the $\ell_2$ norm weighted by $\mat A$. Specifically, we have $\|X
\|_{\mat A} :=\sqrt{X^\top \mat A X}$. We also use $\langle
\cdot,\cdot \rangle $ to denote the inner product of two vectors.
Given a set $S \subseteq [d]$,  $ S^c$ is it complement and $ |S|
$ denotes the cardinality. Given a $d$-dimensional vector $X$, we
use $ [X]_i$ to denote its $i$-th coordinate. We also use
$\text{supp}(X)$ to represent the support of $X$, which is the
collection of indices corresponding to nonzero coordinates.
Furthermore, we use $[X]_S=([X]_i)_{i\in S}$ to denote the
restriction of $X$ on~$S$, which is a $|S|$-dimensional vector.
Similarly, for a $d\times d$ matrix $\mat A=([\mat
A]_{ij})_{i,j\in [d]} \in  \RR^{d\times d}$, we denote by $[\mat
A]_{SS'}=([\mat A]_{jk})_{j\in S,k \in S'}$ the restriction of $A$
on $S \times S'$, which is a $|S|\times |S'|$ matrix. When $S=S'$,
we further abbreviate $[\mat A]_S=[\mat A]_{SS} $  . In addition,
given two sequences of nonnegative real numbers $\{a_n\}_{n\geq
1}$ and $\{b_n\}_{n\geq 1}$, $ a_n \lesssim b_n $ and $a_n \gtrsim
b_n$  mean that there exists an absolute constant $0<C<\infty$
such that $a_n \le C b_n$ and $a_n \ge C b_n $ for all $n$,
respectively. We also abbreviate $ a_n \asymp b_n $, if $a_n
\lesssim b_n  $ and $a_n \gtrsim b_n $ hold simultaneously. {We
say that a random event $ \mathcal{E}$ holds with probability at
least $1-\delta$, if there exists some absolute constant $C$ such
that the probability of $\mathcal{E}$ is larger than $1-C\delta$.}
Finally, we remark that arm, action, and treatment all refer to
actions in different applications. We also denote by $i_t$ the
action taken in period $t$ and $X_t=X_{t,i_t}$ the associated
covariate.

\vspace{4pt}

\noindent{\bf Paper Organization.} The rest of this paper is
organized as follows. We  present our  algorithms in Section
\ref{method}. Then we establish our theoretical results in Section
\ref{theory}, under certain technical conditions. We  conduct
extensive numerical experiments using both synthetic and real
datasets to investigate the performance of our methods in Section
\ref{experiment}. We conclude the paper in Section \ref{sec:con}.


\section{Methodologies}  \label{method}
In this section, we present the proposed methods to solve the linear stochastic  bandit problem where we aim to minimize the regret defined in \eqref{eqn:regret}. In Section  \ref{algorithm}, we first introduce an algorithm called ``\textsc{Sparse-LinUCB}'' (SLUCB) as summarized in Alg.~\ref{alg1}, which can be efficiently implemented, and demonstrate the core idea of our algorithmic design.
The SLUCB algorithm is a variant of the celebrated {LinUCB} algorithm \citep{chu2011contextual} for classical linear contextual bandit problems.  The SLUCB algorithm is intuitive and easy to implement. However, we cannot derive the optimal upper bound for the regret.  To close this  gap, we further propose a more sophisticated algorithm called ``\textsc{Sparse-SupLinUCB}'' (SSUCB) (Alg.~\ref{alg:suplinucb}) in   Section \ref{reg_bound1}.  In comparison with the SLUCB algorithm, the SSUCB algorithm  constructs the upper confidence bands through sequentially  selecting  historical data and achieves the optimal rate (up to logarithmic factors).

\subsection{\textsc{Sparse-LinUCB} Algorithm} \label{algorithm}
As we mentioned above, our algorithm is inspired by the standard \textsc{LinUCB} algorithm, which balances  the tradeoff between exploration and exploitation following the principle of  ``optimism in the face of uncertainty''. In particular, the \textsc{LinUCB} algorithm repeatedly constructs upper confidence bands for the potential rewards of the actions. The upper confidence bands  are optimistic estimators. We  then pick the action associated with the largest upper confidence band. This leads to the optimal regret under the low-dimensional setting. However, under the high-dimensional setting, directly applying the \textsc{LinUCB} algorithm  incurs some suboptimal regret since we only get lose confidence bands under the high-dimensional regime. Thus,  it is desirable to construct tight confidence bands under the high-dimensional and sparse setting to achieve the optimal regret.

 Inspired by the remarkable success of the best subset selection (BSS) in  high-dimensional  regression problems, we propose to incorporate  this powerful tool into the \textsc{LinUCB} algorithm. Meanwhile, since the BSS procedure is computationally expensive, it is impractical  to execute the BSS method during every time period. In addition, as we will discuss later, it is not necessary.

 Before we present our algorithms, we briefly discuss the support of parameters. Given a $d$-dimensional vector $\theta$, we denote by $\supp(\theta)$  the support set of $\theta$, which is the collection of dimensions of $ \theta$ with nonzero coordinates that
$$
\supp(\theta)=\big \{j \in [d]: [\theta]_j \ne 0 \big\}.
$$
This definition agrees with that of most literatures. However, for the BSS procedure, it is desirable to generalize this definition. We propose the concept of ``generalized support'' as follows.

\begin{definition} [Generalized Support]
    Given a $d$-dimensional vector $\theta$, we call a subset $S\subseteq [d]$ the generalized support of $\theta$ and denote it by $ \supp^+(\theta)$, if
    $$
     [\theta]_j =0,\ \forall j \not\in S.
    $$
\end{definition}
The generalized support $ \supp^+(\theta) $ is a relaxation of the normal support, since any support is a generalized support (but not vice versa). Moreover, the generalized support is not unique. Any subset including the support is a valid generalized support.

We distinguish the difference between support and generalized support in order to define the best subset selection without causing confusion. For example, we consider a linear model $\theta^* \in \RR^d, X_t \in \RR^d$, and $Y_t=\langle X_t, \theta^* \rangle +\varepsilon_t $. Calculating the ordinary least square estimator restricted on the generalized support $S \in [d]$, which is denoted by
\begin{align*}
    \hat{\theta}= \operatorname*{argmin}_{\supp^+(\theta)=S} \Big\{ \sum_{t}  \big| Y_{t}-\langle X_{t},\theta \rangle  \big|^2  \Big \},
\end{align*}
means that we consider a low-dimensional model only using the information in  $S$ and set the coordinates of estimator except in $S$ as zeros. Formally, let ${[\hat\theta]}_S \in \RR^{|S|}$ be
$$
{[\hat\theta]}_S= \operatorname*{argmin}_{\phi \in \RR^{|S|} } \Big\{ \sum_{t}  \big| Y_{t}-\langle [X_t]_{S},\phi \rangle  \big|^2  \Big \}.
$$
Then we have
$$
\big[\hat \theta\big]_j=\big[ \hat{[\theta]}_S\big]_j,\ j\in S;\ \big[\hat \theta\big]_j=0,\ j\not\in S.
$$
Since we do not guarantee $[\hat \theta\big]_j \ne 0,\  \forall j \in S$, we call  $S$ the generalized support   instead of~support.

We are ready to present the details of SLUCB algorithm now. Our algorithm works as follows. We first apply the ``doubling trick'', which partitions the whole $T$ decision periods into several consecutive epochs such that the lengths of the epochs increase doubly. We only implement the BSS procedure at the end of each epoch to recover the support of  the parameter $\theta^*$. Within each epoch, we start  with a minimal epoch length of $n_0$ for pure exploration, which is called the pure-exploration-stage. At each time period $t$ of this stage, we pick an action within $\cA_t$ uniformly at random. Intuitively speaking, in pure-exploration-stage, we explore the unknown parameters at all possible directions through uniform randomization. From a technical perspective, the pure-exploration-stage is desirable since it maintains the smallest sparse eigenvalue of the empirical design matrix lower bounded away from zero and hence, help us obtain an efficient estimator for $\theta^*$. After the pure-exploration-stage, we keep the estimated support, say of $s$ nonzero components, from the previous epoch unchanged, and treat the problem as an $s$-dimensional regression. Specifically, at each time period, we use the ridge estimator with penalty weight $\lambda$ to estimate $\theta^*$ and construct corresponding confidence bands to help us make decisions in the next time period.

In summary, we partition the  time horizon $[T]$ into consecutive epochs $\{E_{\tau}\}_{\tau=1}^{\tilde{\tau}} $ such that
$$
[T]=\bigcup_{\tau=1}^{\tilde{\tau}} E_{\tau},\ |E_{\tau}|=\min \{ 2^{\tau},n_0\}.
$$
The length of the last epoch $E_{\tilde{\tau}}$ may be less than $2^{\tilde \tau}$ or $n_0$. By definition, the number of epochs $\tilde \tau \asymp \log(T)$. Hence, in the SLUCB algorithm, we run the BSS procedure $\tilde{\cO}(\log (T))$ times. In our later simulations studies, we find that this is practical for moderately large dimensions.

Next, we introduce the details of constructing upper confidence bands in the  SLUCB algorithm. We assume that at  period $t \in E_\tau$, we pick action $i_t\in \cA_t$ and observe the associated covariate $X_{t,i_t}$ and reward $Y_t$. We also abbreviate $ X_{t,i_t} $ as $ X_t $, if there is no confusion. We denote by $ \hat \theta_{\tau-1,\lambda} $ the BSS estimator for the true parameter $\theta^*$ at the end of previous epoch $E_{\tau-1}$, and let $$ S_{\tau-1}=\supp^+{(\hat \theta_{\tau-1,\lambda})}$$ be its generalized support, i.e., the generalized support recovered by epoch $E_{\tau-1}$. For periods in the pure-exploration-stage of $E_\tau$, we pick arms uniformly at random and do not update the estimator of $\theta^*$. Beyond the pure-exploration-stage, we estimate $ \theta^*$ by a ridge estimator. Let $ \hat \theta^{t-1}_{\tau,\lambda}$ be the most updated ridge  estimator of $\theta^*$ by $t \in E_\tau$, which is estimated by restricting its generalized support on $S_{\tau-1}$  using data $\{ X_{t'},Y_{t'}\}_{t'\in E^{t-1}_{\tau}}$. In particular, all components of $ \hat \theta^{t-1}_{\tau,\lambda}$ outside $S_{\tau-1}$ are set as zeros and
\begin{align*}
    \hat{\theta}^{t-1}_{\tau,\lambda}&=\operatorname*{argmin}_{\supp^+(\theta)=S_{\tau-1}} \Big\{ \sum_{t' \in E^{t-1}_{\tau}}  \big| Y_{t'}-\langle X_{t'},\theta \rangle  \big|^2 +\lambda \|\theta\|^2 \Big \}.
\end{align*}
Given $\hat \theta^{t-1}_{\tau,\lambda}$, we calculate the upper confidence band of potential reward $\langle X_{t,i}, \theta^* \rangle$ for each possible action $i \in \mathcal{A}_t$. In particular, let the tuning parameters $\alpha$ and $\beta$ be
\begin{align*}
    \alpha= \sigma \nu \sqrt{s\tilde{\tau}}\cdot \log(\sigma \rho^{-1}kTd/\delta) ,\ \beta=r\sigma\cdot \sqrt{s\log(kTd/\delta)},
\end{align*}
which correspond to the confidence level and upper estimate of potential reward, respectively. Then we calculate the upper confidence band associated with action $i$ as
$$
\min \Big\{  \beta,\big \langle X_{t,i}, \hat\theta_{\tau,\lambda}^{t-1}\big \rangle + \alpha \cdot \Big(\sigma \sqrt{{\log(kTd/\delta)} \big /{|E_{\tau-1}|}} + \sqrt{[X_{t,i}]_{S_{\tau-1}}^\top\big[\hat{\mat \Gamma}_{\tau-1,\lambda}^{t-1}\big]^{-1}[X_{t,i}]_{S_{\tau-1}}}\Big)\Big\},
$$
where
$$
\hat{\mat \Gamma}_{\tau-1,\lambda}^{t-1}=\lambda \textbf{I}_{|S_{\tau-1}|}+\sum_{t'\in E_{\tau}^{t-1}}[X_{t'}]_{S_{\tau-1}}[X_{t'}]^{\top}_{S_{\tau-1}}.
$$
After that, we pick the arm $i_t$ corresponding to the largest upper confidence band to play and observe the corresponding reward
$Y_t=\langle X_{t,i_t},\theta^*\rangle+ \varepsilon_t$. We repeat this process until the end of epoch $E_{\tau}$.

Then we run the BSS procedure using all data collected in $E_{\tau}$ to recover the support of $\theta^*$. We also enlarge the size of generalized support by $s$. To be specific, let $S_{\tau}$ be the generalized support recovered in this step. We require that $S_{\tau}$ satisfies constraints
$$
S_{\tau} \supseteq S_{\tau-1},\ |S_{\tau}|\le \tau s.
$$
and obtain the BSS estimator $\hat\theta_{\tau,\lambda}$ as
\begin{align}
\hat \theta_{\tau,\lambda}=\text{Proj}_r \Big\{ \operatorname*{argmin}_{S_{\tau-1}\subseteq \supp^+(\theta), |\supp^+(\theta)|\le \tau s} \Big \{\sum_{t' \in E_{\tau}}\big| Y_{t'}-\langle X_{t'},\theta \rangle  \big|^2 + \lambda \|\theta\|^2 \Big\} \Big\}, \label{eq:bss}
\end{align}
where $\text{Proj}_r\{\cdot\}$ denotes the projection on a centered $\ell_2$-ball with radius $r$, the $\ell_2$-norm of $\theta^*$. Note that in comparison with the standard BSS estimator, we project the minimizer for regularization. The boundedness also simplifies our later theoretical analysis. We also add the inclusion restriction $ S_{\tau} \supseteq S_{\tau-1}$ for some technical reasons, which does not lead to any fundamental difference. As a result, we need to consider  the sparsity $\tau s$ instead of  $s$. It boosts the probability of recovering the true support.

A pseudo-code description of the SLUCB algorithm is presented in Algorithm \ref{alg1}.
We remark that although computing the BSS estimator is computationally expensive,
many methods exist that can solve this problem very efficiently in practice.
See, for example, \citep{pilanci2015sparse,bertsimas2016best} for more details.
We also conduct extensive numerical experiments to demonstrate that our proposed algorithm is practical  in Section \ref{experiment}.

\begin{singlespace}
\begin{algorithm}[t]  \label{alg1}
    \footnotesize{\KwInput{sequentially arriving covariates $\{X_{t,i}\}_{t\in [T],i\in \cA_t}$, length of pure-exploration-stage $n_0$, confidence level $\alpha$, estimated upper bound of reward $\beta $, sparsity level $s$, ridge regression penalty $\lambda$.}
    \KwOutput{action sequence $\{i_t\}_{t\in [T]}$.}

    partition $[T]$ into consecutive epochs $E_1,E_2,\cdots, E_{\tilde \tau}$ such that $|E_\tau|=\max\{2^\tau,n_0\}$\;
    initialization: $\hat\theta_{0,\lambda} = 0$, $S=\emptyset$\;
    \For{$\tau=1,2,\cdots,\tilde{\tau}$}{
        \For{the first $n_0$ time periods $t \in E_\tau$}
        {
        select $i_t\in \cA_t$ uniformly at random\;
        set $\hat\theta^t_{\tau,\lambda}= \hat\theta_{\tau-1,\lambda}$\;
        }
        \For{the remaining time periods $t \in E_\tau$}
        {
            calculate matrix $$\hat{\mat\Gamma}_{\tau-1,\lambda}^{t-1}=\lambda \textbf{I}_{|S|} +\sum_{t'\in E_{\tau}^{t-1}}[X_{t'}]_{S}[X_{t'}]^{\top}_{S}\text{;}  $$ \\
            calculate the upper confidence band of reward for each arm $$\bar{r}(X_{t,i})=\min\Big \{\beta, \big\langle X_{t,i}, \hat\theta^{t-1}_{\tau,\lambda} \big\rangle + \alpha \cdot   \Big(\sigma \sqrt{{\log(kTd/\delta)} \big /{|E_{\tau-1}|}}  + \big \|[X_{t,i}]_{S} \big \|_{[\hat{\mat \Gamma}_{\tau-1,\lambda}^{t-1}]^{-1}}\Big) \Big\}\text{;} $$ \\
            select arm with the largest upper confidence band $$i_t=\operatorname*{argmin}_{i \in \cA_t} \big\{ \bar{r}(X_{t,i}) \big\} \text{;}$$\\
            observe reward $$Y_t = \langle X_{t,i_t}, \theta^*\rangle + \varepsilon_t\text{;}$$\\
            update the ridge  estimator: $$\hat\theta^t_{\tau,\lambda} =\operatorname*{argmin}_{\supp^+(\theta)=S}\Big \{ \sum_{t'\in E_\tau^t}\big| Y_{t'}-\langle X_{t'},\theta \rangle \big|^2 +\lambda \|\theta\|^2 \Big\}\text{;}$$\
        }
        update the best subset selection estimator
        $$\hat \theta_{\tau,\lambda}=\text{Proj}_r \Big\{ \operatorname*{argmin}_{S\subseteq \supp^+(\theta), |\supp^+(\theta)|\le \tau s} \Big \{\sum_{t' \in E_{\tau}}\big| Y_{t'}-\langle X_{t'},\theta \rangle  \big|^2 + \lambda \|\theta\|^2 \Big\} \Big\};$$

    update $S =\supp^+(\hat\theta_{\tau,\lambda}) \text{;}$
    }
    \caption{\textsc{Sparse-LinUCB} Algorithm}
    \label{alg:main}}

\end{algorithm}
\end{singlespace}

\subsection{\textsc{Sparse-SupLinUCB} Algorithm}  \label{algorithm2}
\begin{singlespace}
{   \begin{algorithm}[t]
        \footnotesize{\KwInput{epoch index $\tau$, sequential arriving covariates $\{X_{t,i}\}_{t\in E_{\tau}, i\in \cA_t}$,  length of pure-exploration-stage $n_0$, confidence level $\gamma$, estimated upper bound of reward $\beta $, support recovered in previous epoch $S_{\tau-1}$, sparsity level $s$, ridge regression penalty $\lambda$.}
            \KwOutput{action sequence $\{i_t\}_{t\in E_\tau}$.}
            \For{the first $n_0$ time periods $t \in E_\tau$}
                {
                select $i_t\in \cA_t$ uniformly at random\;
                }
        set $\tilde \zeta =\lceil \log(\beta T)\rceil$, $S=S_{\tau-1}$, and initialize sets $\{\Psi_{\tau}^{t,1}, \cdots,\Psi_{\tau}^{t,\tilde \zeta} \}$ by evenly partitioning the $n_0$ time periods of pure-exploration-stage\;
        \For{the remaining time periods $t$ in $E_\tau$}{
            initialize $\zeta=1, {\mathcal N}_{\tau}^{t-1,\zeta}= \cA_t$\;
            \Repeat{an arm $i_t\in \cA_t$ is selected}{
                compute restricted ridge  estimator
                $$
                \hat\theta_{\tau,\lambda}^{t-1,\zeta} = \operatorname*{argmin}_{\supp^+(\theta)=S}\Big\{ \sum_{t'\in \Psi_{\tau}^{t-1,\zeta}}\big|Y_{t'}-\langle X_{t'}, \theta\rangle\big|^2 +\lambda \|\theta\|^2 \Big\}\text{;}
                $$
                \

                compute matrix  $$ \hat{\mat{\Gamma}}^{t-1,\zeta}_{\tau-1,\lambda}=\lambda \textbf{I}_{|S|}+\sum_{t' \in \Psi^{t-1,\zeta}_{\tau}}[X_{t'}]_S[X_{t'}]_S^\top\text{;}$$\

                compute confidence band for each $i \in {\mathcal N}_{\tau}^{t-1,\zeta} $,   $$\omega_{\tau,\lambda}^{t-1,\zeta}(i) = \gamma \cdot \Big(  \sqrt{s/|E_{\tau-1}|}+ \big\|[X_{t,i}]_S\big\|_{[\hat{\mat\Gamma}^{t-1,\zeta}_{\tau-1,\lambda}]^{-1}} \Big)\text{;}$$ \

                \If{$\omega_{\tau,\lambda}^{t-1,\zeta}(i)\leq 1/\sqrt{T},\ \forall i\in {\mathcal N}_{\tau}^{t-1,\zeta}$}{
                     select $$i_t=\operatorname*{argmin}_{i \in {\mathcal N}_{\tau}^{t-1,\zeta} }\Big\{\beta, \big \langle X_{t,i},\hat\theta_\tau^{t-1,\zeta}\big\rangle +\omega_{\tau,\lambda}^{t-1,\zeta}(i)\Big\}\text{;}$$
                as the arm to play and update $\Psi_{\tau}^{t,\zeta}\gets\Psi_{\tau}^{t-1,\zeta}$ for all $\zeta \in [\tilde \zeta]$\;
                }
                \ElseIf{$\omega_{\tau,\lambda}^{t-1,\zeta}(i)\leq 2^{-\zeta}\beta, \forall i\in {\mathcal N}_{\tau}^{t-1,\zeta}$}{
                eliminate suboptimal arms as
                $${\mathcal N}_\tau^{t-1,\zeta+1} =\Big \{i\in {\mathcal N}_\tau^{t-1,\zeta}: \big \langle X_{t,i},\hat\theta_{\tau,\lambda}^{t-1,\zeta} \big\rangle
                     \geq \max_{j\in {\mathcal N}_{\tau}^{t-1,\zeta}} \big\langle X_{t,j}, \hat\theta_{\tau,\lambda}^{t-1,\zeta}\big \rangle - 2^{1-\zeta}\beta\Big \}\text{;}$$
                move to the next group and update $\zeta \gets \zeta + 1\text{;}$
                }
                \Else{
                    select $i_t\in {\mathcal N}_\tau^{t-1,\zeta}$ such that $\omega_{\tau,\lambda}^{t-1,\zeta}(i) > 2^{-\zeta}\beta$ as the arm to play\;
                    update $\Psi_\tau^{t,\zeta}\gets\Psi_{\tau}^{t-1,\zeta}\cup\{t\}$ and $\Psi_\tau^{t,\zeta'}\gets\Psi_\tau^{t-1,\zeta'}$ for all $\zeta'\neq\zeta$\;
                }
            }
        }}
        \caption{\textsc{Sparse-SupLinUCB} Subroutine}
        \label{alg:suplinucb}
        \end{algorithm} }
\end{singlespace}

Although the SLUCB algorithm is intuitive and easy to implement, we are unable to prove the optimal upper bound for its regret due to some technical reasons. Specifically, as discussed in the next section, we can only establish an $\tilde{\mathcal{O}}(s\sqrt{T})$ upper bound for the regret of the SLUCB algorithm, while the optimal regret should be $\tilde{\mathcal{O}}(\sqrt{sT})$. Here we omit all the constants and logarithmic factors and only consider the dependency on  horizon length and dimension parameters. The obstacle leading to sub-optimality is the dependency of covariates on random noises. Recall that at each period $t \in E_\tau$, the SLUCB algorithm constructs the ridge estimator $\hat{\theta}^{t-1}_{\tau,\lambda}$ using all historical data, where designs $ \{X_{t'}\}_{t'\in E^{t-1}_{\tau}}$ are correlated with noises $ \{\varepsilon_{t'}\}_{t'\in E^{t-1}_{\tau}}$ due to the UCB-type policy. Such a complicated correlation impedes us  from establishing tight confidence bands for  predicted rewards, which  results in suboptimal regret.

To close the aforementioned gap and achieve the optimality, we modify the seminal \textsc{SupLinUCB} algorithm \citep{auer2002using,chu2011contextual}, which is originally proposed to attain the optimal regret for classic stochastic linear bandit problem, as a subroutine in our framework. Then we propose the \textsc{Sparse-SupLinUCB} (SSUCB) algorithm. Specifically, we replace the ridge estimator and UCB-type policy with a modified \textsc{SupLinUCB} algorithm. The basic idea of the \textsc{SupLinUCB} algorithm is to separate the dependent designs into several groups such that within each group, the designs and noises are independent of each other. Then the ridge estimators of the true parameters are calculated based on group individually. Thanks to the desired independency, now we can derive  tighter confidence bands by applying sharper concentration inequality, which gives rise to the optimal regret in the final.

In the next, we present the details of \textsc{SupLinUCB} algorithm and show how to embed it in our framework. For each period $t\in E_\tau $ that belongs to the UCB-stage, the \textsc{SupLinUCB} algorithm partitions the historical periods $E^{t-1}_{\tau}$ into $\tilde \zeta $ disjoint groups
$$
E^{t-1}_{\tau}=\{ \Psi^{t-1,1}_\tau,\cdots, \Psi^{t-1,\tilde \zeta}_\tau \},
$$
where $\tilde \zeta= \lceil \log (\beta T)\rceil $. These groups are initialized by evenly partitioning periods from the pure-exploration-stage and are updated sequentially following the rule introduced as follows. For each period $t$, we screen the groups $ \{ \Psi^{t-1, \zeta}_\tau\} $ one by one (in an ascending order of index $\zeta $) to determine the action to take or eliminate some obvious sub-optimal actions.

Suppose that we are at the $\zeta$-th group now. Let ${\cN}^{t-1,\zeta}_{\tau}$ be the set of candidate actions that are still kept by the $\zeta$-th step, which is initialized as the whole action space $\cA_t$ when $\zeta=1$. We first calculate the ridge  estimator $\hat \theta^{t-1,\zeta}_{\tau,\lambda}$ restricted on the generalized support $S_{\tau-1}$, using data from group $ \Psi^{t-1,\zeta}_\tau$. Then for each action $ i \in  {\cN}^{t-1,\zeta}_{\tau} $,  we calculate $\omega_{\tau,\lambda}^{t-1,\zeta}(i) $, the width of confidence band of the potential reward. { Specifically, we have
\begin{align*}
    \hat \theta^{t-1,\zeta}_{\tau,\lambda}&=\operatorname*{argmin}_{\supp^+(\theta)=S_{\tau-1}}  \Big \{ \sum_{t' \in \Psi^{t-1,\zeta}_\tau}  \big| Y_{t'}-\langle X_{t',i_{t'}},\theta \rangle  \big|^2  + \lambda \| \theta\|^2 \Big\},\\
    \omega^{t-1,\zeta}_{\tau,\lambda}(i) & =  \gamma  \cdot  \Big(\sqrt{s/|E_{\tau-1}|}+\sqrt{[X_{t,i}]^\top_{S_{\tau-1}}[\hat{\mat \Gamma}_{\tau-1,\lambda}^{t-1,\zeta}]^{-1} [X_{t,i}]_{S_{\tau-1}}} \Big),
 \end{align*}
where the tuning parameter
\begin{align*}
     \gamma =\sigma(\lambda^{1/2}+\nu+\sigma)\log^{3/2}(kTd/\delta),
\end{align*}
corresponds to the confidence level. Our next step depends on the values of $\omega_{\tau,\lambda}^{t-1,\zeta}(i) $. If
$$\omega_{\tau,\lambda}^{t-1,\zeta}(i)\le 1/\sqrt{T},\ \forall i \in  {\cN}^{t-1,\zeta}_{\tau},$$
which means that the widths of confidence bands are uniformly small, we pick the action associated with the largest upper confidence band
$$\min\Big\{ \beta, \langle X_{t,i},\hat{\theta}_{\tau,\lambda}^{t-1,\zeta}\rangle + \omega_{\tau,\lambda}^{t-1,\zeta}(i) \Big\}.$$
where
$
\beta= r\sigma\cdot \sqrt{s\log(kTd/\delta)}
$
is an upper estimate of potential rewards.
In this case, we discard the data point $(X_{t},Y_t) $ and do not update any group, i.e., setting $\Psi^{t,\zeta}_{\tau}=\Psi^{t-1,\zeta}_{\tau}$, for all $ \zeta \in [\tilde \zeta ]. $

Otherwise, if there exists some $i \in {\cN}^{t-1,\zeta}_{\tau} $ such that $\omega_{\tau,\lambda}^{t-1,\zeta}(i)\ge 2^{-\zeta}\beta, $ which means that the width of confidence band is not sufficiently small, then we pick such an action $i$ to play for exploration. In this case, we add the period $t$ into the $\zeta$-th group while keeping all other groups unchanged, i.e.,
\begin{align*}
\Psi^{t,\zeta}_{\tau}=\Psi^{t-1,\zeta}_{\tau} \cup \{t\},\ \Psi^{t,\eta}_{\tau}=\Psi^{t-1,\eta}_{\tau},\ \text{if}\ \eta\neq \zeta.
\end{align*}

Finally, if neither one of the above scenarios happens, which implies that for all $i \in {\cN}^{t-1,\zeta}_{\tau} $ $\omega_{\tau,\lambda}^{t-1,\zeta}(i)\le 2^{-\zeta}\beta$, then we do not take any action for now. Instead, we eliminate some obvious sub-optimal actions and move to the next group $\Psi^{t-1,\zeta+1}_{\tau}$. Particularly, we update the set of candidate arms as
$${\mathcal N}_\tau^{t-1,\zeta+1} =\Big\{i\in{\mathcal N}_\tau^{t-1,\zeta}: \big \langle X_{t,i},\hat\theta_{\tau,\lambda}^{t-1,\zeta}\big \rangle
             \geq \max_{j\in {\mathcal N}_{\tau}^{t-1,\zeta}}\big \langle X_{t,j}, \hat\theta_{\tau,\lambda}^{t-1,\zeta}\big \rangle - 2^{1-\zeta}\beta\Big\}.
$$
We repeat the above procedure until an arm is selected. Since the number of groups is $\tilde \zeta=\lceil \log (\beta T) \rceil$ and $2^{-\tilde{\zeta}}\beta =1/T \le 1/\sqrt{T}$, the \textsc{SupLinUCB} algorithm stops eventually.  Finally, by replacing the direct ridge regression and UCB-type policy with the \textsc{SupLinUCB} algorithm above, we obtain the SSUCB algorithm.
The pseudo-code  is presented in Algorithm~\ref{alg:suplinucb}.


\section{Theoretical Results} \label{theory}         
In this section, we present the theoretical results of the SLUCB and SSUCB algorithms. We use the regret to evaluate the performance of our algorithms, which is a standard performance measure in literature. We denote by $\{i_t\}_{t\in [T]}$ the actions sequence generated by an algorithm. Then given the true parameter $ \theta^*$ and covariates $\{X_{t,i}\}_{t\in [T], i\in \cA_t } $, the regret of the sequence $\{i_t\}_{t\in [T]}$
 is defined as 
\begin{align} \label{reg_definition}
R_T\big(\{i_t\},\theta^*\big)=\sum_{t=1}^T \big \langle X_{t,i_t^*},\theta^* \big \rangle- \big\langle X_{i,i_t},\theta^* \big\rangle,
\end{align}
where $i_t^*=\argmax_{i \in \cA_t}\langle X_{t,i},\theta^* \rangle$ denotes the optimal action under the true parameter.  The regret measures the discrepancy in accumulated  reward between real actions and oracles where the true parameter is known to a decision-maker.  In what follows, in Section \ref{sec:assumptions}, we first introduce some technical   assumptions to facilitate our discussions. Then we study the regrets of the SLUCB  and SSUCB algorithms  in Section \ref{reg_bound1} and Section \ref{sec:reg2}, respectively.

\subsection{Assumptions}\label{sec:assumptions}

We present  the assumptions  in our theoretical analysis and discuss their relevance and implications. We consider  finite action spaces $\{\mathcal A_t\}_{t=1}^T$ and assume that there exists some constant $k$ such that $$|\mathcal A_t|=k,\ \forall t \in[T].$$ We also assume that for each period $t$, the covariates $\{ X_{t,i}\}_{i \in \cA_t}$ are sampled independently from an unknown  distribution $P_0$. We further impose the following assumptions on distribution $P_0$. 

\begin{assumption} \label{ass1}
Let random vector $X\in\RR^d$ follow the distribution $P_0$. Then $X$ satisfies:
\begin{enumerate}
\item[(A1)] \emph{(Sub-Gaussianity)}: Random vector $X \in \RR^d$ is centered and sub-Gaussian with parameter $\sigma^2\geq 1$, which means that $\mathbb E[X]=0$ and $$\mathbb E\big[\exp\{\sigma a^\top X\}\big]\leq \exp\big\{\sigma^2 \|a\|^2/2 \big\},\ \forall a \in \mathbb{R}^d;$$
\item[(A2)] \emph{(Non-degeneracy)}: There exists a constant $\rho\in(0,\sigma]$ such that $$\mathbb E\big[[X]_j^2\big]\geq \rho, \ \forall j\in[d];$$
\item[(A3)] \emph{(Independent coordinates)}: The $d$ coordinates of $X$ are independent of each other.
\end{enumerate}
\end{assumption}

We briefly discuss Assumption \ref{ass1}. First of all, (A1) is a standard assumption in literature, with sub-Gaussianity covering a broad family 
of distributions like Gaussian, Rademacher, and bounded distributions. Assumption (A2) is a non-degeneracy assumption which, together with (A3), implies that the smallest eigenvalue of the population covariance matrix  $\mathbb E[XX^\top]$ is lower bounded by some constant $\rho>0$.
Similar assumptions are also adopted in high-dimensional statistics literature in order to prove the ``restricted eigenvalue'' conditions of sample covariance matrices \citep{raskutti2010restricted},
which are essential in the analysis of penalized least square methods \citep{wainwright2009sharp,bickel2009simultaneous}.
However, we emphasize that in this paper the covariates indexed by the {selected} actions $\{i_t\}$ do {not} satisfy the 
restricted eigenvalue condition in general, and therefore, we need novel and non-standard analyses of high-dimensional M-estimators. For Assumption (A3), at a higher level, independence among coordinates enables relatively independent explorations in different dimensions, which is similar to the key idea of the SETC method \citep{lattimore2015linear}. Technically, (A3) is used to establish the key independence of sample covariance matrices restricted within and outside the recovered support. Due to such an independence, the rewards in the unexplored directions at each period are independent as well, which can be  estimated~efficiently. 

We also impose the following assumptions on the unknown $d$-dimensional true  parameter $\theta^*$.
\begin{assumption} \label{ass2}
\begin{enumerate}
\item[(B1)] \emph{(Sparsity)}: The true parameter $\theta^*$ is sparse. In other words, there exists an $s\ll d$ such that $|\supp(\theta^*)|=s $.  
\item[(B2)] \emph{(Boundedness)}: There exists a constant $r\geq 1$ such that $\|\theta^*\|\leq r$.
\end{enumerate}
\end{assumption}
Note that in Assumption \ref{ass2}, (B1) is the key sparsity assumption, which assumes that only $s\ll d$ components of the  true parameter $\theta^*$ are non-zero. 
Assumption (B2) is a boundedness condition on the $\ell_2$-norm of $\theta^*$. This assumption is
 often imposed, either explicitly or implicitly, in contextual bandit problems for deriving an upper bound for rewards {\citep{dani2008stochastic,chu2011contextual}}.

Finally, we impose the sub-Gaussian assumption on noises sequence $\{\varepsilon_t\}_{t=1}^T$, which is a standard assumption adopted in most statistics and bandit literatures.
\begin{assumption} \label{ass3}
\begin{enumerate}
\item[(C1)] \emph{(Sub-gaussian noise)}: The random noises $\{\varepsilon_t\}_{t=1}^T$ are independent, centered, and sub-Gaussian with parameter $\nu^2\geq 1$.
\end{enumerate}
\end{assumption}

\subsection{Regret Analysis of \text{Sparse-LinUCB}}  \label{reg_bound1}

In this section, we analyze the performance of the SLUCB algorithm. As discussed earlier,  we measure the performance via the  regret defined in~\eqref{reg_definition}.
We show that with a tailored choice of pure-exploration-stage length $n_0 $, tuning parameters $\alpha$, and $\beta$, the accumulated regret of the SLUCB algorithm is upper bounded by $\tilde{\cO}(s\sqrt{T})$ (up to logarithmic factors) with high probability. Formally, we have the following theorem. 
\begin{theorem} \label{theorem1} 
 For any $\delta\in(0,1)$, let
\begin{align*} 
n_0 &\asymp \rho^{-1} \cdot (\nu \sigma \tilde{\tau})^2\cdot s^{3} \cdot\log^4\big(kTd/(\delta \lambda)\big),\\
\alpha &= \sigma\nu\sqrt{s\tilde \tau}\cdot\log(\sigma\rho^{-1}dkT/\delta), \
\beta  = \sigma r\sqrt{s\log(kTd/\delta)}.
\end{align*}
Under Assumptions \ref{ass1}-\ref{ass3}, the regret of the actions sequence $\{i_t\}^T_{t=1} $ generated by the \textsc{Sparse-LinUCB} algorithm is upper bounded by  
\begin{align*}
    R_T\big(\{i_t\},\theta^*\big) \lesssim \Big( n_0\beta+ \alpha  \sqrt{sT\log(T)\cdot\big(\lambda+ \log(\sigma dTk/\delta) \big)} + \alpha \sigma \sqrt{T\log(kTd/\delta)}   \Big)\cdot \log(T),
\end{align*} \label{thm:main-upper}
with probability at least $1-\delta$.
\end{theorem}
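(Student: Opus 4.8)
The plan is to decompose the total regret into three pieces matching the three terms in the bound and to handle each piece using a high-probability event on which the best-subset estimator behaves well. First I would define the ``good event'' $\mathcal{E}$ under which (i) every pure-exploration stage produces a sample covariance matrix with smallest $\tau s$-sparse eigenvalue bounded below by a constant multiple of $\rho$, so that the BSS optimization in \eqref{eq:bss} recovers a generalized support $S_\tau$ with $\supp(\theta^*)\subseteq S_\tau$ for every epoch $\tau$; and (ii) the resulting BSS and ridge estimators satisfy $\ell_2$-error bounds of the usual form $\|\hat\theta_{\tau,\lambda}-\theta^*\|\lesssim \alpha\sqrt{s/|E_{\tau-1}|}$ plus a bias contribution from $\lambda$. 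Establishing $\mathcal{E}$ is where Assumptions (A1)--(A3) and (C1) enter: (A1),(A2),(A3) give the sparse-eigenvalue lower bound for i.i.d. covariates over the $n_0$ exploration rounds via a union bound over the $\binom{d}{\tau s}$ candidate supports (this is precisely why $n_0$ must scale like $s^3\log^4(\cdot)$ and why we enlarge the allowed sparsity to $\tau s$), and the ``doubling trick'' is what lets us treat the estimator from epoch $\tau-1$ as essentially independent of the data in epoch $\tau$ when bounding the UCB-stage regret.

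On $\mathcal{E}$, the $n_0\beta$ term is immediate: the pure-exploration rounds number at most $n_0\tilde\tau \asymp n_0\log T$, each contributing at most $2\beta$ to the regret since $|\langle X_{t,i},\theta^*\rangle|\le \|X_{t,i}\|_{\infty}\cdot\ldots$ is controlled by $\beta$ on $\mathcal{E}$ (this uses (B2) and the sub-Gaussian tail of $X$). For the UCB-stage rounds, I would run the standard optimism argument: on $\mathcal{E}$ the true reward $\langle X_{t,i_t^*},\theta^*\rangle$ is at most the upper confidence band $\bar r(X_{t,i_t^*})$, which by the selection rule is at most $\bar r(X_{t,i_t})$, so the per-round regret is at most $\bar r(X_{t,i_t})-\langle X_{t,i_t},\theta^*\rangle$, and this is bounded by twice the confidence width $\alpha\bigl(\sigma\sqrt{\log(kTd/\delta)/|E_{\tau-1}|}+\|[X_{t,i_t}]_{S_{\tau-1}}\|_{[\hat\Gamma^{t-1}_{\tau-1,\lambda}]^{-1}}\bigr)$. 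Summing the first part over all rounds gives the $\alpha\sigma\sqrt{T\log(kTd/\delta)}$ term (Cauchy--Schwarz plus $\sum_\tau |E_\tau|/|E_{\tau-1}|\lesssim T$ by doubling). Summing the second part within an epoch is exactly the elliptical potential lemma \citep{abbasi2011improved}: $\sum_{t\in E_\tau}\|[X_t]_{S_{\tau-1}}\|^2_{[\hat\Gamma^{t-1}_{\tau-1,\lambda}]^{-1}}\lesssim |S_{\tau-1}|\log(1+|E_\tau|/\lambda)\lesssim \tau s\log(\ldots)$; another Cauchy--Schwarz over the $|E_\tau|$ rounds and a sum over the $\tilde\tau\asymp\log T$ epochs yields the $\alpha\sqrt{sT\log T\cdot(\lambda+\log(\sigma dTk/\delta))}$ term, with the extra $\log T$ factor coming from $\tilde\tau$ and the ``$+1$'' enlargement $|S_\tau|\le \tau s$.

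The main obstacle is controlling the confidence width on the UCB stage despite the dependence between the selected covariates $\{X_{t'}\}_{t'\in E_\tau^{t-1}}$ and the noises $\{\varepsilon_{t'}\}$. This is the crux that Assumption (A3) is designed to circumvent: conditioning on the recovered support $S_{\tau-1}$ (which by (A3) is determined by the coordinates outside $S_{\tau-1}$ being ``pure noise'' directions, independent of the in-support coordinates), the restricted design $[X_{t'}]_{S_{\tau-1}}$ becomes the relevant object, and the self-normalized martingale bound of \citep{abbasi2011improved} can be applied to $\sum_{t'}[X_{t'}]_{S_{\tau-1}}\varepsilon_{t'}$ since the filtration is adapted. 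The delicate point is that the policy itself uses $S_{\tau-1}$, so one must argue the event ``$\supp(\theta^*)\subseteq S_{\tau-1}$ and $S_{\tau-1}$ was chosen using only epoch-$(\tau-1)$ data'' holds on $\mathcal{E}$, and then the in-epoch estimation error inherits the extra $\sqrt{s\tilde\tau}$ factor in $\alpha$ that absorbs the union bound over which support could have been recovered. Once this decoupling is in place, the remaining steps are the routine concentration and summation arguments sketched above, and the three terms assemble with an overall $\log T$ factor from the epoch count.
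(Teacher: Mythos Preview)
Your high-level outline---exploration regret bounded by $n_0\beta\log T$, optimism reducing per-round UCB-stage regret to twice the confidence width, the width split into a bias piece and an elliptical-potential piece---matches the paper's Step~4 and Lemmas~\ref{lem:ols}--\ref{lem:epl}. The summation over epochs is also essentially the same.

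However, there is a genuine gap in your ``good event'' $\mathcal{E}$: you assume that BSS achieves \emph{exact} support recovery, i.e.\ $\supp(\theta^*)\subseteq S_\tau$ for every $\tau$. Under Assumptions~\ref{ass1}--\ref{ass3} this is simply not provable, because there is no minimum-signal (``beta-min'') condition on the nonzero coordinates of $\theta^*$. If some $[\theta^*]_j$ is tiny, no amount of pure exploration will force $j\in S_\tau$ with high probability, so your event $\mathcal{E}$ may have probability far from $1$. The paper never claims support recovery; instead, Lemma~\ref{bss_error} shows only that the \emph{missed mass} decays,
\[
\big\|[\theta^*]_{S_\tau^c}\big\| \;\lesssim\; \sqrt{\big(\nu^2 s\log^2(\cdot)+\lambda r^2\big)\big/|E_\tau|},
\]
and then, in Lemma~\ref{lem:ols-empirical-process}, absorbs the missed coordinates into the noise by writing $Y_{t'}=\langle [X_{t'}]_{S_{\tau-1}},[\theta^*]_{S_{\tau-1}}\rangle + \tilde\varepsilon_{t'}$ with $\tilde\varepsilon_{t'}=\varepsilon_{t'}+\langle [X_{t'}]_{S_{\tau-1}^c},[\theta^*]_{S_{\tau-1}^c}\rangle$. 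Assumption~(A3) is what makes $\tilde\varepsilon_{t'}$ conditionally sub-Gaussian with parameter $\nu^2+\sigma^2\|[\theta^*]_{S_{\tau-1}^c}\|^2$ given the in-support design, because the policy in epoch $\tau$ looks only at coordinates in $S_{\tau-1}$. This inflated noise is exactly why the confidence band carries the extra additive term $\sigma\sqrt{\log(kTd/\delta)/|E_{\tau-1}|}$, which you treated as a bias ``from $\lambda$'' rather than from incomplete support recovery.

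A second, related point: the role of $n_0\asymp s^3\log^4(\cdot)$ is not to union-bound over $\binom{d}{\tau s}$ supports for a sparse-eigenvalue guarantee. In the paper it is chosen so that the cross-block $\|[\hat{\mat\Gamma}_{\tau,\lambda}]_{SS^c}\|_\infty$ (Lemma~\ref{inf_norm_upper_bound}) and the $\ell_1$-error $\|\hat\theta_{\tau,\lambda}-\theta^*\|_1$ (Lemma~\ref{l1_norm_upper_bound}) combine to make the off-diagonal contribution in the quadratic inequality for $\|[\theta^*]_{S_\tau^c}\|$ small; this is how one gets Lemma~\ref{bss_error} \emph{without} assuming the support was captured. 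Once you replace ``$\supp(\theta^*)\subseteq S_{\tau-1}$'' by the quantitative bound on $\|[\theta^*]_{S_{\tau-1}^c}\|$ and propagate it through the ridge step as additional noise, the rest of your argument (optimism, elliptical potential, epoch summation) goes through as you wrote it.
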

Note that in Theorem \ref{theorem1}, if we omit all the constants and logarithmic factors, the dominating part in the accumulated regret is $$\tilde{\cO}(\alpha\cdot \sqrt{sT})=\tilde{\cO}(s\sqrt{T}).$$  Theorem \ref{theorem1} builds on a nontrivial combination of the UCB-type algorithm and the best subset selection method. To save space, we put the complete proof of Theorem \ref{theorem1} in Appendix \ref{appendix1}.

\subsection{Regret Analysis of \text{Sparse-SupLinUCB}}  \label{sec:reg2}
In comparison with the SLUCB algorithm, the SSUCB algorithm splits the historical data into several groups dynamically. In each period, we sequentially update the ridge estimator and corresponding confidence bands using data from a single group instead of the whole data. The motivation of only using a single group of data is to achieve the independence between the design matrix and random noises within each group, which leads to tighter confidence bands by applying a sharper concentration inequality. The tighter upper bounds of the predicted rewards lead to an improved regret. In particular, we have the following theorem.

\begin{theorem} \label{theorem2}
For any $\delta\in(0,1)$, let 
\begin{align*} 
    n_0 &\asymp \rho^{-1} \cdot (\nu \sigma \tilde{\tau})^2\cdot s^{3} \cdot\log^4\big(kTd/(\delta \lambda)\big),\\
\beta & = \sigma r \sqrt{s\log(kTd/\delta)},\ \gamma =\sigma(\lambda^{1/2}+\nu+\sigma)\log^{3/2}(kTd/\delta).
\end{align*}
Then under Assumptions \ref{ass1}-\ref{ass3}, the regret of actions sequence $\{i_t\}^T_{t=1} $ generated by the \textsc{Sparse-SupLinUCB} algorithm \vspace{4pt}
 is upper bounded by 
 \begin{align*}
    R_T\big(\{ i_t\},\theta^*\big) &\lesssim  \sqrt{T}+ \Big(n_0\beta+\gamma\sqrt{sT\cdot \log(T)\big(\lambda+ \log(\sigma dTk/\delta) \big)} \Big)\cdot  \log(T)\cdot \log(\beta T),
    \end{align*} \label{thm:suplinucb}
with probability at least $1-\delta$.
\end{theorem}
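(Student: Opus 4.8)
The plan is to condition on one high-probability ``good event'' $\mathcal E$ and then decompose the regret epoch by epoch, reducing each epoch to a low-dimensional \textsc{SupLinUCB} instance. The event $\mathcal E$, which holds with probability at least $1-\delta$, carries: (i) uniform covariate tails $\max_{t\le T,\,i\in\cA_t}\|X_{t,i}\|_\infty\lesssim\sigma\sqrt{\log(kTd/\delta)}$, so that $|\langle X_{t,i},\theta^*\rangle|\lesssim\beta$ for every arm; (ii) for each epoch $\tau$, a sparse-eigenvalue bound $\lambda_{\min}\big(\sum_{\text{pure-expl.}}[X_{t'}]_{S'}[X_{t'}]_{S'}^\top\big)\gtrsim\rho\, n_0$ holding uniformly over $|S'|\le\tilde\tau s$ --- this is where the choice of $n_0$ enters, via a matrix-Chernoff bound plus a union bound over sparse supports; (iii) for each epoch $\tau$, the estimation bound $\|\hat\theta_{\tau,\lambda}-\theta^*\|\lesssim\gamma\sigma^{-1}\sqrt{s/|E_\tau|}$, whence $\|[\theta^*]_{S_\tau^c}\|$ obeys the same bound since $\hat\theta_{\tau,\lambda}$ vanishes off $S_\tau$ and $|S_\tau|\le\tau s$; and (iv) the \textsc{SupLinUCB} concentration statement $|\langle X_{t,i},\hat\theta^{t-1,\zeta}_{\tau,\lambda}-\theta^*\rangle|\le\omega^{t-1,\zeta}_{\tau,\lambda}(i)$ for every epoch $\tau$, stage $\zeta\le\tilde\zeta$, period $t\in E_\tau$, and candidate arm $i$. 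Item (iii) is exactly the support-recovery/estimation guarantee already proved for Theorem~\ref{thm:main-upper}: bound (ii) controls the denominator; Assumption~(A3) forces the covariates restricted to coordinates outside $S_{\tau-1}$ to behave like $|E_\tau|$ fresh i.i.d.\ draws (the UCB rule reads only the $S_{\tau-1}$-coordinates), raising the effective sample size for recovering the residual support; and a self-normalized martingale bound handles the adaptive cross-term $\sum_{t'\in E_\tau}[X_{t'}]\varepsilon_{t'}$ against the optimality of the BSS objective compared with the oracle supported on $S_{\tau-1}\cup\supp(\theta^*)$ (of cardinality $\le\tau s$). For (iv), the $\gamma\sqrt{s/|E_{\tau-1}|}$ summand of $\omega$ absorbs the truncation bias $|\langle[X_{t,i}]_{S_{\tau-1}^c},[\theta^*]_{S_{\tau-1}^c}\rangle|$ controlled by (iii), while the $\gamma\|[X_{t,i}]_{S_{\tau-1}}\|_{[\hat{\mat\Gamma}^{t-1,\zeta}_{\tau-1,\lambda}]^{-1}}$ summand is the usual self-normalized ridge confidence width for the $|S_{\tau-1}|$-dimensional sub-problem with parameter $[\theta^*]_{S_{\tau-1}}$; this width is valid because, within a fixed \textsc{SupLinUCB} group, the design matrix is independent of that group's noise --- the ``clean group'' property of \citep{auer2002using,chu2011contextual} --- so a self-normalized inequality applies after only a $\log$-size discretization, and the $\lambda^{1/2},\nu,\sigma,\log^{3/2}(kTd/\delta)$ factors bundled into $\gamma$ account for the ridge bias, the noise, the covariate tails, and the union over the $\tilde\tau$ epochs and $\tilde\zeta$ stages.

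On $\mathcal E$ I would then bound the regret within a fixed epoch $\tau$. Its pure-exploration stage has at most $n_0$ periods, each of instantaneous regret $\lesssim\beta$, contributing $\lesssim n_0\beta$. For the remaining (UCB) periods, the standard \textsc{SupLinUCB} stage-induction applies once (iv) is available: the optimal arm $i_t^*$ is never eliminated, every surviving arm has true reward within $O(2^{-\zeta}\beta)$ of its stage-$\zeta$ estimate, so a period assigned to stage $\zeta$ (the one at which the ``\textbf{else}'' branch fired, playing a wide arm and joining $\Psi^{\cdot,\zeta}_\tau$) incurs $r_t\lesssim 2^{-\zeta}\beta\lesssim\omega^{t-1,\zeta}_{\tau,\lambda}(i_t)$, while a period exiting through the ``all widths $\le 1/\sqrt T$'' branch incurs $r_t\lesssim 1/\sqrt T+\gamma\sqrt{s/|E_{\tau-1}|}$.

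Summing over epochs completes the proof. The pure-exploration contribution is $\lesssim n_0\beta\,\tilde\tau$. For the UCB contribution, from $\omega^{t-1,\zeta}_{\tau,\lambda}(i_t)>2^{-\zeta}\beta$ either $\gamma\sqrt{s/|E_{\tau-1}|}>2^{-\zeta-1}\beta$ or $\gamma\|[X_{t'}]_{S_{\tau-1}}\|_{[\hat{\mat\Gamma}^{t'-1,\zeta}_{\tau-1,\lambda}]^{-1}}>2^{-\zeta-1}\beta$. Periods of the first kind contribute, over all $\le|E_\tau|$ UCB periods of the epoch, at most $|E_\tau|\cdot\gamma\sqrt{s/|E_{\tau-1}|}\asymp\gamma\sqrt{s\,|E_\tau|}$ (using $|E_\tau|\le 2|E_{\tau-1}|$), and $\sum_\tau\sqrt{|E_\tau|}\lesssim\sqrt T$ by the doubling of epoch lengths, for a total $\lesssim\gamma\sqrt{sT}$. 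Periods of the second kind are counted by the elliptical-potential lemma \citep{abbasi2011improved}: $\sum_{t'\in\Psi^{\cdot,\zeta}_\tau}\|[X_{t'}]_{S_{\tau-1}}\|^2_{[\hat{\mat\Gamma}^{t'-1,\zeta}_{\tau-1,\lambda}]^{-1}}\lesssim|S_{\tau-1}|\log(1+|E_\tau|\sigma^2/\lambda)$, which after Cauchy--Schwarz and summation over the $\tilde\zeta$ stages and $\tilde\tau$ epochs gives $\lesssim\gamma\sqrt{sT\log(T)(\lambda+\log(\sigma dTk/\delta))}\cdot\tilde\tau\,\tilde\zeta$. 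Finally, the at-most-$T$ periods exiting through the ``all widths $\le1/\sqrt T$'' branch contribute $\lesssim\sqrt T$. Collecting these with $\tilde\tau\asymp\log T$ and $\tilde\zeta=\lceil\log(\beta T)\rceil$ yields $R_T(\{i_t\},\theta^*)\lesssim\sqrt T+\big(n_0\beta+\gamma\sqrt{sT\log(T)(\lambda+\log(\sigma dTk/\delta))}\big)\log(T)\log(\beta T)$, as claimed.

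The main obstacle is item (iii): proving that the truncation error $\|[\theta^*]_{S_{\tau-1}^c}\|$ really decays at rate $\sqrt{1/|E_{\tau-1}|}$ (up to the $\gamma,\sqrt s$ factors), rather than the $\sqrt{1/n_0}$ that pure exploration alone would give. This forces one to exploit Assumption~(A3) to decouple the ``recovered'' coordinates $S_{\tau-1}$ --- on which the UCB bias is concentrated --- from their complement, so that the residual support is estimated off $|E_\tau|$ effectively fresh samples, and to combine this with an adaptive self-normalized bound for the epoch-wide noise cross-term and the BSS optimality certificate. A secondary difficulty is threading the bias slack $\gamma\sqrt{s/|E_{\tau-1}|}$ through the \textsc{SupLinUCB} stage-induction without breaking the ``$i_t^*$ is never eliminated'' step and without accumulating worse than the final $\gamma\sqrt{sT}$ rate.
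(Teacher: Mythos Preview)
Your proposal is correct and follows essentially the same route as the paper: establish the BSS support-recovery bound $\|[\theta^*]_{S_{\tau}^c}\|\lesssim\sqrt{s/|E_\tau|}$ (your item (iii) is the paper's Lemma~\ref{bss_error}, proved via exactly the decoupling you describe under (A3)); establish the dimension-free confidence width $|\langle X_{t,i},\hat\theta^{t-1,\zeta}_{\tau,\lambda}-\theta^*\rangle|\le\omega^{t-1,\zeta}_{\tau,\lambda}(i)$ exploiting the clean-group independence (your item (iv) is the paper's Lemma~\ref{improve_ols}); then run the standard \textsc{SupLinUCB} stage induction and sum by the elliptical-potential lemma. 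The paper organizes the final summation slightly differently---it bounds the per-period regret directly by $\max\{1/\sqrt T,\,\omega^{t-1,\zeta_t}_{\tau,\lambda}(i_t)\}$ and then sums both terms of $\omega$ together, rather than splitting into your ``first kind / second kind'' cases---but the arithmetic is the same.

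One terminological slip worth flagging: in item (iv) you call $\gamma\|[X_{t,i}]_{S_{\tau-1}}\|_{[\hat{\mat\Gamma}]^{-1}}$ ``the usual self-normalized ridge confidence width'' and say ``a self-normalized inequality applies.'' That is precisely what \emph{does not} happen here. The self-normalized martingale bound (Lemma~\ref{confidence_region}) on the $|S_{\tau-1}|$-dimensional sub-problem would force an extra $\sqrt{|S_{\tau-1}|}\asymp\sqrt{s\tilde\tau}$ into $\gamma$, reproducing the suboptimal rate of Theorem~\ref{thm:main-upper}. The whole point of the clean-group property---which you correctly invoke---is that, conditional on the within-group design $[\mat X^{t-1,\zeta}_\tau]_{S_{\tau-1}}$, the effective noise $\varepsilon^{t-1,\zeta}_\tau+[\mat X^{t-1,\zeta}_\tau]_{S_{\tau-1}^c}^\top[\theta^*]_{S_{\tau-1}^c}$ is a fresh i.i.d.\ sub-Gaussian vector, so a \emph{plain} sub-Gaussian concentration (not the martingale one) controls the linear form, yielding a width constant that does not grow with $|S_{\tau-1}|$. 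You clearly have the right mechanism in mind; just make sure the write-up reflects that the improvement over Lemma~\ref{lem:ols} comes from \emph{replacing} the self-normalized bound, not from applying it.
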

Note that in Theorem \ref{theorem2}, if we omit all constants and logarithmic factors, the dominating part in the regret upper bound is of order $\tilde{\cO}(\sqrt{sT})$. This improves the rate in Theorem \ref{theorem1} by an order of $\tilde{\cO}(\sqrt{s})$ and achieves the optimal rate (up to logarithmic factors). Theorem \ref{theorem2} builds on a tailored  analysis of the \textsc{SupLinUCB} algorithm. To save space, we also put the complete proof of Theorem \ref{theorem2} in Appendix~\ref{appendix2}.


\section{Numerical Experiments} \label{experiment}
In this section, we use extensive  numerical experiments to demonstrate our proposed algorithm's efficiency  and validate our theoretical results. To simplify implementation, we only test the SLUCB algorithm in our experiments, which can be implemented efficiently. Theoretically, the SLUCB algorithm only achieves a sub-optimal regret guarantee due to technical reasons. However, extensive numerical experiments imply that it attains the near-optimality in practice. Specifically, we first empirically verify the regret guarantee established in Section \ref{theory} via simulation under various settings. Then we apply the proposed method to a sequential treatment assignment problem using a real medical dataset. In simulation studies, we sample the contextual covariates from $d$-dimensional standard Gaussian distribution. For the real dataset test, we sample the covariates from the empirical distribution, which may not be Gaussian. Some assumptions in Section \ref{sec:assumptions} are also violated. However, in all scenarios, our algorithm performs well and achieves $\tilde{\mathcal{O}}(\sqrt{T})$ regrets, which demonstrates the robustness  and practicability. 

 \subsection{Simulation Studies}
In this section, we present the details of our simulation studies. We first show the $\tilde{\mathcal{O}}(\sqrt{T})$ growth rate of regret empirically. Then we fix time horizon length $T$ and dimension $d$ and study the dependency of accumulated regret on sparsity $s$. To demonstrate the power of best subset selection, we also compare our algorithm's performance with the oracle, where the decision-maker knows the true support of underlying parameters. Since the bottleneck of computing time in our algorithm is the best subset selection, which requires solving a mixed-integer programming problem, it is appealing to replace this step with other  variable  selection methods, such as Lasso and iterative hard thresholding (IHT) \citep{blumensath2009iterative}. So we test the performance of those variants. We fix the number of actions $k=60$ in all settings. 

\subsubsection{Experiment 1: Growth of regret}
In this experiment, we study the growth rate of regret. We run two sets of experiments, where in the first case $d=100, T=1300, s=5,10,15, $ and in the second case, $d=300,T=1970, s=15,20,25$. For each setup, we replicate $20$ times and then calculate corresponding mean and $90\%$-confidence interval. We present the results in Figure~\ref{fig1}. As we see, for each fixed $d$ and $s$, the growth rate of regret is about $\tilde{\mathcal{O}}(\sqrt{T})$, which validates our theory.

\begin{figure}   
	\begin{tabular}{cc}
	\includegraphics[height=0.310\textwidth]{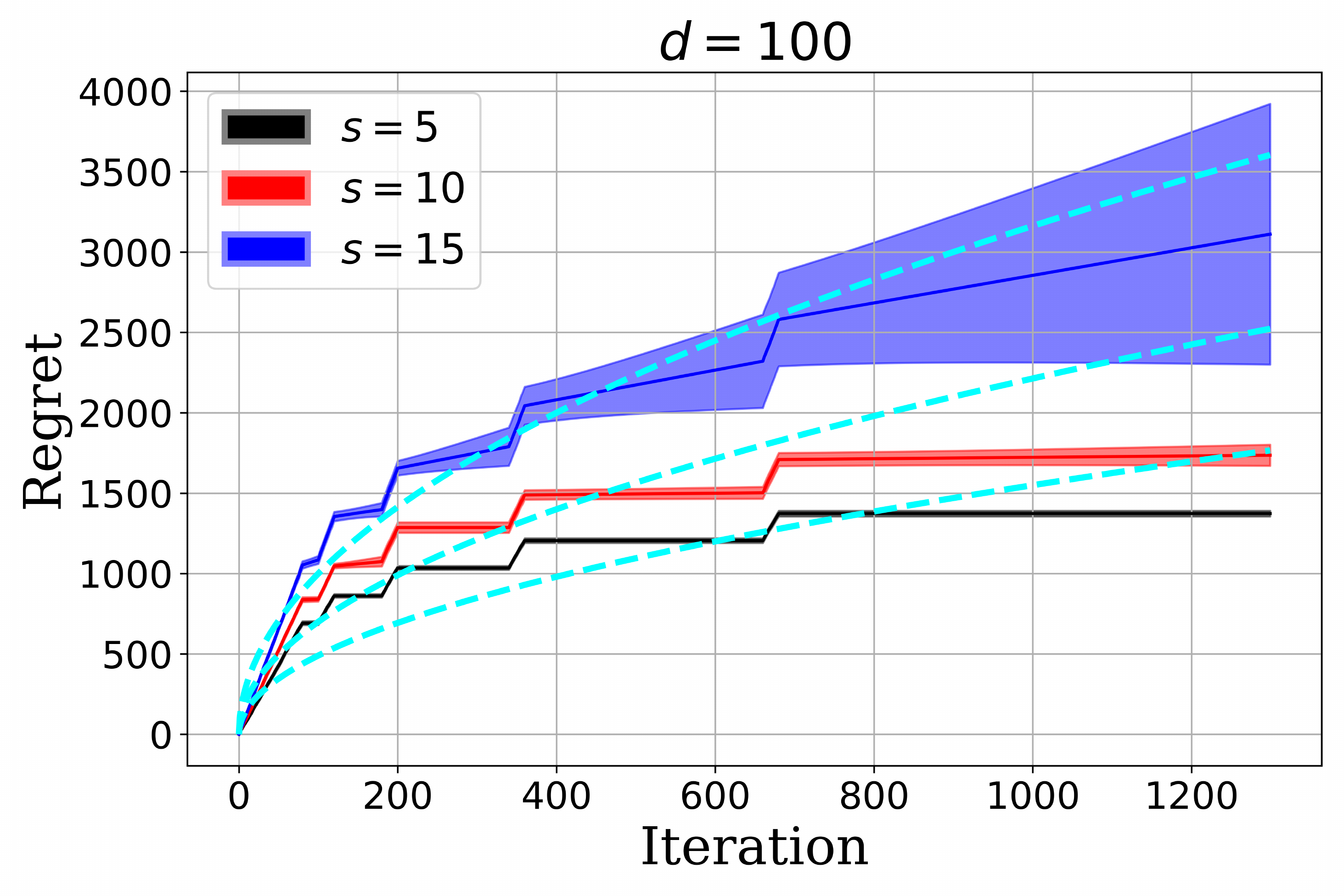}
	&
	\includegraphics[height=0.310\textwidth]{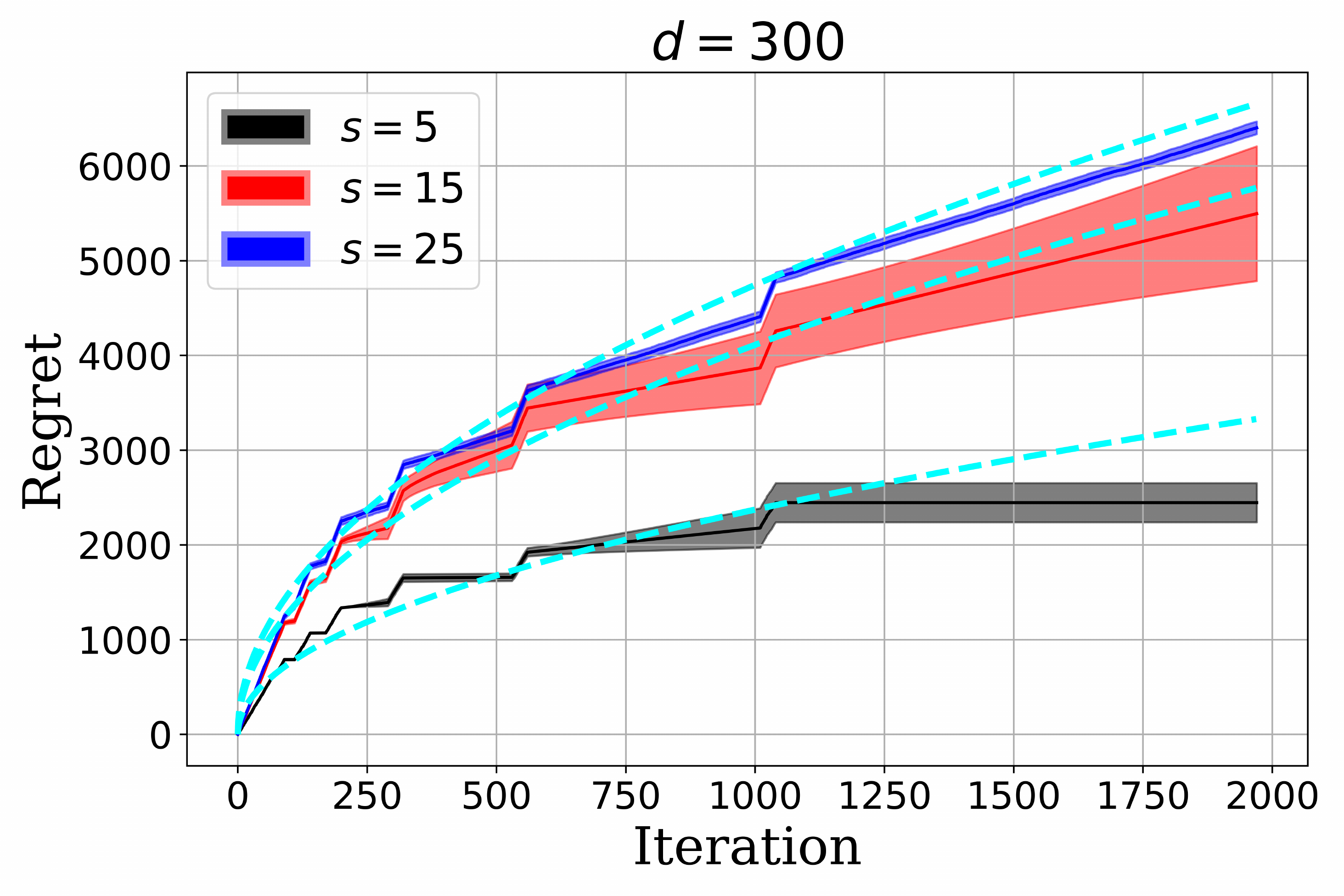}
	\\
	\end{tabular}
	\caption{\small Plot of regret  v.s. time periods. In (a), we set the dimension $d=100$, the horizon length $T=1300$, and the sparsity $s=5,10,15$. In (b), we set the dimension $d=300$, the horizon length $T=1970$, and the sparsity $s=5,15,25$. For each setting, we replicate $20$ times. Solid lines are the means of regret. Shadow areas denote corresponding empirical confidence intervals.}
	\label{fig1}	
	\vspace{-12pt}
\end{figure} 

\subsubsection{Experiment 2: Dependency on sparsity}
In this experiment, we fix the dimension $d$ and  horizon length $T$ and let sparsity $s$ varies. We calculate the accumulated regret at the end of horizon. We also run two sets of experiments, where in the first case $ d=100, T=1300, s=5,8,11,\cdots, 23$ and in the second case $ d=300, T=1970, s=5,8,11,\cdots, 23$. We present the results are presented in Figure \ref{fig2}. Although Theorem~\ref{thm:main-upper} only provides an  $\tilde{\mathcal{O}}(s\sqrt{T})$ regret guarantee for the SLUCB algorithm. The linear dependency of accumulated regret on $\sqrt{s}$ suggests that it actually attains the optimal $\tilde{\mathcal{O}}(\sqrt{sT})$ rate in practice. 

\begin{figure}   
	\centering
	\begin{tabular}{cc}
	\includegraphics[height=0.310\textwidth]{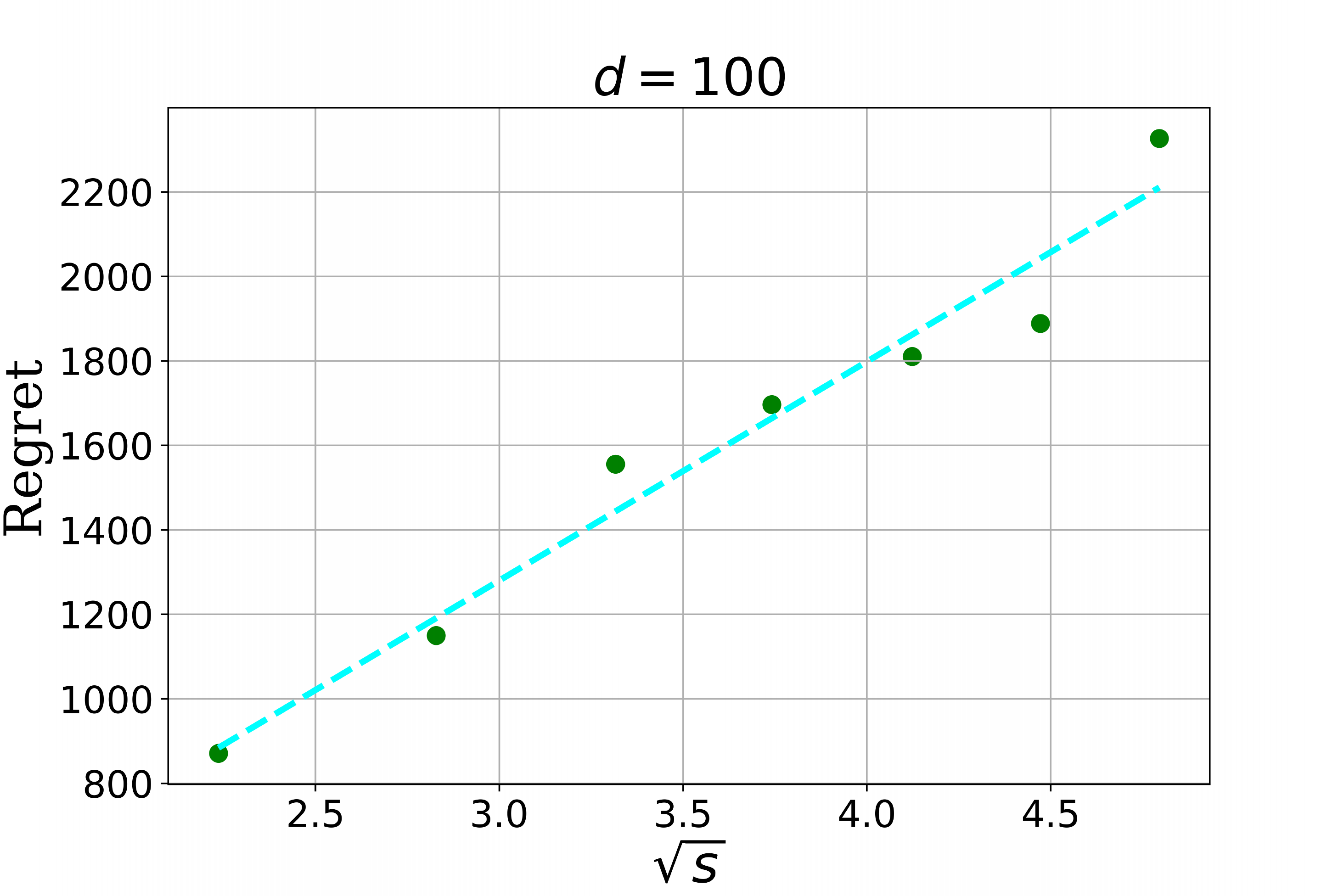}
	&
	\includegraphics[height=0.310\textwidth]{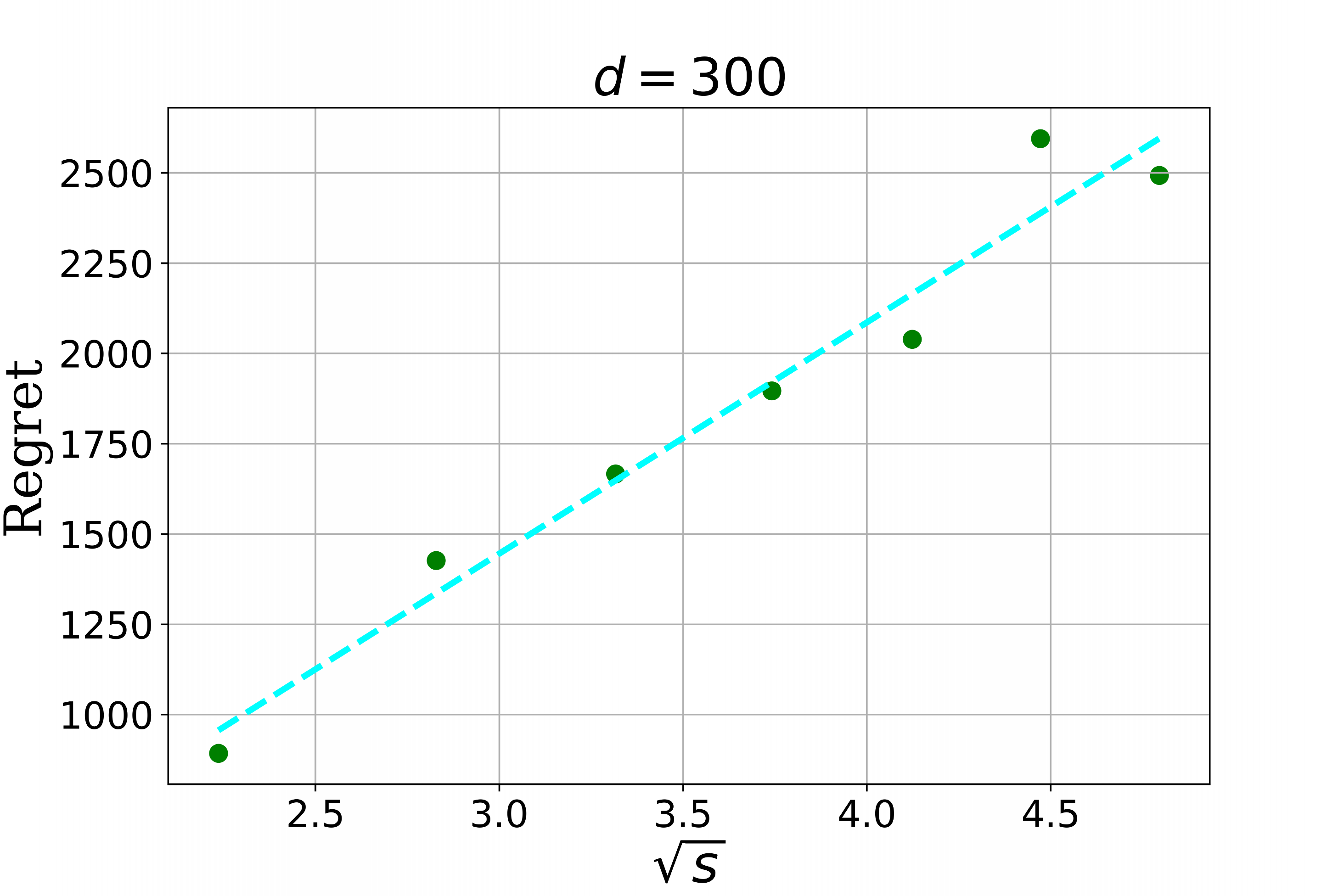}
	\\
	\end{tabular}
	\caption{\small Plot of accumulated regret v.s. $\sqrt{s}$ . In (a), we set the dimension $d=100$, the horizon length $T=1300$, and the sparsity $s=5,8,11,\cdots, 23$. In (b),  we set the dimension $d=300$, the horizon length $T=1970$, and the sparsity $s=5,8,11,\cdots, 23$.}
	\label{fig2}
	\vspace{-12pt}
\end{figure}

\subsubsection{Experiment 3: Comparison with variants of main algorithm and oracle}
{ In this experiment, we compare the performance and the computing time of our algorithm with several variants that substitute the best subset selection subroutine with Lasso and IHT.  We also compare with the oracle regret  where the decision-maker knows the true support of parameters. In more detail, for the first variant, we use Lasso to recover the support of the true parameter at the end of each epoch. We tune the $\ell_1$-penalty $\lambda$ such that the size of the support of the estimator is approximately equal to $s$, and then use it in the next epoch. For the second variant, we apply IHT to estimate the parameter and set the hard threshold to be $s$. 

We run two settings of experiments, corresponding to $d=100, s=15, T=1300 $, and $d=300, s=15, T=1300$. We also replicate $20$ times in each setting. For the first case, the averaged computing times are  $4.58$ seconds for Lasso,  $34.62$ seconds for IHT, and  $310.61$ seconds for best subset selection. For the second case, the average computing times are $9.10$ seconds for Lasso,  $39.47 $ seconds for IHT, and  $516.92$ seconds for best subset selection. We display the associated regret curves in Figure \ref{fig3}. As we see, although the computing time of Lasso is  shorter than the other two methods, the performance of Lasso is significantly weaker. The computing time of IHT is slightly longer than Lasso, but it achieves the similar performance as best subset selection, which suggests that IHT might be a good alternative in practice when the computing resource is limited. Finally, although the computing time of the best subset selection is the longest, its performance is the best. The regret almost achieves the same performance as the oracle. Such a result demonstrates the power of best subsection selection in high-dimensional stochastic bandit problems.}

\begin{figure}   
	\centering
	\begin{tabular}{cc}
	\includegraphics[height=0.310\textwidth]{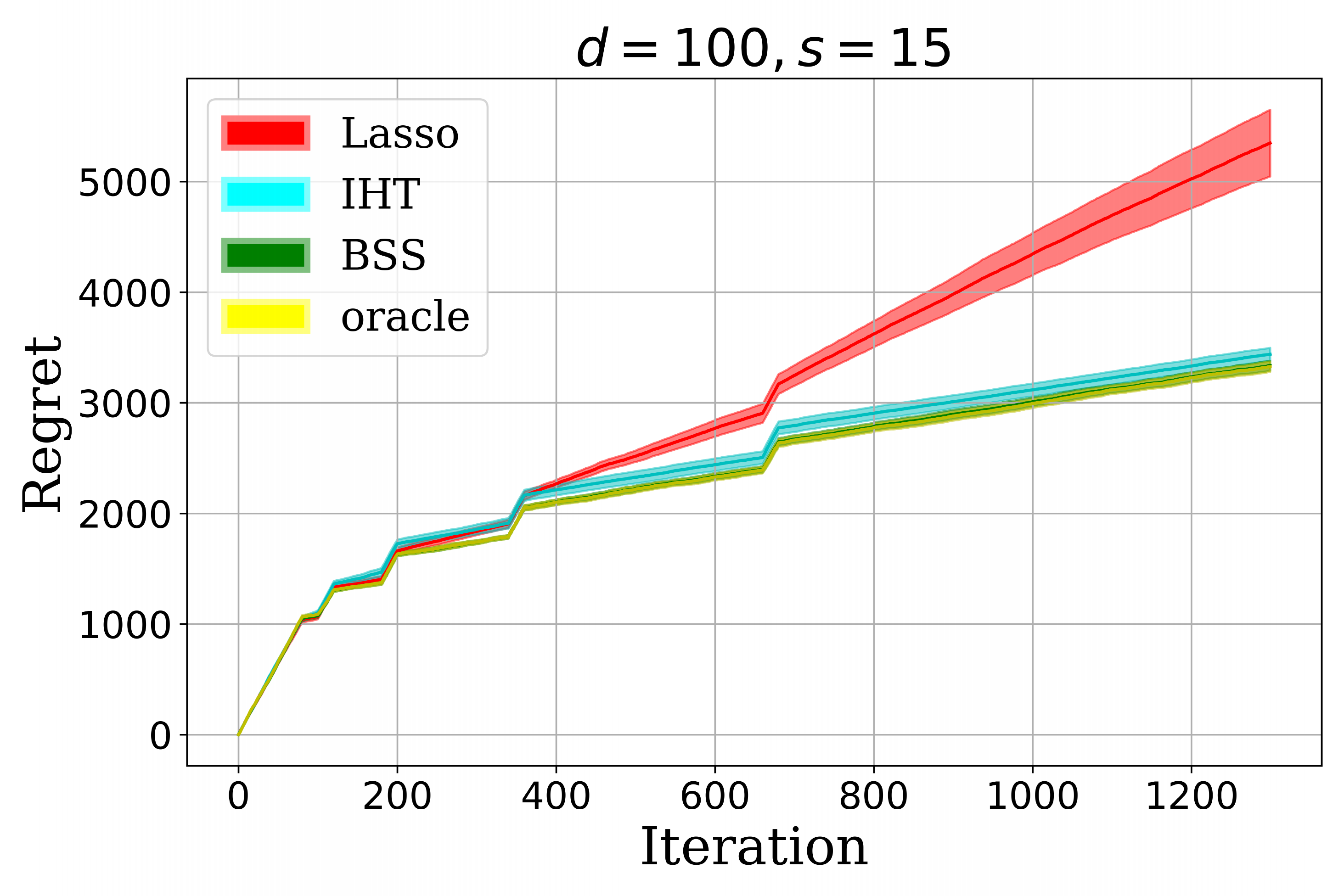}
	&
	\includegraphics[height=0.310\textwidth]{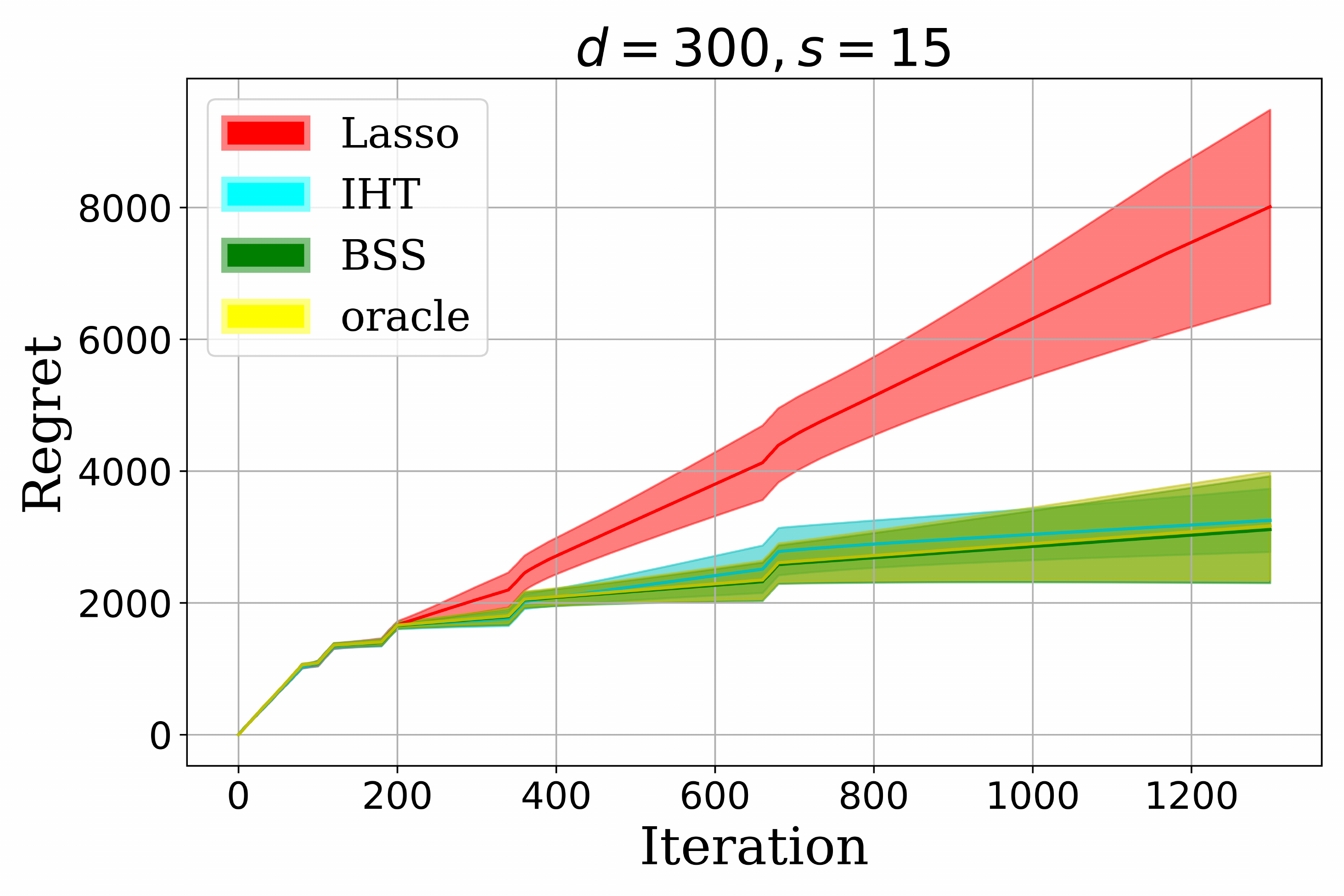}
	\\
	\end{tabular}
	\caption{\small Plot of regret curves of different algorithms. In (a), we set $d=100$, $s=15$, and $T=1300 $. In (b), we set $d=300$, $s=15$, and $T=1300 $. We test four algorithms: Lasso, IHT, BSS, and oracle.  We also replicate $20$ times in each setting. Solid lines are means of regret. Shadow areas denote corresponding confidence intervals. }
    \label{fig3}
	\vspace{-12pt}
\end{figure}


\subsection{Real Dataset Test}

In this section, we apply our methods in a sequential treatment assignment problem based on the ACTG175 dataset from the R package \textsl{``speff2trial"}. The scenario is that patients arrive at a clinic sequentially. Several candidate treatments are available, and different treatments may have different treatment effects. Hence, for each patient, the doctor observes his/her health information and then chooses a treatment for the patient, to maximize the total treatment effect.  

We first briefly introduce the ACTG175 dataset \citep{hammer1996trial}. This dataset records the results of a randomized experiment on 2139 HIV-infected patients, and the background information of those patients. The patients are randomly assigned one of the four treatments: zidovudine (AZT) monotherapy, AZT+didanosine (ddI), AZT+zalcitabine (ddC), and ddI monotherapy. The treatment effect is measured via the number of CD8 T-cell after 20 weeks of therapy. Besides, for each patient, an $18$-dimensional feature vector is provided as the background information, which includes age, weight, gender,  drug history, etc.

Similar to the simulation studies, we validate our algorithms by regret. However, in practice, the potential outcomes of unassigned treatments on each patient are unavailable, which prevents us from calculating the regret. To overcome this difficulty, we assume a linear model for the treatment effect of each therapy. Particularly, we assume that the effect of treatment $i$ (number of CD8 T cell) is $\langle X, \theta^*_i \rangle +\varepsilon$, where $X$ denotes the background information of each patient, $\varepsilon$ is a standard Gaussian noise, and $\theta_i^*$ is estimated via linear regression. { Similar assumptions are commonly adopted in statistics literatures to calculate the counterfactual, when real experiments are unavailable \citep{zhang2012estimating}}. Above sequential treatment assignment problem can be easily transformed to our formulation by considering the joint covariate and space. Given a patient with background $X$, let $X_1=(X,0,0,0),X_2=(0,X,0,0),X_3=(0,0,X,0),X_4=(0,0,0,X)$, and $\theta^*=(\theta^*_1,\theta^*_2,\theta^*_3,\theta^*_4)$. Since $ \langle X_i,\theta^* \rangle = \langle X, \theta^*_i\rangle $, assigning treatment to a patient is equivalent to pick an action from space $\mathcal{A}=\{X_1,X_2,X_3,X_4\}$. Note that with this formulation, Assumption \ref{ass1} is violated. However, the numerical results introduced below still demonstrate the good performance of our algorithm.

\begin{figure}   \label{fig4}   
	\centering
	\begin{tabular}{c}
	\includegraphics[height=0.320\textwidth]{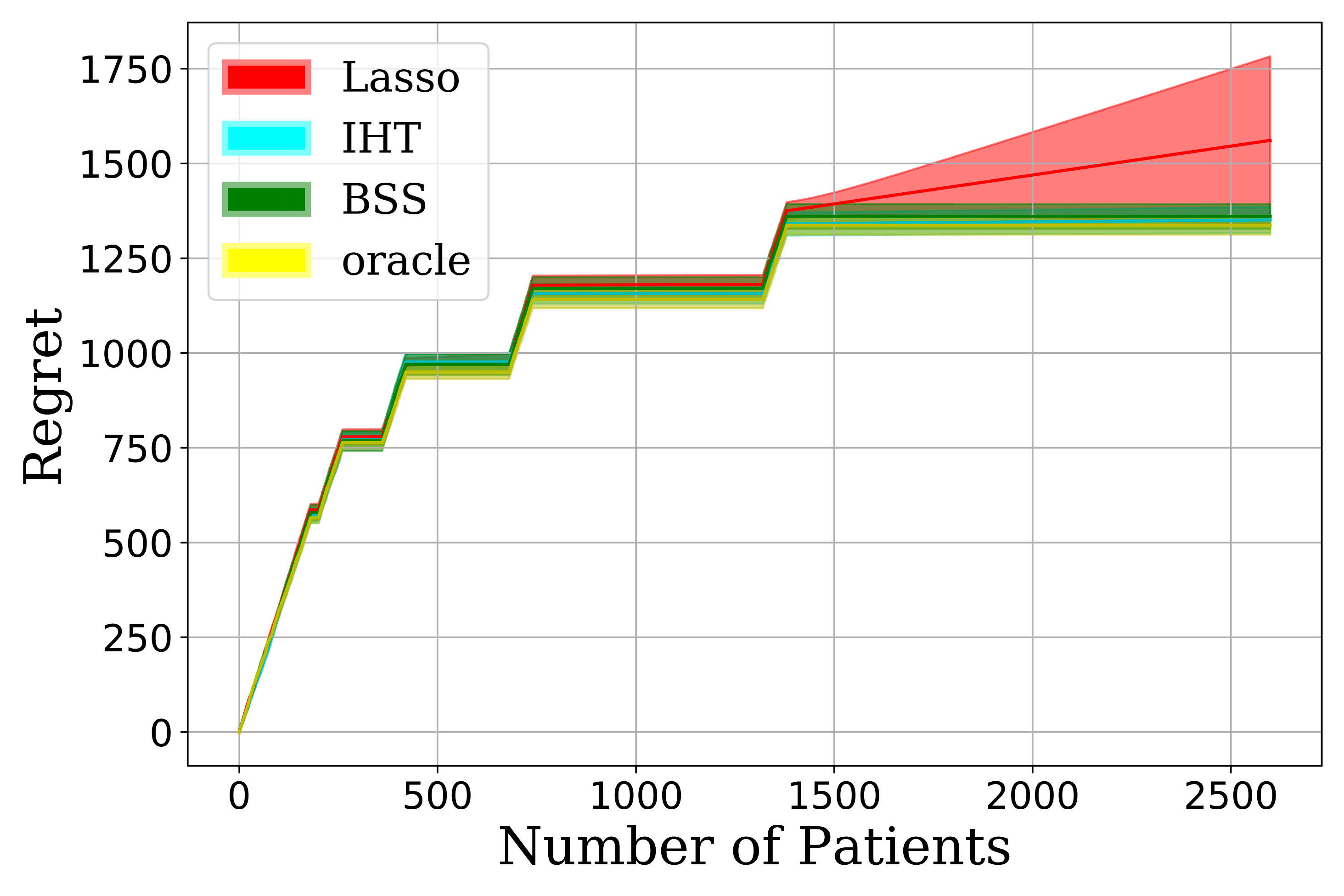}
	\\
	\end{tabular}
	\caption{\small Plot of regret in sequential treatment assignment problem based on ACTG175 dataset}
	\vspace{-12pt}
\end{figure}

In this experiment, for simplicity, we pick $9$ coordinates (age, weight, drugs in history, Karnofsky score, number of days of previously received antiretroviral therapy, antiretroviral history stratification, gender, CD4 T cell number at baseline, and CD8 T cell number at baseline), from the whole dataset as the covariate. We denote by it $X$. Then we use the real dataset to estimate the true parameters $\theta^*_i, i=1,2,3,4$, i.e., the treatment effect of each therapy. To test our algorithm in the high dimensional setting, we further concatenate $X$ with a $41$-dimensional random noise drawn from standard Gaussian distribution. The parameters corresponding to those injected noises are zeros. For each patient, we use the estimated parameters to calculate the potential treatment effect and regret. In addition, we choose the total number of patients $T=2600$. Like the simulation studies, we compare  our algorithm's performance with other variants (Lasso,IHT, and oracle). For each algorithm, we repeat $20$ times and calculate the mean regret curve. We present the result  in Figure~\ref{fig4}. Compared with the simulation studies, the performance of Lasso is still the worst. However, the discrepancy is much smaller. Although there is almost no difference in IHT and best subset selection, in real data tests, we may observe some instances where  the IHT algorithm fails to converge, which means a lack of robustness. Again, for our proposed algorithm, we observe an  $\tilde{\mathcal{O}}(\sqrt{T})$ growth rate of regret. It is pretty close to the oracle as well. So The BSS algorithm also achieves the optimality. Since in this experiment, technical assumptions in Section \ref{sec:assumptions} are  violated,  the numerical results demonstrate the robustness of our method.

{
\section{Conclusion} \label{sec:con}
In this paper, we first propose a method for the high-dimensional stochastic linear bandit problem by combining the best subset selection method and the \textsc{LinUCB} algorithm. It achieves the $\tilde{\mathcal{O}}(s\sqrt{T})$ regret upper bound and is nearly independent with the ambient dimension $d$ (up to logarithmic factors). In order to attain the optimal regret $\tilde{\mathcal{O}}(\sqrt{sT})$, we further improve our method by modifying the \textsc{SupLinUCB} algorithm. Extensive numerical experiments validate the performance and robustness of our algorithms. Moreover, although we cannot prove the $\tilde{\mathcal{O}}(s\sqrt{T})$ regret upper bound for the SLUCB algorithm due to some technical reasons, simulation studies show that the regret of SLUCB algorithm is actually $\tilde{\mathcal{O}}(\sqrt{sT})$ rather than our provable upper bound $\tilde{\mathcal{O}}(s\sqrt{T})$. A similar phenomenon is also observed in the seminal works \citep{auer2002using,chu2011contextual}, where low-dimensional stochastic linear bandit problems are investigated. It remains an open problem to prove the $\tilde{\mathcal{O}}(\sqrt{sT})$ upper bound for the \textsc{LinUCB}-type algorithms. In addition, it is also interesting to study the high-dimensional stochastic linear bandit problem under weaker assumptions, especially when the independent coordinates assumption does not hold. We view all of those problems as potential future research directions.}

\begin{singlespace}
\bibliographystyle{abbrvnat}
\bibliography{refs,bib}

\end{singlespace}

\

\spacingset{1.25} 

\begin{appendices}

\noindent{\LARGE{\bf Appendix}}
\section{Proof of Theorem \ref{theorem1}} \label{appendix1}

Before going to the proof, we present a proposition,  which upper bounds the scale of covariate $\{X_{t,i}\}$ uniformly with high probability. 

\begin{proposition} \label{X_uniform_bound}
    Consider all the sub-Gaussian covariates in the problem $\{X_{t,i}\}_{t\in [T],i \in \mathcal{A}_t}$. Then with probability at least $1-\delta$,
            \begin{align*}
                \max_{t\in [T],i \in \mathcal{A}_t, j \in [d]} \big | [X_{t,i}]_j \big | \lesssim \sigma \log(kTd/\delta).
            \end{align*}
\end{proposition}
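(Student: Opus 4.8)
The plan is a straightforward union bound over all $kTd$ coordinates, using only the marginal sub-Gaussian tail implied by Assumption (A1). First I would fix a single triple $(t,i,j)$ with $t\in[T]$, $i\in\mathcal{A}_t$, $j\in[d]$, and apply a Chernoff argument to the coordinate $[X_{t,i}]_j$. Taking $a=(\lambda/\sigma)\,e_j$ in (A1), where $e_j$ is the $j$-th standard basis vector and $\lambda\in\mathbb{R}$ is arbitrary, gives $\mathbb{E}\big[\exp(\lambda [X_{t,i}]_j)\big]\le\exp(\lambda^2/2)$, and hence by Markov's inequality applied to $\exp(\lambda[X_{t,i}]_j)$, followed by optimizing over $\lambda$ and a symmetric argument for $-[X_{t,i}]_j$,
\[
\mathbb{P}\big(|[X_{t,i}]_j| > u\big)\ \le\ 2\exp\big(-u^2/2\big),\qquad \forall\,u>0.
\]
(If one prefers to read (A1) as providing only the single value $\lambda=1$, the same steps give the cruder tail $2\exp(\sigma^2/2-\sigma u)$, which is equally adequate below.)

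Next I would union-bound over all such triples. Since $|\mathcal{A}_t|=k$ for every $t$, there are at most $kTd$ of them, so it suffices to choose the threshold $u=u_\delta$ so that the per-triple failure probability is at most $\delta/(kTd)$; from the display this requires $u_\delta^2/2 \ge \log(2kTd/\delta)$, i.e.\ $u_\delta \asymp \sqrt{\log(kTd/\delta)}$. Because $\sigma\ge 1$ by (A1), this is bounded by $\sigma\log(kTd/\delta)$ (using $\log(kTd/\delta)\ge 1$ without loss of generality, absorbing the factor-$2$ inside the log into the implied constant), which is exactly the claimed inequality. Under the weaker reading of (A1) one instead needs $\sigma u_\delta \gtrsim \sigma^2 + \log(kTd/\delta)$, i.e.\ $u_\delta \lesssim \sigma + \sigma^{-1}\log(kTd/\delta) \lesssim \sigma\log(kTd/\delta)$, again using $\sigma\ge1$; so the conclusion is robust to the precise normalization.

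There is essentially no obstacle here: the statement is a routine concentration estimate. The only points requiring a little care are (i) extracting a genuinely sub-Gaussian (quadratic-exponent) one-sided tail from the particular form of (A1) — achieved by letting $a$ range over all scalar multiples of $e_j$ rather than just $e_j$ itself — and (ii) checking that the stated bound $\sigma\log(kTd/\delta)$, which is in fact generous (a $\sqrt{\log(kTd/\delta)}$-scale bound already holds), dominates $u_\delta$; both reduce to $\sigma\ge1$. Note also that independence of the covariates $\{X_{t,i}\}$ plays no role, only their marginal sub-Gaussianity, and that the absolute constants accumulated along the way are harmless under the paper's convention that ``with probability at least $1-\delta$'' tolerates a multiplicative constant in front of $\delta$.
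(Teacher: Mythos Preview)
Your proposal is correct and follows essentially the same approach as the paper: the paper's proof simply invokes that each $[X_{t,i}]_j$ is a centered sub-Gaussian random variable with parameter $\sigma^2$ and then appeals to the standard sub-Gaussian tail bound together with a union bound over all $kTd$ triples. Your argument supplies more detail (in particular the care taken in reading off the marginal tail from the specific form of (A1)), but the method is identical.
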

\begin{proof} By Assumption \ref{ass1}, for all $t \in [T] $, $ i \in \cA_t$, and $ j\in[d] $, $\{[X_{t,i}]_j \} $ are independent and identically distributed centered sub-Gaussian random variables with parameter $ \sigma^2$. Then the result follows from the standard sub-Gaussian concentration inequality and union bound argument.
\end{proof}
Next, we present the proof of Theorem \ref{theorem1}. For ease of presentations, we defer the detailed proofs of the lemmas to the Supplementary Materials. Our proof  takes the benefits of results of the celebrated work on linear contextual bandit \citep{abbasi2011improved}, 
in which the authors establish a self-normalized martingale concentration inequality and the confidence region for the ridge  estimator under dependent designs. For the self completeness, we present these results in the Supplementary Materials Section~\ref{cite_lemma}.

\begin{proof} [Proof of Theorem \ref{theorem1}]
 The proof  includes four major steps. First, we quantify the performance of the BSS estimator obtained at the end of each epoch. In particular, we estimate the $\ell_2$-norm of the true parameter $\theta^*$ outside $S_{\tau}$, the generalized support recovered by epoch $E_\tau$. It measures the discrepancy between $S_{\tau}$ and the true support of $\theta^*$ and hence, the quality of BSS estimator. Then, within epoch $E_\tau$, for each period $t$, we analyze  the error of the restricted ridge estimator $\hat{\theta}^{t-1}_{\tau,\lambda}$. Based on that, we  upper bound the predictive errors of potential rewards, and  build up corresponding upper confidence bounds.  Next, similar to the classic analysis of UCB-type algorithm of linear bandit problem, we establish an elliptical potential lemma to upper bound the sum of confidence bands. Finally, we combine the results  to show that the regret of the SLUCB algorithm is  of order $\tilde{\cO}(s\sqrt{T})$. In what follows, we present the details of each step. 

\vspace{16pt}

\textbf{Step 1. Upper bound the error of the BSS estimator:}
In this step, we analyze the error of the BSS estimator obtained at the end of each epoch. Recall that $S_\tau=\supp^+(\hat \theta_{\tau,\lambda}) $, which is the generalized support of the BSS estimator $\hat \theta_{\tau,\lambda}$. We have the following lemma. 
\begin{lemma} \label{bss_error}
    If $$n_0 \gtrsim  \rho^{-1} \cdot (\nu \sigma \tilde{\tau})^2\cdot s^{3} \cdot\log^4\big(kTd/(\delta \lambda)\big), $$  we have that, with probability at least $1-\delta$,  for each epoch $ \tau \in [\tilde{\tau}]$, 
    \begin{equation}
    \big\|[\theta^*]_{S_{\tau}^c}\big\| \lesssim \sqrt{\frac{ \nu^2s\log^2\big({kTd}/{(\delta\lambda)}\big)+\lambda r^2 }{|E_{\tau}|}}.
    \end{equation}
    \label{lem:bss}
\end{lemma}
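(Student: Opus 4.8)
The plan is to run a \emph{basic inequality} argument driven by the optimality of the best subset selection estimator, paying close attention to which samples may be used in the accompanying restricted‑eigenvalue lower bound: the key point is that, thanks to the independent‑coordinates Assumption~(A3), we may use \emph{all} $|E_\tau|$ samples of the epoch rather than only the $n_0$ pure‑exploration rounds, and this is exactly what produces $|E_\tau|$ in the denominator. Fix an epoch $E_\tau$ and let $\hat\theta$ be the minimizer in \eqref{eq:bss} before the projection, with optimal generalized support $S_\tau\supseteq S_{\tau-1}$ and $|S_\tau|\le\tau s$; since the radius‑$r$ projection merely rescales, it leaves the generalized support unchanged, so it suffices to control $\Delta:=\hat\theta-\theta^*$, and because $[\hat\theta]_{S_\tau^c}=0$ and $S_\tau^c\subseteq S_{\tau-1}^c$ we have $\|[\theta^*]_{S_\tau^c}\|=\|[\Delta]_{S_\tau^c}\|\le\|[\Delta]_{S_{\tau-1}^c}\|$. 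Using that $\supp(\theta^*)\cup S_{\tau-1}$ is a feasible generalized support in \eqref{eq:bss} (it contains $S_{\tau-1}$ and has size $\le\tau s$, since $|S_{\tau-1}|\le(\tau-1)s$ by induction on $\tau$), optimality of $\hat\theta$ together with $Y_{t'}=\langle X_{t'},\theta^*\rangle+\varepsilon_{t'}$ gives the basic inequality
$$
Q\ :=\ \sum_{t'\in E_\tau}\langle X_{t'},\Delta\rangle^2\ \le\ 2\Big\langle\Delta,\ \sum_{t'\in E_\tau}\varepsilon_{t'}X_{t'}\Big\rangle+\lambda r^2 .
$$

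Next I would upper bound the noise term. Writing $S:=\supp(\hat\theta)\cup\supp(\theta^*)$ (so $|S|\le(\tau+1)s$ and $\Delta$ is supported on $S$) and $V_\tau:=\lambda\mathbf{I}+\sum_{t'\in E_\tau}[X_{t'}]_S[X_{t'}]_S^\top$, Cauchy--Schwarz gives $\langle\Delta,\sum_{t'}\varepsilon_{t'}X_{t'}\rangle\le\|[\Delta]_S\|_{[V_\tau]_S}\cdot\big\|\sum_{t'}\varepsilon_{t'}[X_{t'}]_S\big\|_{[V_\tau]_S^{-1}}$. For a \emph{fixed} coordinate subset of size $\le(\tau+1)s$, the self‑normalized martingale inequality of \citet{abbasi2011improved} (recalled in the Supplementary Materials) bounds the second factor; taking a union bound over the at most $d^{(\tau+1)s}$ such subsets and over the $\tilde\tau$ epochs, and invoking Proposition~\ref{X_uniform_bound} to bound $\log\det([V_\tau]_S/\lambda)$, one gets on an event of probability $\ge1-\delta$ a uniform bound $\big\|\sum_{t'}\varepsilon_{t'}[X_{t'}]_S\big\|_{[V_\tau]_S^{-1}}\le M$ with $M^2\lesssim\nu^2 s\log^2\big(kTd/(\delta\lambda)\big)$ (absorbing $\tilde\tau\asymp\log T$ into the logarithm). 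Since $\|[\Delta]_S\|_{[V_\tau]_S}^2=\lambda\|\Delta\|^2+Q$, the basic inequality becomes $Q\lesssim M\sqrt{\lambda}\,\|\Delta\|+M^2+\lambda r^2$; combining this with the crude exploration‑only lower bound $Q\gtrsim n_0\rho\|\Delta\|^2$ (valid because the $n_0$ exploration covariates are i.i.d.\ and $(\tau+1)s$‑sparse restricted eigenvalues concentrate once $n_0$ is above the stated threshold) already yields $Q\lesssim M^2+\lambda r^2$ and $\|\Delta\|^2\lesssim(M^2+\lambda r^2)/(n_0\rho)$.

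The heart of the proof is a \emph{sharper} lower bound $Q\gtrsim|E_\tau|\,\rho\,\|[\Delta]_{S_{\tau-1}^c}\|^2$, up to subdominant error terms, and this is where the precise size of $n_0$ is used. The structural observation is that within $E_\tau$ the SLUCB/SSUCB policy reads the covariates only through their coordinates in $S_{\tau-1}$, so by Assumption~(A3) the complementary coordinates $\{[X_{t'}]_{S_{\tau-1}^c}\}_{t'\in E_\tau}$ form an i.i.d.\ sequence of centered sub‑Gaussian vectors over the \emph{entire} epoch, not just the exploration stage. A routine restricted‑eigenvalue estimate (matrix concentration plus a covering net over the $\binom{d}{(\tau+1)s}$ relevant supports) then shows $\sum_{t'\in E_\tau}\langle[X_{t'}]_{S_{\tau-1}^c},w\rangle^2\gtrsim|E_\tau|\rho\|w\|^2$ uniformly over all $w$ supported on $\le(\tau+1)s$ coordinates of $S_{\tau-1}^c$. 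Decomposing $\Delta=[\Delta]_{S_{\tau-1}}+[\Delta]_{S_{\tau-1}^c}$ and expanding, $Q\ge\sum_{t'}\langle[X_{t'}]_{S_{\tau-1}^c},[\Delta]_{S_{\tau-1}^c}\rangle^2-2\big|\sum_{t'}\langle[X_{t'}]_{S_{\tau-1}^c},[\Delta]_{S_{\tau-1}^c}\rangle\langle[X_{t'}]_{S_{\tau-1}},[\Delta]_{S_{\tau-1}}\rangle\big|$; the first sum is $\gtrsim|E_\tau|\rho\|[\Delta]_{S_{\tau-1}^c}\|^2$, while the cross sum is, for fixed supports, a martingale in $t'$ with conditional‑mean‑zero increments (because $[X_{t'}]_{S_{\tau-1}^c}$ is centered and independent of the $S_{\tau-1}$‑coordinates, the chosen action, and the past), hence controlled by a Freedman/self‑normalized bound and the net by $O\big(\sigma\sqrt{\tau s\log(kTd/\delta)}\big)\cdot\|[\Delta]_{S_{\tau-1}^c}\|\cdot\|\sum_{t'}\langle X_{t'},\cdot\rangle\|$‑type quantities, which one bounds using the already derived control on $Q$, on $\|\Delta\|$, and on the sample‑covariance operator norm (via Proposition~\ref{X_uniform_bound}). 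A careful accounting of these error terms shows that the choice $n_0\asymp\rho^{-1}(\nu\sigma\tilde\tau)^2 s^3\log^4\big(kTd/(\delta\lambda)\big)$ is what makes the cross term and the $M\sqrt\lambda\|\Delta\|$ term negligible relative to $|E_\tau|\rho\|[\Delta]_{S_{\tau-1}^c}\|^2$; solving the resulting (quadratic) inequality gives $\|[\theta^*]_{S_\tau^c}\|^2\le\|[\Delta]_{S_{\tau-1}^c}\|^2\lesssim\big(\nu^2 s\log^2(kTd/(\delta\lambda))+\lambda r^2\big)/|E_\tau|$, and a final union bound over $\tau\in[\tilde\tau]$ completes the proof.

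I expect the main obstacle to be precisely this sharper lower bound on $Q$. The estimator $\hat\theta$, hence $\Delta$, depends on the whole epoch's data, including UCB‑stage covariates that are correlated with past noise, so one cannot invoke a restricted‑eigenvalue condition for the full design matrix, and the exploration‑only bound $Q\gtrsim n_0\rho\|\Delta\|^2$ is too weak to yield $|E_\tau|$ (rather than $n_0$) in the denominator — which is essential, since the whole point of the doubling trick is that later epochs recover the support with higher accuracy. Assumption~(A3) is what rescues the argument by making the out‑of‑$S_{\tau-1}$ coordinates behave like a genuinely i.i.d.\ sample over the entire epoch; the remaining difficulty, and the reason for the somewhat generous polynomial‑in‑$s$ and poly‑logarithmic requirement on $n_0$, is that one must still control, uniformly over the random support $S$, the interaction between the noise‑correlated ``in‑support'' part of $\Delta$ and its ``fresh'' part.
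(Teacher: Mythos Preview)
Your proposal is essentially correct and identifies the two structural points that drive the argument: (i) the optimality of the BSS estimator yields a bound on the $\hat{\mat\Gamma}_{\tau,\lambda}$-weighted error of $\hat\theta_{\tau,\lambda}-\theta^*$ of order $\nu^2 s\log^2(kTd/(\delta\lambda))+\lambda r^2$ after a union bound over the $d^{O(\tau s)}$ candidate supports, and (ii) Assumption~(A3) together with $S_\tau^c\subseteq S_{\tau-1}^c$ makes the coordinates $[X_t]_{S_\tau^c}$ i.i.d.\ across the \emph{entire} epoch, which is what puts $|E_\tau|$ rather than $n_0$ in the denominator. This matches the paper exactly.

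Where you differ from the paper is in the treatment of the cross term, and your version is the vaguer one. The paper decomposes along $S=S_\tau$ (not $S_{\tau-1}$), writes
\[
[\Delta]_{S^c}^\top[\hat{\mat\Gamma}_{\tau,\lambda}]_{S^c}[\Delta]_{S^c}
\;\le\;\|\Delta\|^2_{\hat{\mat\Gamma}_{\tau,\lambda}}
\;+\;2\big|[\Delta]_S^\top[\hat{\mat\Gamma}_{\tau,\lambda}]_{SS^c}[\Delta]_{S^c}\big|,
\]
and controls the cross term not by a Freedman-plus-net argument but by a simple H\"older inequality $\|\Delta\|_1\cdot\|[\hat{\mat\Gamma}_{\tau,\lambda}]_{SS^c}\|_\infty\cdot\sqrt{(\tau+1)s}\,\|[\Delta]_{S^c}\|$. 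The two factors are then bounded by two short auxiliary lemmas: since each entry of $[\hat{\mat\Gamma}_{\tau,\lambda}]_{SS^c}$ is a sum of $|E_\tau|$ products of conditionally independent centered sub-Gaussians, one gets $\|[\hat{\mat\Gamma}_{\tau,\lambda}]_{SS^c}\|_\infty\lesssim\sigma\log(d/\delta)\sqrt{|E_\tau|}$; and $\|\Delta\|_1\le\sqrt{(\tau+1)s}\,\|\Delta\|$ is bounded using only the $n_0$ pure-exploration rounds, which give $[\hat{\mat\Gamma}_{\tau,\lambda}]_{\tilde S_\tau}\succeq\varsigma^2\mathbf I$ and hence $\|\Delta\|_1\lesssim\nu s\varsigma^{-1}\sqrt{\tilde\tau}\log(kTd/(\delta\lambda))$. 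Choosing $\varsigma=\nu\sigma\tilde\tau\,s^{3/2}\log^2(kTd/(\delta\lambda))$ is exactly what forces $n_0\gtrsim\varsigma^2/\rho$ and collapses the cross term to $\lesssim\sqrt{|E_\tau|}\,\|[\Delta]_{S^c}\|$, after which the quadratic inequality in $\|[\Delta]_{S^c}\|$ solves to the stated bound. This route makes the role of $n_0$ completely transparent---it enters only through the $\ell_1$ control on $\Delta$---whereas in your sketch the ``careful accounting'' that pins down the precise $s^3$ and $\log^4$ exponents is left implicit.
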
 
Note that when $\tau$ is large, $ |E_\tau| \asymp T $. Hence, Lemma \ref{bss_error} states that the $\ell_2$-norm of the unselected part of  true parameter $\theta^*$ decays at a rate of $\tilde{\cO}(\sqrt{s/T})$. Recall that in the next epoch $\tau+1$, we restrict the estimation only on $S_\tau$. Failing to identify the exact support of $\theta^*$ incurs a regret that is roughly 
$$
\big\|[\theta^*]_{S_{\tau}^c}\big\|_2\cdot T=\tilde{\cO}(\sqrt{s/T}) \cdot T = \tilde{\cO}(\sqrt{sT}).
$$
We interpret this part as ``bias'' in the total regret. Formally, by Lemma \ref{lem:bss} and Proposition \ref{X_uniform_bound}, for each covariate $X_{t,i}$, we upper bound the inner product of $[\theta^*]_{S_\tau^c}$ and $[X_{t,i}]_{S_\tau^c}$ via the following corollary.  
\begin{corollary} \label{cor1}
    Under the condition of Lemma \ref{lem:bss}, with probability at least $1-\delta$, for all $ \tau \in [\tilde{\tau}]$, $t\in E_{\tau}$, and $i\in \cA_t$, we have
\begin{equation}
\Big| \big \langle [X_{t,i}]_{S_\tau^c},[\theta^*]_{S_\tau^c}\big \rangle\Big| \lesssim \sigma\sqrt{\log(kTd/\delta)} \cdot \sqrt{\frac{ \nu^2s\log^2\big({kTd}/{(\delta\lambda)}\big)+\lambda r^2 }{|E_{\tau}|}}.
\label{eq:bss-final}
\end{equation}
\label{cor:bss}
\end{corollary}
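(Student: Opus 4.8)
The plan is to obtain \eqref{eq:bss-final} by combining the $\ell_2$-control on the unrecovered part of $\theta^*$ supplied by Lemma~\ref{lem:bss} with a one-dimensional concentration bound on the projection $\langle X_{t,i},[\theta^*]_{S_\tau^c}\rangle$. Writing $v_\tau:=[\theta^*]_{S_\tau^c}$, viewed as a $d$-vector that vanishes on $S_\tau$, we have $\langle[X_{t,i}]_{S_\tau^c},[\theta^*]_{S_\tau^c}\rangle=\langle X_{t,i},v_\tau\rangle$, and by the sparsity assumption (B1) the vector $v_\tau$ is supported on $\supp(\theta^*)\setminus S_\tau$, a set of size at most $s$; this is exactly the place where the ambient dimension $d$ drops out.

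For the concentration step I would condition on the $\sigma$-field $\cF$ generated by all data used to form $S_\tau$. Then $S_\tau$, and therefore $v_\tau$, is $\cF$-measurable, whereas the covariates $\{X_{t,i}\}$ to which the bound is applied (in the companion regret argument they belong to the epoch that follows) are drawn afterwards and are independent of $\cF$. Conditionally on $\cF$, the vector $v_\tau$ is fixed, so by the sub-Gaussianity (A1) the scalar $\langle X_{t,i},v_\tau\rangle$ is centered with variance proxy at most $\sigma^2\|v_\tau\|^2$; a standard sub-Gaussian tail bound plus a union bound over $t\in[T]$, $i\in\cA_t$, and $\tau\in[\tilde\tau]$ (recall $\tilde\tau\asymp\log T$) then yields, with probability at least $1-\delta$ and simultaneously over all admissible $\tau,t,i$,
\[
\big|\langle X_{t,i},v_\tau\rangle\big|\;\lesssim\;\sigma\,\big\|[\theta^*]_{S_\tau^c}\big\|\,\sqrt{\log(kTd/\delta)}.
\]

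It then remains to intersect this event with the event of Lemma~\ref{lem:bss}, on which $\|[\theta^*]_{S_\tau^c}\|\lesssim\sqrt{(\nu^2 s\log^2(kTd/(\delta\lambda))+\lambda r^2)/|E_\tau|}$ for all $\tau$, to rescale $\delta$ by an absolute constant so that the union of the two failure events still has probability at most $\delta$, and to substitute; this gives exactly \eqref{eq:bss-final}. The step I expect to require the most care is the independence bookkeeping --- being sure that the covariates over which the probability is taken are genuinely independent of the frozen vector $v_\tau$ (equivalently, of the recovered support). If one prefers to avoid this, the deterministic estimate $|\langle[X_{t,i}]_{S_\tau^c},[\theta^*]_{S_\tau^c}\rangle|\le\|[\theta^*]_{S_\tau^c}\|_1\cdot\max_j|[X_{t,i}]_j|\le\sqrt{s}\,\|[\theta^*]_{S_\tau^c}\|\cdot\max_j|[X_{t,i}]_j|$, combined with the uniform covariate bound of Proposition~\ref{X_uniform_bound}, already works, at the cost of an extra $\sqrt{s\log(kTd/\delta)}$ factor --- which is immaterial for the $\tilde{\cO}(s\sqrt{T})$ regret guarantee of Theorem~\ref{theorem1}.
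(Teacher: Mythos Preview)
Your proposal is correct and in fact more careful than the paper's own treatment. The paper does not give a detailed argument for the corollary; it simply writes ``by Lemma~\ref{lem:bss} and Proposition~\ref{X_uniform_bound}''. That is precisely your fallback route: Proposition~\ref{X_uniform_bound} supplies a uniform coordinate bound on every covariate (a fixed high-probability event, independent of $S_\tau$, so no conditioning is required), and combining with $\|[\theta^*]_{S_\tau^c}\|_1\le\sqrt{s}\,\|[\theta^*]_{S_\tau^c}\|$ and Lemma~\ref{lem:bss} gives the inequality up to the extra $\sqrt{s}$ you flag --- harmless for Theorem~\ref{theorem1}.

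Your primary route (freeze $S_\tau$, apply one-dimensional sub-Gaussian concentration to $\langle X_{t,i},v_\tau\rangle$) is a genuine sharpening: it removes that $\sqrt{s}$ and matches the displayed bound exactly. You are also right to identify the independence bookkeeping as the one delicate point. As the corollary is \emph{literally} stated --- with $t\in E_\tau$ and $S_\tau$ computed at the end of that same epoch --- the needed independence fails, since $S_\tau$ depends on the epoch-$\tau$ covariates. But in every actual invocation (Lemma~\ref{lem:ols} and Lemma~\ref{improve_ols}) the index is shifted: one bounds $\langle[X_{t,i}]_{S_{\tau-1}^c},[\theta^*]_{S_{\tau-1}^c}\rangle$ for $t\in E_\tau$, where $S_{\tau-1}$ is measurable with respect to $E_{\tau-1}$ and hence independent of the fresh $X_{t,i}$. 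Your parenthetical remark already captures this; the paper's indexing in the corollary statement is simply off by one.
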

Essentially, Corollary \ref{cor1} implies that the regret incurred by inexact recovery of the support of true parameter at a single period is of order $\tilde{\cO}(\sqrt{s/T}) $. This result is critical for our regret analysis later.

\vspace{16pt}

\textbf{Step 2. Upper bound the prediction error:} In this step, for each time period $t\in E_{\tau}$ and arm $i\in \cA_t$, we establish an upper bound for the prediction error $\langle X_{t,i},\hat\theta_{\tau,\lambda}^{t-1}-\theta^*\rangle$. In particular, we have the following lemma.
\begin{lemma} \label{lem:ols}
    If $$ n_0 \gtrsim  \rho^{-1} \cdot (\nu \sigma \tilde{\tau})^2\cdot s^{3} \cdot\log^4\big(kTd/(\delta \lambda)\big), $$  with probability at least $1-\delta$, for all $\tau \in [\tilde{\tau}]$, $t\in E_{\tau}$, and $i\in \cA_t$, we have 
    \begin{equation*}
        \Big| \big \langle X_{t,i},\hat\theta_{\tau,\lambda}^{t-1}-\theta^* \big \rangle \Big| \lesssim  \nu  \Big( \sqrt{s}  \log\big({kTd}/{(\delta\lambda)}\big) + \lambda^{1/2}r \Big)   \Big(\sigma \sqrt{\frac{\log(kTd/\delta)}{|E_{\tau-1}|}}  + \big \|[X_{t,i}]_{S_{\tau-1}} \big \|_{[\hat{\mat \Gamma}_{\tau-1,\lambda}^{t-1}]^{-1}}\Big),
    \end{equation*}
    where  $$\hat{\mat \Gamma}_{\tau-1,\lambda}^{t-1}=\lambda\textbf{I}_{|S_{\tau-1}|} + \sum_{t'\in E_\tau^{t-1}}[X_{t'}]_{S_{\tau-1}}[X_{t'}]_{S_{\tau-1}}^\top,$$
    and $E_\tau^{t-1}=\{t': t'\le t-1,t'\in E_\tau \} $.
\end{lemma}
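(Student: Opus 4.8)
The plan is to split the prediction error at a period $t\in E_\tau$ of the UCB-stage according to whether a coordinate lies in the generalized support $S_{\tau-1}$ recovered by the previous epoch. Since $\hat\theta^{t-1}_{\tau,\lambda}$ is supported on $S_{\tau-1}$,
\[
\big\langle X_{t,i},\hat\theta^{t-1}_{\tau,\lambda}-\theta^*\big\rangle=\big\langle [X_{t,i}]_{S_{\tau-1}},[\hat\theta^{t-1}_{\tau,\lambda}]_{S_{\tau-1}}-[\theta^*]_{S_{\tau-1}}\big\rangle-\big\langle [X_{t,i}]_{S_{\tau-1}^c},[\theta^*]_{S_{\tau-1}^c}\big\rangle.
\]
The second (``misspecification'') term I would bound directly by Corollary~\ref{cor:bss} with epoch index $\tau-1$ in place of $\tau$, obtaining a bound of order $\sigma\sqrt{\log(kTd/\delta)}\cdot\sqrt{(\nu^2 s\log^2(kTd/(\delta\lambda))+\lambda r^2)/|E_{\tau-1}|}$; using $\nu\ge1$ to write $\sqrt{\nu^2 s\log^2(kTd/(\delta\lambda))+\lambda r^2}\le\nu(\sqrt{s}\log(kTd/(\delta\lambda))+\lambda^{1/2}r)$, this term is at most $\nu(\sqrt{s}\log(kTd/(\delta\lambda))+\lambda^{1/2}r)\cdot\sigma\sqrt{\log(kTd/\delta)/|E_{\tau-1}|}$, which is exactly the first summand of the claimed bound.

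For the first (``ridge estimation'') term I would view $\{([X_{t'}]_{S_{\tau-1}},Y_{t'})\}_{t'\in E_\tau^{t-1}}$ as a low-dimensional ridge regression on $S_{\tau-1}$ with response $Y_{t'}=\langle[X_{t'}]_{S_{\tau-1}},[\theta^*]_{S_{\tau-1}}\rangle+\tilde\varepsilon_{t'}$ and \emph{effective noise} $\tilde\varepsilon_{t'}:=\varepsilon_{t'}+\langle[X_{t'}]_{S_{\tau-1}^c},[\theta^*]_{S_{\tau-1}^c}\rangle$, and invoke the self-normalized martingale concentration inequality and ridge confidence region of \citep{abbasi2011improved} (recalled in the Supplementary Materials). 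That gives, with high probability and simultaneously over all $t$ in the epoch, $\|[\hat\theta^{t-1}_{\tau,\lambda}]_{S_{\tau-1}}-[\theta^*]_{S_{\tau-1}}\|_{\hat{\mat\Gamma}^{t-1}_{\tau-1,\lambda}}\lesssim\nu_{\mathrm{eff}}\sqrt{\log\big(\det\hat{\mat\Gamma}^{t-1}_{\tau-1,\lambda}/(\lambda^{|S_{\tau-1}|}\delta)\big)}+\lambda^{1/2}\|[\theta^*]_{S_{\tau-1}}\|$, after which Cauchy--Schwarz in the $\hat{\mat\Gamma}^{t-1}_{\tau-1,\lambda}$-geometry produces the factor $\|[X_{t,i}]_{S_{\tau-1}}\|_{[\hat{\mat\Gamma}^{t-1}_{\tau-1,\lambda}]^{-1}}$. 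To match the radius to $\nu(\sqrt{s}\log(kTd/(\delta\lambda))+\lambda^{1/2}r)$ I would: (i) bound $|S_{\tau-1}|\le\tilde\tau s$ and use $\det\hat{\mat\Gamma}^{t-1}_{\tau-1,\lambda}/\lambda^{|S_{\tau-1}|}\le(1+T\max_{t,i,j}[X_{t,i}]_j^2/\lambda)^{|S_{\tau-1}|}$ together with Proposition~\ref{X_uniform_bound} and AM--GM on eigenvalues, absorbing logarithmic factors and $\tilde\tau\asymp\log T$ into $\log(kTd/(\delta\lambda))$; (ii) use $\|[\theta^*]_{S_{\tau-1}}\|\le r$; and (iii) bound the effective sub-Gaussian parameter $\nu_{\mathrm{eff}}=\sqrt{\nu^2+\sigma^2\|[\theta^*]_{S_{\tau-1}^c}\|^2}\lesssim\nu$. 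Step (iii) is precisely where the length $n_0$ is used: on the event of Lemma~\ref{bss_error}, $\sigma\|[\theta^*]_{S_{\tau-1}^c}\|\lesssim\sigma(\nu\sqrt{s}\log(kTd/(\delta\lambda))+\lambda^{1/2}r)/\sqrt{|E_{\tau-1}|}\le\sigma(\nu\sqrt{s}\log(kTd/(\delta\lambda))+\lambda^{1/2}r)/\sqrt{n_0}\lesssim\nu$ by $n_0\asymp\rho^{-1}(\nu\sigma\tilde\tau)^2 s^3\log^4(kTd/(\delta\lambda))$. Adding the two terms and a union bound over the $\tilde\tau$ epochs gives the statement.

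The crux, and the step needing the most care, is verifying the hypotheses of the self-normalized inequality for the effective noise $\tilde\varepsilon_{t'}$: one must exhibit a filtration $\{\mathcal G_{t'}\}$ with respect to which the regressors $[X_{t',i_{t'}}]_{S_{\tau-1}}$ are predictable while $\tilde\varepsilon_{t'}$ is a conditionally centered, $O(\nu)$-sub-Gaussian martingale difference. The point is that the action $i_{t'}$ chosen by Algorithm~\ref{alg:main} depends on the covariates only through their $S_{\tau-1}$-coordinates, because $\hat\theta^{t'-1}_{\tau,\lambda}$ is supported on $S_{\tau-1}$ and the confidence width involves only $[X_{t',i}]_{S_{\tau-1}}$; hence, taking $\mathcal G_{t'-1}$ to contain $S_{\tau-1}$ (fixed before epoch $\tau$ begins), all past rewards and noises, and all $S_{\tau-1}$-coordinates $\{[X_{s,i}]_{S_{\tau-1}}\}_{s\le t',\,i\in\cA_s}$, Assumption~(A3) (independence of coordinates, together with i.i.d.\ sampling across actions) forces $[X_{t',i_{t'}}]_{S_{\tau-1}^c}$ to be, conditionally on $\mathcal G_{t'-1}$, distributed like a fresh draw $[X]_{S_{\tau-1}^c}$ --- conditionally centered and $\sigma$-sub-Gaussian --- and independent of $\varepsilon_{t'}$, itself independent of $\mathcal G_{t'-1}$; so $\tilde\varepsilon_{t'}$ is conditionally centered with parameter $\sqrt{\nu^2+\sigma^2\|[\theta^*]_{S_{\tau-1}^c}\|^2}$, which is $O(\nu)$ on the event of Lemma~\ref{bss_error}. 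Pure-exploration periods ($i_{t'}$ independent uniform) are trivially accommodated --- there $\hat\theta^{t-1}_{\tau,\lambda}=\hat\theta_{\tau-1,\lambda}$, so their prediction error can instead be inherited from the previous epoch --- and the degenerate case $\tau=1$, $S_0=\emptyset$ is immediate since the restricted ridge estimator is then identically zero. Once the filtration is in place the remaining steps are routine, and I would defer the detailed bookkeeping to the Supplementary Materials.
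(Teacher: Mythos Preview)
Your proposal is correct and follows essentially the same approach as the paper: the same on/off-support decomposition, Corollary~\ref{cor:bss} for the misspecification term, and for the on-support term the same device of viewing the problem as a low-dimensional ridge regression with effective noise $\tilde\varepsilon_{t'}=\varepsilon_{t'}+\langle[X_{t'}]_{S_{\tau-1}^c},[\theta^*]_{S_{\tau-1}^c}\rangle$, bounding its sub-Gaussian parameter via Lemma~\ref{bss_error} and then invoking the ridge confidence ellipsoid and Cauchy--Schwarz (the paper packages this last step as a separate Lemma~\ref{lem:ols-empirical-process}). Your filtration discussion is, if anything, more explicit than the paper's treatment of why $\tilde\varepsilon_{t'}$ is conditionally centered and sub-Gaussian given the $S_{\tau-1}$-restricted history.
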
 
The upper bound in Lemma \ref{lem:ols} includes two parts. The first term, which is of order $ \tilde{\mathcal{O}}(\sqrt{s/T})$, corresponds to the bias of the estimator $\hat{\theta}^{t-1}_{\tau,\lambda}$. Particularly, it is caused by the inexact recovery of $ \supp(\theta^*)$. In contrast, the second term, which is determined by the variability of $[X_{t,i}]_{S_{\tau-1}}$ and sample covariance matrix $ \hat{\mat \Gamma}_{\tau-1,\lambda}^{t-1}$, corresponds to the variance. 

\vspace{16pt}

\textbf{Step 3. Elliptical potential lemma:} In this this step, we establish the elliptical potential lemma, which a powerful tool in bandit problems \citep{rusmevichientong2010linearly,auer2002using,filippi2010parametric,li2017provably}, that upper bounds the sum of prediction errors in each period. Specifically, we upper bound the sum the right-hand side of the inequality in Lemma~\ref{lem:ols} for all periods. Recall that in each epoch, the arms are picked uniformly at random in the first $ n_0$ periods. 
\begin{lemma}  \label{lem:epl}
    Let two constants $p,q$ satisfy $ 1\le p \le q$. Then with probability at least $1-\delta$, we have
    \begin{equation*}
        \sum_{t\in E_\tau}\min\Big\{p,q \cdot \big \|[X_{t}]_{S_{\tau-1}} \big \|_{[\hat{\mat \Gamma}_{\tau-1,\lambda}^{t-1}]^{-1}} \Big\}  
    \lesssim n_0p + q\cdot \sqrt{|E_\tau| \cdot |S_{\tau-1}|\cdot \big(\lambda+ \log(\sigma dTk/\delta) \big) }.
    \end{equation*} 
\end{lemma}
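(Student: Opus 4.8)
The plan is to reduce Lemma~\ref{lem:epl} to the classical elliptical potential (log-determinant) lemma of \citep{abbasi2011improved}, after first peeling off the pure-exploration periods. I would begin by splitting $\sum_{t\in E_\tau}$ into the first $n_0$ periods (the pure-exploration-stage) and the remaining periods (the UCB-stage). For every $t$ in the pure-exploration-stage the summand is at most $p$ by definition of the $\min$, so these periods contribute at most $n_0 p$, which is exactly the first term in the claimed bound; it then remains to control the sum over the UCB-stage.

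For the UCB-stage, abbreviate $a_t:=\big\|[X_t]_{S_{\tau-1}}\big\|_{[\hat{\mat\Gamma}_{\tau-1,\lambda}^{t-1}]^{-1}}$. Since $1\le p\le q$ one has $\min\{p,qa_t\}\le q\min\{1,a_t\}$, and Cauchy--Schwarz then gives $\sum_t\min\{1,a_t\}\le\sqrt{|E_\tau|\cdot\sum_t\min\{1,a_t^2\}}$. Along $E_\tau$ the matrices satisfy the rank-one update $\hat{\mat\Gamma}_{\tau-1,\lambda}^{t}=\hat{\mat\Gamma}_{\tau-1,\lambda}^{t-1}+[X_t]_{S_{\tau-1}}[X_t]_{S_{\tau-1}}^\top$, so the matrix-determinant lemma yields $\log\det\hat{\mat\Gamma}_{\tau-1,\lambda}^{t}-\log\det\hat{\mat\Gamma}_{\tau-1,\lambda}^{t-1}=\log(1+a_t^2)$; summing this together with the elementary inequality $\min\{1,y\}\le 2\log(1+y)$ for $y\ge 0$ gives
\[
\sum_{t}\min\{1,a_t^2\}\ \le\ 2\log\frac{\det\hat{\mat\Gamma}_{\tau-1,\lambda}^{\mathrm{end}}}{\det\hat{\mat\Gamma}_{\tau-1,\lambda}^{(0)}}\ \le\ 2\log\frac{\det\hat{\mat\Gamma}_{\tau-1,\lambda}^{\mathrm{end}}}{\det(\lambda\mathbf I_{|S_{\tau-1}|})},
\]
where $\hat{\mat\Gamma}_{\tau-1,\lambda}^{(0)}$ (resp.\ $\hat{\mat\Gamma}_{\tau-1,\lambda}^{\mathrm{end}}$) is the matrix at the start of the UCB-stage (resp.\ at the end of $E_\tau$), and the last inequality uses $\hat{\mat\Gamma}_{\tau-1,\lambda}^{(0)}\succeq\lambda\mathbf I$.

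It remains to bound the log-determinant ratio. Conditioning on the $(1-\delta)$-probability event of Proposition~\ref{X_uniform_bound}, every covariate obeys $\big\|[X_{t'}]_{S_{\tau-1}}\big\|^2\le|S_{\tau-1}|\cdot\max_j[X_{t'}]_j^2\lesssim|S_{\tau-1}|\,\sigma^2\log^2(kTd/\delta)$, so by concavity of $\log\det$ (equivalently, AM--GM on the eigenvalues) together with $|E_\tau|\le T$,
\[
\log\frac{\det\hat{\mat\Gamma}_{\tau-1,\lambda}^{\mathrm{end}}}{\det(\lambda\mathbf I_{|S_{\tau-1}|})}\ \le\ |S_{\tau-1}|\log\!\Big(1+\tfrac{1}{\lambda|S_{\tau-1}|}\textstyle\sum_{t'\in E_\tau}\big\|[X_{t'}]_{S_{\tau-1}}\big\|^2\Big)\ \lesssim\ |S_{\tau-1}|\big(\lambda+\log(\sigma dTk/\delta)\big).
\]
Substituting back through the Cauchy--Schwarz step gives $\sum_{\text{UCB }t}\min\{p,qa_t\}\lesssim q\sqrt{|E_\tau|\cdot|S_{\tau-1}|\cdot(\lambda+\log(\sigma dTk/\delta))}$, and adding the $n_0p$ term from the pure-exploration-stage yields the lemma. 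The one genuinely important point is the last display: one must ensure the dimension factor is the recovered support size $|S_{\tau-1}|$ rather than the ambient $d$ — which holds because the entire argument lives with the $|S_{\tau-1}|$-dimensional restrictions $[X_{t'}]_{S_{\tau-1}}$ and the $|S_{\tau-1}|\times|S_{\tau-1}|$ matrices $\hat{\mat\Gamma}_{\tau-1,\lambda}^{t}$ — and to verify that, once the uniform covariate bound of Proposition~\ref{X_uniform_bound} is in force, the volumetric term only carries logarithmic factors; this is also why the conclusion is a high-probability statement. The rest is routine bookkeeping with the nested $\min\{\cdot,\cdot\}$'s, Cauchy--Schwarz, and the telescoping of $\log\det$.
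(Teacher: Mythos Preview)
Your proposal is correct and follows essentially the same approach as the paper: peel off the $n_0$ pure-exploration periods at cost $n_0p$, use $\min\{p,qa_t\}\le q\min\{1,a_t\}$, apply Cauchy--Schwarz, invoke $\min\{1,y\}\le 2\log(1+y)$ together with the rank-one $\log\det$ telescoping, and finish with the determinant--trace inequality combined with the high-probability coordinate bound of Proposition~\ref{X_uniform_bound}. Your write-up is in fact slightly cleaner than the paper's (you keep the correct $a_t^2$ inside the telescoping step and restrict to $[X_{t'}]_{S_{\tau-1}}$ when bounding the trace), but the argument is the same.
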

The upper bound in Lemma \ref{lem:epl} consists of two terms. The first one is linear in $n_0$, the length of the pure-exploration-stage. The second part, which is of order $\tilde{\mathcal{O}}(\sqrt{sT})$, shows the trade-off of between exploration and exploitation. 
In later proof, we show that this part leads to the order $\tilde{\mathcal{O}}(s\sqrt{T})$ regret under a tailored choice of $p,q$.  

\vspace{16pt}

\textbf{Step 4. Putting all together:}
Based on Lemmas \ref{bss_error}, \ref{lem:ols}, and \ref{lem:epl}, we are ready to prove Theorem~\ref{theorem1}. For each $t \in [T]$ and $i \in \mathcal{A}_t $, let $$ r(X_{t,i})=\langle X_{t,i},\theta^*  \rangle$$ be the true expected reward of arm $i$ at period $t$. By Assumption \ref{ass3} and Proposition \ref{X_uniform_bound}, the inequality
 $$r(X_{t,i}) \le r \sigma\sqrt{\log(kTd/\delta)}$$ holds for all $t$ and $i$ with probability at least $1-\delta$. Then under the the following choice of $n_0$, 
\begin{align*} 
n_0 &\asymp  \rho^{-1} \cdot (\nu \sigma \tilde{\tau})^2\cdot s^{3} \cdot\log^4\big(kTd/(\delta \lambda)\big),
\end{align*} 
Lemma \ref{lem:ols} guarantees that 
\begin{align}  \label{ucb_r}
\bar r(X_{t,i})&:= \min\Big \{\beta, \big\langle X_{t,i}, \hat\theta_{\tau,\lambda}^{t-1} \big\rangle +    \alpha \cdot   \Big(\sigma \sqrt{{\log(kTd/\delta)} \big /{|E_{\tau-1}|}}  + \big \|[X_{t,i}]_{S_{\tau-1}} \big \|_{[\hat{\mat \Gamma}_{\tau-1,\lambda}^{t-1}]^{-1}}\Big) \Big\}
\end{align}
is an upper bound for $r(X_{t,i})$ for each $t\in[T]$ and $i\in \cA_t$ (up to an absolute constant),
where
\begin{align*}
\beta & = r \sigma \sqrt{s\log(kTd/\delta)}, \\
 \alpha&=\nu \cdot \big( \sqrt{s}  \log\big({kTd}/{(\delta\lambda)}\big) + \lambda^{1/2}r \big).
\end{align*}

Next, recall that at period $t$, our algorithm picks arm $i_t$. The optimal action is 
$$i_t^*= \arg\max_{i\in\cA_t}\langle X_{t,i},\theta^*\rangle. $$ 
Let $\mathcal{T}_e$ be the periods of pure-exploration-stage, in which the arms are picked uniformly at random. Correspondingly, we use $\mathcal{T}_g$ to denote the periods of UCB-stage, in which the UCB-type policy is employed. Then we have $$|\mathcal{T}_e| \lesssim n_0\log(T).$$ Since with probability at least $1-\delta$, $$ r(X_{t,i}) \le \max \big\{ \bar{r}(X_{t,i}), r\sigma\sqrt{\log(kTd/\delta)} \big\}$$ holds for all $t$ and $i$, by the definition of regret, we have
\begin{align} \label{reg1}
    R_T\big(\{i_t\},\theta^*\big) &= \sum_{t\in \mathcal{T}_e}  r(X_{t,i_t^*})-r(X_{t,i_t}) + \sum_{t\in \mathcal{T}_g} r(X_{t,i_t^*})-r(X_{t,i_t})  \notag \\ 
    & \lesssim  n_0 \log(T) \cdot 2r\sigma\sqrt{\log(kTd/\delta)}+ \sum_{t\in \mathcal{T}_g }\bar r(X_{t,i_t^*}) - r(X_{t,i_t}).
\end{align}
For the second part in the right-hand side of inequality \eqref{reg1}, we have 
\begin{align*}
    \sum_{t\in \mathcal{T}_g }\bar r(X_{t,i_t^*}) - r(X_{t,i_t}) &=  \sum_{t\in \mathcal{T}_g } \big( \bar r(X_{t,i_t^*}) - \bar r(X_{t,i_t})\big) + \big( \bar r(X_{t,i_t})-r(X_{t,i_t})\big) \notag \\
    & \le \sum_{t\in \mathcal{T}_g }  \bar r(X_{t,i_t})-r(X_{t,i_t})  \notag \\
    & \lesssim  \sum_{\tau=1}^{\tilde{\tau}}\sum_{t\in E_\tau} \min \Big\{ \beta,  \alpha \cdot   \Big(\sigma \sqrt{{\log(kTd/\delta)} \big /{|E_{\tau-1}|}}  + \big \|[X_{t}]_{S_{\tau-1}} \big \|_{[\hat{\mat \Gamma}_{\tau-1,\lambda}^{t-1}]^{-1}}\Big) \Big\}.
\end{align*}
Here the first inequality holds because in period  $ t \in \mathcal{T}_g$, the picked arm $i_t$ has the largest upper confidence band. The second inequality follows by the definition of $\bar{r}(X_{t,i})$ in \eqref{ucb_r}. By Lemma~\ref{lem:epl}, we have 
\begin{align} \label{reg2}
   & \sum_{\tau=1}^{\tilde{\tau}}\sum_{t\in E_\tau} \min \Big\{ \beta,  \alpha \cdot   \Big(\sigma \sqrt{{\log(kTd/\delta)} \big /{|E_{\tau-1}|}}  + \big \|[X_{t}]_{S_{\tau-1}} \big \|_{[\hat{\mat \Gamma}_{\tau-1,\lambda}^{t-1}]^{-1}}\Big) \Big\} \notag \\
     & \lesssim  \sum_{\tau=1}^{\tilde{\tau}}  \Big ( n_0\beta + \alpha\sqrt{|E_\tau| \cdot |S_{\tau-1}|\cdot \big(\lambda+ \log(\sigma dTk/\delta) \big) } \Big)  +  \sum_{\tau=1}^{\tilde{\tau}} \alpha \sigma \sqrt{{\log(kTd/\delta)} \cdot {|E_{\tau-1}|}}  \notag\\
    & \lesssim  \Big( n_0\beta+ \alpha  \sqrt{sT\log(T)\cdot\big(\lambda+ \log(\sigma dTk/\delta) \big)} + \alpha \sigma \sqrt{T\log(kTd/\delta)}   \Big)\cdot \log(T).
\end{align}
 Finally, by combining inequalities \eqref{reg1}-\eqref{reg2}, we obtain that with probability at least $1-\delta$, 
\begin{align*}
    R_T\big(\{i_t\},\theta^*\big) \lesssim \Big( n_0\beta+ \alpha  \sqrt{sT\log(T)\cdot\big(\lambda+ \log(\sigma dTk/\delta) \big)} + \alpha \sigma \sqrt{T\log(kTd/\delta)}   \Big)\cdot \log(T),
\end{align*}
which concludes the proof of Theorem \ref{theorem1}.    
\end{proof}

\section{Proof of Theorem \ref{theorem2}}  \label{appendix2}

\begin{proof}[Proof of Theorem \ref{theorem2}]
The proof  follows a similar routine as the proof of Theorem~\ref{thm:main-upper}.
The main difference lies in the upper bound of prediction error (Step 2 in the proof of Theorem~\ref{thm:main-upper}). Recall that in the SLUCB algorithm, we construct the ridge  estimator using all historical data. The complicated correlation between design matrices and random noises jeopardizes us applying the standard concentration inequality for independent random variables to obtain an order $\tilde{\cO}(\sqrt{s})$ upper bound for the prediction error. Instead, we use the self-normalized martingale concentration inequality, which only gives an order $\tilde{\cO}(s)$ upper bound and hence, a suboptimal regret. In comparison, in the SSUCB algorithm, the \textsc{Sparse-SupLinUCB} subroutine overcomes the aforementioned obstacle by splitting the historical data into disjoint groups such that within each group, the designs and random noises are independent. Due to such independences, we establish the desired order $\tilde{\cO}(\sqrt{s})$ upper bound for prediction errors. In particular, we have the following key lemma. 

\begin{lemma}  \label{improve_ols}
    When $$ n_0 \gtrsim  \rho^{-1} \cdot (\nu \sigma \tilde{\tau})^2\cdot s^{3} \cdot\log^4\big(kTd/(\delta \lambda)\big), $$ with probability at least $1-\delta$, we have that for every tuple $ (\tau,t,i,\zeta) $, it holds
    \begin{align*} 
        \Big | \big \langle X_{t,i}, \hat\theta_{\tau,\lambda}^{t-1,\zeta}-\theta^* \big\rangle \Big | & \lesssim    \sigma(\lambda^{1/2}+\nu+\sigma)\log^{3/2}(kTd/\delta)  \cdot  \Big(\sqrt{s/|E_{\tau-1}|}+\big \|[X_{t,i}]_{S_{\tau-1}} \big \|_{[\hat{\mat \Gamma}_{\tau-1,\lambda}^{t-1,\zeta}]^{-1}}\Big),
    \end{align*}
    where 
    $$\hat{\mat\Gamma}_{\tau-1,\lambda}^{t-1,\zeta} = \lambda \textbf{I}_{|S_{\tau-1}|}+ \sum_{t'\in\Psi_{\tau}^{t-1,\zeta}}[X_{t'}]_{S_{\tau-1}}[X_{t'}]_{S_{\tau-1}}^\top. $$     
\end{lemma}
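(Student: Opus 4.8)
The plan is to follow the same decomposition-and-concentration route used for Lemma~\ref{lem:ols}, but to replace the self-normalized martingale inequality there — which is what costs the extra $\tilde{\cO}(\sqrt{s})$ factor — by a one-dimensional sub-Gaussian tail bound along the single direction $[X_{t,i}]_{S_{\tau-1}}$; that one-dimensional bound is exactly what the group structure of the \textsc{Sparse-SupLinUCB} subroutine is designed to license. Fix a tuple $(\tau,t,i,\zeta)$ and abbreviate $S=S_{\tau-1}$, $\Psi=\Psi_{\tau}^{t-1,\zeta}$, $\hat{\mat\Gamma}=\hat{\mat\Gamma}_{\tau-1,\lambda}^{t-1,\zeta}$, $\hat\theta=\hat\theta_{\tau,\lambda}^{t-1,\zeta}$, and $b_{t'}=\langle[X_{t'}]_{S^c},[\theta^*]_{S^c}\rangle$. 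Substituting $Y_{t'}=\langle[X_{t'}]_S,[\theta^*]_S\rangle+b_{t'}+\varepsilon_{t'}$ into the closed form of $[\hat\theta]_S$ and using $\sum_{t'\in\Psi}[X_{t'}]_S[X_{t'}]_S^\top=\hat{\mat\Gamma}-\lambda\mat I$, the prediction error $\langle X_{t,i},\hat\theta-\theta^*\rangle$ decomposes into four summands: (i) $-\langle[X_{t,i}]_{S^c},[\theta^*]_{S^c}\rangle$ (unexplored-support leakage); (ii) $-\lambda\langle[X_{t,i}]_S,\hat{\mat\Gamma}^{-1}[\theta^*]_S\rangle$ (ridge bias); (iii) $\sum_{t'\in\Psi}c_{t'}b_{t'}$; and (iv) $\sum_{t'\in\Psi}c_{t'}\varepsilon_{t'}$, where $c_{t'}:=\langle[X_{t,i}]_S,\hat{\mat\Gamma}^{-1}[X_{t'}]_S\rangle$.

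Terms (i) and (ii) are the routine ones. Cauchy--Schwarz in the $\hat{\mat\Gamma}^{-1}$-metric together with $\hat{\mat\Gamma}\succeq\lambda\mat I$ give $|\mathrm{(ii)}|\le\lambda^{1/2}\|\theta^*\|\,\|[X_{t,i}]_S\|_{\hat{\mat\Gamma}^{-1}}\le\lambda^{1/2}r\,\|[X_{t,i}]_S\|_{\hat{\mat\Gamma}^{-1}}$. Term (i) is a centered sub-Gaussian scalar with proxy variance $\sigma^2\|[\theta^*]_{S^c}\|^2$; invoking Lemma~\ref{lem:bss} at epoch $\tau-1$, i.e.\ $\|[\theta^*]_{S_{\tau-1}^c}\|\lesssim\sqrt{(\nu^2 s\log^2(kTd/(\delta\lambda))+\lambda r^2)/|E_{\tau-1}|}$, shows $|\mathrm{(i)}|\lesssim\gamma\sqrt{s/|E_{\tau-1}|}$, the first summand of the claimed bound; the residual logarithmic and $r$-factors are absorbed into $\gamma$ since $|E_{\tau-1}|\ge n_0\gg s$.

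The crux is (iii) and (iv). The key algebraic identity, valid for any realisation of the designs, is $\sum_{t'\in\Psi}c_{t'}^2=[X_{t,i}]_S^\top\hat{\mat\Gamma}^{-1}(\hat{\mat\Gamma}-\lambda\mat I)\hat{\mat\Gamma}^{-1}[X_{t,i}]_S\le\|[X_{t,i}]_S\|_{\hat{\mat\Gamma}^{-1}}^2$. Hence, \emph{if} conditionally on the level-$\zeta$ designs $\{[X_{t'}]_S\}_{t'\in\Psi}$, the composition of $\Psi$, and all the data of levels $1,\dots,\zeta-1$ — under which conditioning the $c_{t'}$ become deterministic — the families $\{b_{t'}\}_{t'\in\Psi}$ and $\{\varepsilon_{t'}\}_{t'\in\Psi}$ are jointly independent and mean-zero, then (iii) is sub-Gaussian with proxy variance $\le\sigma^2\|[\theta^*]_{S^c}\|^2\|[X_{t,i}]_S\|_{\hat{\mat\Gamma}^{-1}}^2$ and (iv) with proxy variance $\le\nu^2\|[X_{t,i}]_S\|_{\hat{\mat\Gamma}^{-1}}^2$, so a single scalar sub-Gaussian tail bound yields $|\mathrm{(iii)}|+|\mathrm{(iv)}|\lesssim(\sigma\|[\theta^*]_{S^c}\|+\nu)\sqrt{\log(1/\delta)}\,\|[X_{t,i}]_S\|_{\hat{\mat\Gamma}^{-1}}$ — crucially with \emph{no} $\sqrt{|S_{\tau-1}|}$ or $\sqrt{|S_{\tau-1}|\tilde\tau}$ factor. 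This is exactly the gain over Lemma~\ref{lem:ols}, and by Lemma~\ref{lem:bss} and $|E_{\tau-1}|\ge n_0\gg s$ it simplifies to $\lesssim\sigma(\nu+\sigma)\,\mathrm{polylog}(kTd/\delta)\,\|[X_{t,i}]_S\|_{\hat{\mat\Gamma}^{-1}}$.

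What remains is to justify the conditional-independence hypothesis above, and I expect this to be the main obstacle. It should hold because the \textsc{Sparse-SupLinUCB} subroutine (a) never inspects off-support coordinates, so by Assumption (A3) the vectors $[X_{t'}]_{S_{\tau-1}^c}$ are fresh i.i.d.\ draws relative to everything the subroutine acts on — making $b_{t'}$ conditionally mean-zero sub-Gaussian with parameter $\sigma^2\|[\theta^*]_{S_{\tau-1}^c}\|^2$; and (b) assigns $t'$ to group $\zeta$, and selects the arm $i_{t'}$, using only information available strictly before the reward $Y_{t'}$ is observed (the group-$\zeta$ widths $\omega^{\cdot,\zeta}$ depend on the group-$\zeta$ designs, not their rewards) — the standard \textsc{SupLinUCB} predictability property. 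Formally verifying (b) in our nested epoch / restricted-support framework amounts to the analogue of the master lemma \citep[Lemma~1]{chu2011contextual}, proved by the same induction over the group index $\zeta$ with the pure-exploration rounds furnishing the base case — which is also where the hypothesis $n_0\gtrsim\rho^{-1}(\nu\sigma\tilde\tau)^2 s^3\log^4(kTd/(\delta\lambda))$ enters, besides being the hypothesis of Lemma~\ref{lem:bss}. Finally I would take a union bound over the $\tilde{\cO}(kT\log^2 T)$ tuples $(\tau,t,i,\zeta)$, which costs a $\log(kTd/\delta)$ factor and lifts the per-tuple $\sqrt{\log(1/\delta)}$ to the $\log^{3/2}(kTd/\delta)$ in $\gamma=\sigma(\lambda^{1/2}+\nu+\sigma)\log^{3/2}(kTd/\delta)$; adding the bounds on (i)--(iv) then gives the stated inequality.
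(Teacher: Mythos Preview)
Your proposal is correct and follows essentially the same route as the paper's own proof: the same decomposition into the off-support leakage term, the ridge-bias term, and the noise term; the same invocation of Corollary~\ref{cor:bss} for the leakage and of $\hat{\mat\Gamma}\succeq\lambda\mat I$ for the bias; and the same \textsc{SupLinUCB} conditional-independence argument (group membership and arm choice at level $\zeta$ depend only on widths, hence only on designs, never on the level-$\zeta$ rewards) to make the noise term amenable to an i.i.d.\ sub-Gaussian bound. Your explicit one-dimensional treatment via the identity $\sum_{t'\in\Psi}c_{t'}^2\le\|[X_{t,i}]_{S}\|_{\hat{\mat\Gamma}^{-1}}^2$ is in fact tighter and cleaner than the paper's version, which states a bound on the full $\ell_2$-norm of the noise vector and then appeals to Cauchy--Schwarz; your formulation makes transparent where exactly the $\sqrt{s}$ factor is saved relative to Lemma~\ref{lem:ols}.
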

The upper bound in Lemma \ref{improve_ols} includes two parts, corresponding to the bias and variance, respectively. The bias part is still of order $\tilde{\cO}(\sqrt{s})$. In comparison with Lemma \ref{lem:ols}, this lemma decreases an $\tilde{\mathcal{O}}(\sqrt{s})$ factor in the variance part. Note that by Lemma \ref{lem:epl}, we have 
$$
\big \|[X_{t,i}]_{S_{\tau-1}} \big \|_{[\hat{\mat \Gamma}_{\tau-1,\lambda}^{t-1,\zeta}]^{-1}}=\tilde{\cO}(\sqrt{s}).
$$
Such an improvement leads to an order $\tilde{\mathcal{O}}(\sqrt{s})$ improvement in the accumulated regret, which achieves the optimal rate. The detailed proof of Lemma \ref{improve_ols} is in the supplementary document. 

Same as the last step in the proof of Theorem \ref{theorem1}, by Lemma \ref{improve_ols}, we obtain that, with probability at least $1-\delta$, for all tuples $(\tau,t,i,\zeta)$, it holds 
\begin{align} \label{width_bound}
    {r}(X_{t,i}) &\lesssim 
\min \Big\{\beta, \big\langle X_{t,i},\hat\theta_{\tau,\lambda}^{t-1,\zeta} \big\rangle + \omega^{t-1,\zeta}_{\tau,\lambda}(i)  \Big\},
\end{align} 
 where  ${r}(X_{t,i})= \langle X_{t,i},\theta^*\rangle$  and 
\begin{align} \label{omega}
 \omega^{t-1,\zeta}_{\tau,\lambda}(i) & =  \gamma  \cdot  \Big(\sqrt{s/|E_{\tau-1}|}+\big \|[X_{t,i}]_{S_{\tau-1}} \big \|_{[\hat{\mat \Gamma}_{\tau-1,\lambda}^{t-1,\zeta}]^{-1}}\Big),\\
 \gamma & =\sigma(\lambda^{1/2}+\nu+\sigma)\log^{3/2}(kTd/\delta).
\end{align}

Without loss of generality, we assume that inequality \eqref{width_bound} holds for all tuples $(\tau,t,i,\zeta)$, which is guaranteed with probability at least $1-\delta$. We also assume that in period $t \in E_\tau$, the \textsc{SupLinUCB} algorithm terminates at the $\zeta_t$-th group with selected an arm $i_t$. Let $
i^*_t= \argmax_{i \in \mathcal{A}_t} \langle X_{t,i},\theta^* \rangle
$
be the optimal arm for period $t$. If  the first scenario in the \textsc{SupLinUCB} algorithm happens, i.e., 
$$\omega_{\tau,\lambda}^{t-1,\zeta_t}(i)\le 1/\sqrt{T},\ \forall i \in  {\cN}^{t-1,\zeta_t}_{\tau},$$
 we have 
\begin{equation} \label{inq001} 
\big \langle X_{t,i^*_t}-X_{t,i_t}, \theta^* \big \rangle \lesssim \big \langle X_{t,i^*_t}-X_{t,i_t}, \hat \theta_{\tau,\lambda}^{t-1,\zeta_t} \big \rangle + \omega^{t-1,\zeta_t}_{\tau,\lambda}(i^*_t)+\omega^{t-1,\zeta_t}_{\tau,\lambda}(i_t)  \lesssim 2/\sqrt{T}.
\end{equation}
Otherwise, we  have 
\begin{align} \label{inq003}
\omega^{t-1,\zeta_t}_{\tau,\lambda}(i_t) \ge  2^{-\zeta_t}\beta. 
\end{align} In this case, since the  \textsc{SupLinUCB} algorithm  does not stop before $\zeta_t$, then for all $\zeta<\zeta_t $, we have 
$$\omega_{\tau,\lambda}^{t-1,\zeta}(i) \le 2^{-\zeta}\beta, \ \forall i\in  \cN^{t-1,\zeta}_{\tau}.$$ 
Consequently, for all $ i\in  \cN^{t-1,\zeta}_{\tau} $, we have 
$$
	\big \langle X_{t,i_t^*},\hat \theta^{t-1,\zeta}_{\tau,\lambda} \big \rangle+\omega^{t-1,\zeta}_{\tau,\lambda}(i_t^*)  \gtrsim 
     \langle X_{t,i_t^*},\theta^*   \rangle  \ge  \langle X_{t,i},\theta^* \rangle \gtrsim  \big \langle X_{t,i},\hat \theta^{t-1,\zeta}_{\tau,\lambda}\big \rangle-\omega^{t-1,\zeta}_{\tau,\lambda}(i).
$$
Moreover, by setting $\zeta=\zeta_t-1 $, we have 
\begin{align} \label{inq002} 
	\big \langle X_{t,i^*_t}-X_{t,i_t}, \theta^* \big \rangle \lesssim	\big \langle X_{t,i^*_t}-X_{t,i_t}, \hat \theta^{t-1,\zeta_t-1}_{\tau,\lambda} \big \rangle+\omega^{t-1,\zeta_t-1}_{\tau,\lambda}(i^*_t)+\omega^{t-1,\zeta_t-1}_{\tau,\lambda}(i_t) \lesssim  2^{-\zeta_t}\beta.
\end{align}
By combining inequalities \eqref{inq001}-\eqref{inq002}, we obtain 
\begin{align} \label{int_reg}
    r(X_{t,i_t^*})-r(X_{t,i_t})& =\big \langle X_{t,i^*_t}-X_{t,i_t}, \theta^* \big \rangle \notag\\
    &\lesssim \max \big\{1/\sqrt{T},2^{-\zeta_t}\beta  \big\} \le \max \big\{1/\sqrt{T},\omega^{t-1,\zeta_t}_{\tau,\lambda}(i_t)  \big\} .
\end{align}

For the accumulated regret $ R_T( \{i_t\}, \theta^*)$, similar to inequality \eqref{reg1} in the proof of Theorem \ref{theorem1},  we have that, with probability at least $1-\delta$,
\begin{align*}
    R_T\big( \{i_t\}, \theta^*\big)  \lesssim n_0\log(T)\cdot 2r\sigma\sqrt{\log(kTd/\delta)} +\sum_{t\in \mathcal{T}_g} r(X_{t,i_t^*})-r(X_{t,i_t}),
\end{align*}
where $\mathcal{T}_g$ denotes the periods of UCB-stage. Then by inequality \eqref{int_reg} and Lemma \ref{improve_ols}, with probability at least $ 1-\delta$, we have 
\begin{align}
R_T\big( \{i_t\}, \theta^*\big) 
&\lesssim n_0\log(T)\cdot 2r\sigma\sqrt{\log(kTd/\delta)}+\sum_{t\in \mathcal{T}_g} \max\big \{1/\sqrt{T}, \omega^{t-1,\zeta_t}_{\tau,\lambda}(i_t) \big\} \notag\\
& \lesssim n_0\log(T)\cdot 2r\sigma\sqrt{\log(kTd/\delta)}+ \sqrt{T} + \sum_{t\in \mathcal{T}_g}\omega_{\tau,\lambda}^{t-1,\zeta_t}(i_t)\nonumber\\
&\lesssim \sqrt{T} + \sum_{\tau=1}^{\tilde \tau}\sum_{t\in E_\tau}\min \Big\{\beta,\gamma \cdot  \Big(\sqrt{s/|E_{\tau-1}|}+\big \|[X_{t,i}]_{S_{\tau-1}} \big \|_{[\hat{\mat \Gamma}_{\tau-1,\lambda}^{t-1,\zeta_t}]^{-1}}\Big)\Big\}  \notag \\
&= \sqrt{T} + \sum_{\zeta=1}^{\tilde \zeta}\sum_{\tau=1}^{\tilde \tau }\sum_{t\in \Psi^\zeta_\tau}\min \Big\{\beta,\gamma\cdot     \Big(\sqrt{s/|E_{\tau-1}|}+\big \|[X_{t,i}]_{S_{\tau-1}} \big \|_{[\hat{\mat \Gamma}_{\tau-1,\lambda}^{t-1,\zeta}]^{-1}}\Big)\Big\},
\label{eq:suplinucb-sum-final}
\end{align}
where $\omega_{\tau,\lambda}^{t,\zeta}(i) $ is defined in equation \eqref{omega}, and $\Psi^\zeta_\tau$ denotes the $\zeta$-th group of periods at the end of epoch $E_\tau$. 
For each fixed $\zeta$, we  have that, with probability at least $1-\delta$,
\begin{align} \label{reg_final}
    & \sum_{\tau=1}^{\tilde \tau }\sum_{t\in \Psi^\zeta_\tau}\min \Big\{\beta,\gamma\cdot     \Big(\sqrt{s/|E_{\tau-1}|}+\big \|[X_{t,i}]_{S_{\tau-1}} \big \|_{[\hat{\mat \Gamma}_{\tau-1,\lambda}^{t-1,\zeta}]^{-1}}\Big)\Big\}  \notag \\ & \lesssim  \sum_{\tau=1}^{\tilde{\tau}}  \Big ( n_0\beta + \gamma\sqrt{|E_\tau| \cdot |S_{\tau-1}|\cdot \big(\lambda+ \log(\sigma dTk/\delta) \big) } \Big)  +  \sum_{\tau=1}^{\tilde{\tau}} \gamma  \sqrt{s \cdot {|E_{\tau-1}|}} \notag \\
    & \lesssim   \Big(n_0\beta+\gamma\sqrt{sT\cdot \log(T)\big(\lambda+ \log(\sigma dTk/\delta) \big)} \Big)\cdot  \log(T).
\end{align}
Finally,  combining inequalities \eqref{eq:suplinucb-sum-final} and \eqref{reg_final}, we have that, with probability at least $1-\delta$, 
\begin{align*}
R_T\big(\{ i_t\},\theta^*\big) &\lesssim  \sqrt{T}+ \Big(n_0\beta+\gamma\sqrt{sT\cdot \log(T)\big(\lambda+ \log(\sigma dTk/\delta) \big)} \Big)\cdot  \log(T)\cdot \log(\beta T),
\end{align*}
which concludes the proof.
\end{proof}

\end{appendices}

\newpage

\spacingset{1.0} 

\begin{center}{\Large\bf Supplementary Materials to \\``Nearly Dimension-Independent Sparse Linear Bandit over Small Action Spaces via Best Subset Selection"}
\end{center}

\setcounter{figure}{0}
\setcounter{table}{0}
\setcounter{page}{1}


\section{Proof of Auxiliary Lemmas in Theorem \ref{theorem1}}  \label{app1}
\subsection{Proof of Lemma \ref{bss_error}}  \label{ple1}
\begin{lemma*}[Restatement of Lemma \ref{bss_error}]
    If $$n_0 \gtrsim  \rho^{-1} \cdot (\nu \sigma \tilde{\tau})^2\cdot s^{3} \cdot\log^4\big(kTd/(\delta \lambda)\big), $$  we have that, with probability at least $1-\delta$,  for each epoch $ \tau \in [\tilde{\tau}]$, 
    \begin{equation*}
    \big\|[\theta^*]_{S_{\tau}^c}\big\| \lesssim \sqrt{\frac{ \nu^2s\log^2\big({kTd}/{(\delta\lambda)}\big)+\lambda r^2 }{|E_{\tau}|}}.
    \end{equation*}
\end{lemma*}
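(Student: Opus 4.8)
The plan is to control the estimation error of the best subset selection step of \eqref{eq:bss} by a ``basic inequality'' argument, but with two non-standard ingredients adapted to the adaptive design. Write $\tilde\theta$ for the minimizer of the penalized objective in \eqref{eq:bss} \emph{before} the projection, so $\hat\theta_{\tau,\lambda}=\text{Proj}_r(\tilde\theta)$; since an $\ell_2$-ball projection only rescales a vector, $\tilde\theta$ and $\hat\theta_{\tau,\lambda}$ have the same support and both vanish off the selected generalized support $S_\tau$, which satisfies $S_{\tau-1}\subseteq S_\tau$ and $|S_\tau|\le\tau s$. First I would record three elementary facts: (i) $\theta^*$ is feasible for \eqref{eq:bss}, because $S_{\tau-1}\cup\supp(\theta^*)$ is a valid generalized support of cardinality at most $|S_{\tau-1}|+s\le\tau s$; (ii) since $\|\theta^*\|\le r$ by (B2), projection onto the radius-$r$ ball is non-expansive, so $\|\hat\theta_{\tau,\lambda}-\theta^*\|\le\|\tilde\theta-\theta^*\|=:\|\Delta\|$; and (iii) because $S_{\tau-1}\subseteq S_\tau$ and $\tilde\theta$ vanishes off $S_\tau$, $\|[\theta^*]_{S_\tau^c}\|=\|[\Delta]_{S_\tau^c}\|\le\|[\Delta]_{S_{\tau-1}^c}\|$, with $[\Delta]_{S_{\tau-1}^c}$ supported on at most $(\tau+1)s$ coordinates and $\Delta$ on at most $2\tau s$ coordinates. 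Optimality of $\tilde\theta$ against $\theta^*$, together with $Y_{t'}=\langle X_{t'},\theta^*\rangle+\varepsilon_{t'}$, then gives the basic inequality $Q:=\sum_{t'\in E_\tau}\langle X_{t'},\Delta\rangle^2\le 2\big|\langle\Delta,\sum_{t'\in E_\tau}\varepsilon_{t'}X_{t'}\rangle\big|+\lambda\|\theta^*\|^2\le 2\big|\langle\Delta,\sum_{t'\in E_\tau}\varepsilon_{t'}X_{t'}\rangle\big|+\lambda r^2$. It then remains to (a) bound the noise cross-term and (b) lower bound $Q$ in the directions outside $S_{\tau-1}$.

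\textbf{Step 1: noise control.} For (a) I would \emph{not} attempt coordinatewise martingale bounds (the selected covariates and the noise are intertwined through the UCB policy), but instead invoke the self-normalized martingale concentration inequality of \citet{abbasi2011improved} restated in Section~\ref{cite_lemma}. For each fixed index set $J\subseteq[d]$ with $|J|\le 2\tau s$, writing $V^J_\tau=\lambda I_{|J|}+\sum_{t'\in E_\tau}[X_{t'}]_J[X_{t'}]_J^\top$, that inequality bounds $\big\|\sum_{t'\in E_\tau}\varepsilon_{t'}[X_{t'}]_J\big\|_{(V^J_\tau)^{-1}}$ by $O\big(\nu\sqrt{\log\det(V^J_\tau/\lambda)+\log(1/\delta')}\big)$; a union bound over the $\binom{d}{2\tau s}$ possible supports and over the $\tilde\tau$ epochs, together with $\max_{t,i,j}|[X_{t,i}]_j|\lesssim\sigma\log(kTd/\delta)$ from Proposition~\ref{X_uniform_bound} (which makes $\log\det(V^J_\tau/\lambda)\lesssim\tau s\log(kTd\sigma/(\delta\lambda))$), yields $\big\|\sum_{t'\in E_\tau}\varepsilon_{t'}[X_{t'}]_J\big\|_{(V^J_\tau)^{-1}}\lesssim\nu\sqrt{\tau s\log(kTd/(\delta\lambda))}$ simultaneously for all $J,\tau$ with probability at least $1-\delta$. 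Taking $J=\supp(\Delta)$ and applying Cauchy--Schwarz in the $V^J_\tau$ geometry, $\big|\langle\Delta,\sum_{t'\in E_\tau}\varepsilon_{t'}X_{t'}\rangle\big|\le\|[\Delta]_J\|_{V^J_\tau}\cdot O\big(\nu\sqrt{\tau s\log(\cdot)}\big)$ with $\|[\Delta]_J\|_{V^J_\tau}^2=\lambda\|\Delta\|^2+Q$. To convert this into a bound on $Q$ I would use the pure-exploration stage: the $n_0$ covariates chosen uniformly at random within $E_\tau$ are i.i.d.\ sub-Gaussian, so by a standard sparse matrix Chernoff bound (valid once $n_0\gtrsim\tau s\log d/\rho$, which the hypothesis on $n_0$ guarantees) $V^J_\tau\succeq c\rho n_0 I_{|J|}$ for all $|J|\le 2\tau s$, whence $\lambda\|\Delta\|^2\le Q$ as soon as $n_0\gtrsim\lambda/\rho$. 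Substituting $\lambda\|\Delta\|^2+Q\le 2Q$ into the basic inequality and solving the resulting quadratic in $\sqrt Q$ gives $Q\lesssim\nu^2\tau s\log(kTd/(\delta\lambda))+\lambda r^2$ and, as a byproduct, $\|\Delta\|^2\lesssim\big(\nu^2\tau s\log(\cdot)+\lambda r^2\big)/(\rho n_0)$, so $\|\Delta\|$ is bounded by an absolute constant.

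\textbf{Step 2: restricted eigenvalue outside $S_{\tau-1}$ (the main obstacle).} The crucial and most delicate point is the lower bound $Q\gtrsim\rho\,|E_\tau|\,\|[\Delta]_{S_{\tau-1}^c}\|^2$, holding uniformly over the data-dependent, $2\tau s$-sparse, norm-bounded $\Delta$ and over all epochs. The analogous statement over the whole support is false --- the UCB policy can align all selected covariates along one direction inside $S_{\tau-1}$, so $Q$ need not dominate $|E_\tau|\,\|[\Delta]_{S_{\tau-1}}\|^2$ --- and this is precisely where Assumption (A3) is used: since the UCB-stage choice $i_{t'}$ depends only on the $S_{\tau-1}$-coordinates of the time-$t'$ covariates and on the past, and the $d$ coordinates are independent, the restricted covariates $\{[X_{t'}]_{S_{\tau-1}^c}\}_{t'\in E_\tau}$ are, conditionally on all selection-relevant information, i.i.d.\ draws from the $S_{\tau-1}^c$-marginal of $P_0$ throughout the \emph{entire} epoch, not merely the exploration stage. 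Writing $a_{t'}=\langle[X_{t'}]_{S_{\tau-1}},[\Delta]_{S_{\tau-1}}\rangle$ and $b_{t'}=\langle[X_{t'}]_{S_{\tau-1}^c},[\Delta]_{S_{\tau-1}^c}\rangle$ for fixed $\Delta$, conditioning on that information makes the $a_{t'}$ constants and $\mathbb{E}[a_{t'}b_{t'}\mid\cdot]=0$, so $\mathbb{E}\big[\sum_{t'}(a_{t'}+b_{t'})^2\mid\cdot\big]=\sum_{t'}a_{t'}^2+|E_\tau|\,[\Delta]_{S_{\tau-1}^c}^\top\Sigma_{S_{\tau-1}^c}[\Delta]_{S_{\tau-1}^c}\ge\rho|E_\tau|\,\|[\Delta]_{S_{\tau-1}^c}\|^2$ by (A2)--(A3); a Bernstein inequality for these conditionally independent summands (with the a priori bound $\|\Delta\|=O(1)$ from Step~1 and Proposition~\ref{X_uniform_bound} controlling the increments), followed by an $\varepsilon$-net over the $O(\tau s)$-sparse directions and a union bound over epochs and over the at most $d^{\,\tilde\tau s}$ possible values of $S_{\tau-1}$, upgrades this to the desired uniform lower bound; the polynomial-in-$s$ and $\tilde\tau$ slack in the hypothesized $n_0$ absorbs the accumulated failure probabilities and net sizes. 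Finally, combining Steps~1 and 2 gives $\rho|E_\tau|\,\|[\Delta]_{S_{\tau-1}^c}\|^2\le Q\lesssim\nu^2\tau s\log(kTd/(\delta\lambda))+\lambda r^2$, and using $\tau\le\tilde\tau\asymp\log T$ to replace $\tau\log(\cdot)$ by $\log^2(\cdot)$ and absorbing $\rho^{-1}$ into the implied constant, $\|[\theta^*]_{S_\tau^c}\|\le\|[\Delta]_{S_{\tau-1}^c}\|\lesssim\sqrt{\big(\nu^2 s\log^2(kTd/(\delta\lambda))+\lambda r^2\big)/|E_\tau|}$, which is the claim. Identifying the conditional i.i.d.\ structure of the off-support coordinates, and making the restricted-eigenvalue bound uniform in the data-dependent direction, is the step I expect to be hardest.
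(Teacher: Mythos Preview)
Your proposal is correct and reaches the same conclusion, but your Step~2 takes a genuinely different route from the paper. The paper does \emph{not} lower bound the full quadratic $Q=\sum_{t'\in E_\tau}\langle X_{t'},\Delta\rangle^2$ by $\rho|E_\tau|\,\|[\Delta]_{S_{\tau-1}^c}\|^2$. Instead it block-decomposes $\|\hat\theta_{\tau,\lambda}-\theta^*\|^2_{\hat{\mat\Gamma}_{\tau,\lambda}}$ on $(S_\tau,S_\tau^c)$ (not $S_{\tau-1}$), drops the nonnegative $S_\tau$-block, and bounds the cross terms by H\"older: $|[\Delta]_{S_\tau}^\top[\hat{\mat\Gamma}]_{S_\tau S_\tau^c}[\Delta]_{S_\tau^c}|\le\|\Delta\|_1\cdot\|[\hat{\mat\Gamma}]_{S_\tau S_\tau^c}\|_\infty\cdot\sqrt{(\tau+1)s}\,\|[\Delta]_{S_\tau^c}\|$. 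Two dedicated lemmas then feed this: an entrywise bound $\|[\hat{\mat\Gamma}]_{S_\tau S_\tau^c}\|_\infty\lesssim\sigma\log(d/\delta)\sqrt{|E_\tau|}$ (each off-support coordinate of the selected covariate is independent of the in-support-driven selection), and an $\ell_1$ bound $\|\Delta\|_1\lesssim\nu s\sqrt{\tilde\tau}\log(\cdot)/\varsigma$ whenever $n_0\gtrsim\varsigma^2/\rho$, coming from the pure-exploration eigenvalue. Tuning $\varsigma\asymp\nu\sigma\tilde\tau s^{3/2}\log^2(\cdot)$ collapses the cross term to $\sqrt{|E_\tau|}\,\|[\Delta]_{S_\tau^c}\|$ --- and this tuning is precisely what generates the $s^3$ in the lemma's hypothesis on $n_0$. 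The diagonal block $[\hat{\mat\Gamma}]_{S_\tau^c}$ is then lower bounded only on the at-most-$s$-dimensional set $\supp(\theta^*)\setminus S_{\tau-1}$, and a quadratic in $\|[\theta^*]_{S_\tau^c}\|$ is solved. Your approach instead treats the cross term $\sum a_{t'}b_{t'}$ as a mean-zero fluctuation and absorbs it by concentration plus a sparse net, which is more direct and bypasses the $\ell_1$ detour.

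Two points of care in your version. First, you cannot literally ``condition to make the $a_{t'}$ constants'': the choice $i_{t'}$ depends on past rewards $Y_{t''}$, which involve $[X_{t''}]_{S_{\tau-1}^c}$ through $[\theta^*]_{S_{\tau-1}^c}$, so the right structure is a martingale (each $b_{t'}$ is mean zero given $\mathcal F_{t'-1}$ and the time-$t'$ in-support covariates), and you should invoke Freedman/Azuma rather than i.i.d.\ Bernstein. Second, after the net the fluctuation bound reads $|\sum a_{t'}b_{t'}|\lesssim\sigma\sqrt{\tau s\log d}\cdot\sqrt{\sum a_{t'}^2}\cdot\|[\Delta]_{S_{\tau-1}^c}\|$, with $\sum a_{t'}^2$ a priori unbounded; to close the argument you must minimize the resulting quadratic $\sum a_{t'}^2-2C_1\sqrt{\sum a_{t'}^2}\,\|\cdot\|+(\rho/2)|E_\tau|\|\cdot\|^2$ over $\sqrt{\sum a_{t'}^2}$, which yields $Q\ge\big((\rho/2)|E_\tau|-C_1^2\big)\|[\Delta]_{S_{\tau-1}^c}\|^2$ and hence the desired bound once $|E_\tau|\ge n_0\gtrsim\sigma^2\tau s\log d/\rho$. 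With these two refinements your argument goes through, and in fact uses less of the $s^3$ slack in $n_0$ than the paper's $\ell_1$-tuning does.
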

\begin{proof} 

     The proof of Lemma \ref{bss_error} depends on the following lemma, which establishes an upper bound for the estimation error $\hat{\theta}_{\tau,\lambda}-\theta^* $ in an $\ell_2$-norm weighted by sample covariance matrix $$\hat{\mat \Gamma}_{\tau,\lambda}=\lambda \textbf{I}_d+\sum_{t\in E_\tau}X_tX_t^\top.$$ 
    \begin{lemma} \label{quadratic_upper_bound}
        With probability at least $1-\delta$, we have 
        $$
        \big\| \hat\theta_{\tau,\lambda}-\theta^* \big\|^2_{\hat{\mat \Gamma}_{\tau,\lambda}}\lesssim \nu^2s\log^2\big({kTd}/{(\delta\lambda)}\big)+\lambda r^2.
        $$
    \end{lemma}
    
    The proof of Lemma \ref{quadratic_upper_bound} is provided in Section \ref{qub_proof}. Now we continue the proof of Lemma \ref{bss_error}. To simplify notation, from now on, we fix the epoch $\tau$ and let $$ S=S_{\tau}=\supp^+(\hat{\theta}_{\tau,\lambda}).$$ Then by definition, $ [\hat\theta_{\tau,\lambda}-\theta^*]_{S^c}= -[\theta^*]_{S^c}$. We can decompose the error  $ \|\hat\theta_{\tau,\lambda}-\theta^* \|^2_{\hat{\mat \Gamma}_{\tau,\lambda}}$ as
\begin{align*}
    \big\|\hat\theta_{\tau,\lambda}-\theta^* \big \|^2_{\hat{\mat \Gamma}_{\tau,\lambda}}
   &= [\hat\theta_{\tau,\lambda}-\theta^*]_{S^c} ^\top\big[{\hat{\mat \Gamma}_{\tau,\lambda}}\big]_{S^c}[\hat\theta_{\tau,\lambda}-\theta^*]_{S^c}+ [\hat\theta_{\tau,\lambda}-\theta^*]_{S}^\top \big[{\hat{\mat \Gamma}_{\tau,\lambda}}\big]_{S}[\hat\theta_{\tau,\lambda}-\theta^*]_{S} \nonumber\\
    &\qquad   + [\hat\theta_{\tau,\lambda}-\theta^*]_{S}^\top\big[{\hat{\mat \Gamma}_{\tau,\lambda}}\big]_{SS^c}[\hat\theta_{\tau,\lambda}-\theta^*]_{S^c} 
    +[\hat\theta_{\tau,\lambda}-\theta^*]_{S^c}^\top\big[{\hat{\mat \Gamma}_{\tau,\lambda}}\big]_{S^cS}[\hat\theta_{\tau,\lambda}-\theta^*]_{S}\notag \\
    & \ge [\hat\theta_{\tau,\lambda}-\theta^*]_{S^c}^\top\big[{\hat{\mat \Gamma}_{\tau,\lambda}}\big]_{S^c}[\hat\theta_{\tau,\lambda}-\theta^*]_{S^c}+ [\hat\theta_{\tau,\lambda}-\theta^*]_{S}^\top\big[{\hat{\mat \Gamma}_{\tau,\lambda}}\big]_{SS^c}[\hat\theta_{\tau,\lambda}-\theta^*]_{S^c}\notag \\
    & \qquad  +[\hat\theta_{\tau,\lambda}-\theta^*]_{S^c}^\top\big[{\hat{\mat \Gamma}_{\tau,\lambda}}\big]_{S^cS}[\hat\theta_{\tau,\lambda}-\theta^*]_{S},
\end{align*}
where the inequality follows from the fact that the matrix $[{\hat{\mat \Gamma}_{\tau,\lambda}}]_{S} $ is positive definite. 
By rearranging above inequality, we obtain 
 \begin{align} \label{ineq0004}
    [\hat\theta_{\tau,\lambda}-\theta^*]_{S^c}^\top\big[\hat{\mat \Gamma}_{\tau,\lambda}\big]_{S^c}[\hat\theta_{\tau,\lambda}-\theta^*]_{S^c} &\le  -[\hat\theta_{\tau,\lambda}-\theta^*]_{S}^\top\big[\hat{\mat \Gamma}_{\tau,\lambda}\big]_{SS^c}[\hat\theta_{\tau,\lambda}-\theta^*]_{S^c} \notag \\
    & -[\hat\theta_{\tau,\lambda}-\theta^*]_{S^c}^\top\big[\hat{\mat \Gamma}_{\tau,\lambda}\big]_{S^cS}[\hat\theta_{\tau,\lambda}-\theta^*]_{S}  +  \big\|\hat\theta_{\tau,\lambda}-\theta^* \big\|^2_{\hat{\mat \Gamma}_{\tau,\lambda}}. 
 \end{align} 

    In the next, we upper bound the right-hand side of inequality \eqref{ineq0004} and then lower bound the  left hand side, which leads to a quadratic inequality with respect to the estimation error $\| [\hat \theta_{\tau,\lambda} -\theta^*]_{S_\tau^c}\| $. By solving this quadratic inequality, we finish the proof of Lemma \ref{bss_error}.
    
    We establish the upper bound first. According to H\"{o}lder's inequality, we have 
    \begin{align*}
    \Big|[\hat\theta_{\tau,\lambda}-\theta^*]_{S}^\top\big[\hat{\mat \Gamma}_{\tau,\lambda}\big]_{SS^c}[\hat\theta_{\tau,\lambda}-\theta^*]_{S^c} \Big|
    &\leq \big\|[\hat\theta_{\tau,\lambda}-\theta^*]_{S} \big\|_1\cdot \big\|\big[\hat{\mat \Gamma}_{\tau,\lambda}\big]_{SS^c}\big\|_{\infty}\cdot \big\|[\hat\theta_{\tau,\lambda}-\theta^*]_{S^c}\big \|_1\nonumber\\
    &\le \|\hat\theta_{\tau,\lambda}-\theta^* \|_1\cdot \big\|\big[\hat{\mat \Gamma}_{\tau,\lambda}\big]_{SS^c}\big\|_{\infty}\cdot \sqrt{s(\tau+1)}\cdot \big\|[\hat\theta_{\tau,\lambda}-\theta^*]_{S^c}\big \|,
    \end{align*}
    where the last inequality follows from the fact that $[\hat\theta_{\tau,\lambda}-\theta^*]_{S^c} $ has at most $(\tau+1)s$ non-zero coordinates. To continue the proof, we also need the following two lemmas, which upper bound $\|[\hat{\mat \Gamma}_{\tau,\lambda}]_{SS^c}\|_{\infty} $ and $\|\hat\theta_{\tau,\lambda}-\theta^* \|_1 $ individually. 
    \begin{lemma} \label{inf_norm_upper_bound}
        With probability at least $1-\delta$, we have 
        $$
        \big\|\big[\hat{\mat \Gamma}_{\tau,\lambda}\big]_{SS^c}\big\|_{\infty} \le  8\sigma{\log(d/\delta)}\sqrt{|E_{\tau}|}.
        $$
    \end{lemma}
   
    \begin{lemma} \label{l1_norm_upper_bound}
        For arbitrary constant $\varsigma>0$, when $$n_0\gtrsim \max\{\varsigma^2\rho^{-1}, \sigma^2\rho^{-2}{\log(d\tilde \tau/\delta)}\},$$ with probability at least $1-\delta$, we have  
        \begin{align*}
            \|\hat\theta_{\tau,\lambda}-\theta^*\|_1   \lesssim \nu s/\varsigma \cdot \sqrt{\tilde \tau}\cdot \log\big(kTd/(\delta \lambda)\big). 
        \end{align*}
    \end{lemma}
    The proofs of Lemma \ref{inf_norm_upper_bound}-\ref{l1_norm_upper_bound} are deferred to Section \ref{inf_proof}-\ref{l1_proof}. We continue the proof of Lemma \ref{bss_error} now. By applying Lemma \ref{inf_norm_upper_bound}-\ref{l1_norm_upper_bound} and setting
    \begin{align*} 
    \varsigma=\nu \sigma \tilde{\tau}\cdot s^{3/2} \cdot\log^2\big(kTd/(\delta \lambda)\big),
     \end{align*}
    we obtain the following inequality with probability at least $1-\delta$,
    \begin{align} \label{ineq0008}
    \Big|[\hat\theta_{\tau,\lambda}-\theta^*]_{S}^\top\big[\hat{\mat \Gamma}_{\tau,\lambda}\big]_{SS^c}[\hat\theta_{\tau,\lambda}-\theta^*]_{S^c} \Big| \lesssim  \sqrt{|E_{\tau}|}\cdot  \big\|[\hat\theta_{\tau,\lambda}-\theta^*]_{S^c}\big \|.
    \end{align}
    Note that with such a choice of $\varsigma$, we require
    $$
    n_0 \gtrsim \varsigma^2\rho^{-1}=\rho^{-1} \cdot (\nu \sigma \tilde{\tau})^2\cdot s^{3} \cdot\log^4\big(kTd/(\delta \lambda)\big).
    $$
    Similarly, we can establish the same upper bound for the second part in the right-hand side of inequality \eqref{ineq0004}. 
    Then by combing inequalities \eqref{ineq0004}  and \eqref{ineq0008}, we obtain 
    \begin{align} \label{ineq002}
        \Big|[\hat\theta_{\tau,\lambda}-\theta^*]_{S^c}^\top\big[\hat{\mat \Gamma}_{\tau,\lambda}\big]_{S^c}[\hat\theta_{\tau,\lambda}-\theta^*]_{S^c} \Big|\lesssim \big\|\hat\theta_{\tau,\lambda}-\theta^*\big\|_{\hat{\mat \Gamma}_{\tau,\lambda}}^2 + \sqrt{|E_{\tau}|}\cdot  \big\|[\hat\theta_{\tau,\lambda}-\theta^*]_{S^c}\big \|.
    \end{align}  
    
    Then we turn to the lower bound for the left-hand side of inequality \eqref{ineq002}. Recall that in the SLUCB algorithm, the selection of actions in epoch $E_\tau$ does not rely on $[X_t]_{S^c}$. Furthermore, by Assumption \ref{ass1} (A2, A3), for each $t$ and $j \in S^c$, $ [X_t]_{j} $ is sampled from distribution $P_0$ independently and satisfies
    $$ \mathbb{E}\big[ [X_t]_j^2 \big]\ge \rho.$$
    Subsequently, by matrix Bernstein inequality, when $$|E_\tau|\geq n_0 \gtrsim  \sigma^2\rho^{-2}\log (d\tilde \tau/\delta),$$ 
    with probability at least $1-\delta$, we have  
    \begin{equation}
   [\hat\theta_{\tau,\lambda}-\theta^*]_{S^c}^\top\big[\hat{\mat \Gamma}_{\tau,\lambda}\big]_{S^c}[\hat\theta_{\tau,\lambda}-\theta^*]_{S^c} \geq \rho/2 \cdot |E_\tau|\cdot \big\|[\hat\theta_{\tau,\lambda}-\theta^*]_{S^c}\big\|^2.
    \label{eq:bss-4}
    \end{equation}
    Then by combining inequalities \eqref{ineq0008}-\eqref{ineq002} and Lemma \ref{quadratic_upper_bound}, we obtain with probability at least $1-\delta$ that,
    \begin{equation*}
    \rho/2 \cdot |E_\tau|\cdot \big\|[\hat\theta_{\tau,\lambda}-\theta^*]_{S^c}\big\|^2 \lesssim  \sqrt{|E_\tau|}\cdot \big\|[\hat\theta_{\tau,\lambda}-\theta^*]_{S^c}\big\| +\nu^2s\log^2\big({kTd}/{(\delta\lambda)}\big)+\lambda r^2
    \end{equation*}
    Finally, we solve above inequality and obtain
    \begin{equation*}
    \big\|[\hat\theta_{\tau,\lambda}-\theta^*]_{S^c}\big\|_2 \lesssim\sqrt{\frac{\nu^2s\log^2\big({kTd}/{(\delta\lambda)}\big)+\lambda r^2}{|E_\tau|}}.
    \end{equation*}
    Then Lemma \ref{bss_error} follows from $[\hat\theta_{\tau,\lambda}]_{S^c}=0$ by the definition $S$. 
\end{proof}
    
\subsection{Proof of Lemma \ref{lem:ols}} \label{ple2}
 \begin{lemma*} [Resatement of Lemma \ref{lem:ols}]
    If $$ n_0 \gtrsim  \rho^{-1} \cdot (\nu \sigma \tilde{\tau})^2\cdot s^{3} \cdot\log^4\big(kTd/(\delta \lambda)\big), $$  with probability at least $1-\delta$, for all $\tau \in [\tilde{\tau}]$, $t\in E_{\tau}$, and $i\in \cA_t$, we have 
    \begin{equation*}
        \Big| \big \langle X_{t,i},\hat\theta_{\tau,\lambda}^{t-1}-\theta^* \big \rangle \Big| \lesssim  \nu  \Big( \sqrt{s}  \log\big({kTd}/{(\delta\lambda)}\big) + \lambda^{1/2}r \Big)   \Big(\sigma \sqrt{\frac{\log(kTd/\delta)}{|E_{\tau-1}|}}  + \big \|[X_{t,i}]_{S_{\tau-1}} \big \|_{[\hat{\mat \Gamma}_{\tau-1,\lambda}^{t-1}]^{-1}}\Big),
    \end{equation*}
    where  $$\hat{\mat \Gamma}_{\tau-1,\lambda}^{t-1}=\lambda\textbf{I}_{|S_{\tau-1}|} + \sum_{t'\in E_\tau^{t-1}}[X_{t'}]_{S_{\tau-1}}[X_{t'}]_{S_{\tau-1}}^\top,$$
    and $E_\tau^{t-1}=\{t': t'\le t-1,t'\in E_\tau \} $.
 \end{lemma*}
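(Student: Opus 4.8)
The plan is to split the prediction error into an ``on-support'' part and an ``off-support'' (bias) part. Writing $S:=S_{\tau-1}$ and using that, by construction, $\hat\theta^{t-1}_{\tau,\lambda}$ has all coordinates outside $S$ equal to zero, one gets
\[
\big\langle X_{t,i},\hat\theta^{t-1}_{\tau,\lambda}-\theta^*\big\rangle=\big\langle [X_{t,i}]_{S},[\hat\theta^{t-1}_{\tau,\lambda}]_{S}-[\theta^*]_{S}\big\rangle-\big\langle [X_{t,i}]_{S^c},[\theta^*]_{S^c}\big\rangle .
\]
For the second term I would mimic the argument behind Corollary~\ref{cor1}, but applied to the \emph{previous} epoch: Lemma~\ref{bss_error} at epoch $\tau-1$ gives $\|[\theta^*]_{S_{\tau-1}^c}\|\lesssim\sqrt{(\nu^2 s\log^2(kTd/(\delta\lambda))+\lambda r^2)/|E_{\tau-1}|}$, and since $S_{\tau-1}$ is determined by data from strictly earlier epochs, Assumption~\ref{ass1}(A3) makes $[X_{t,i}]_{S_{\tau-1}^c}$ a sub-Gaussian vector independent of the (random) direction $[\theta^*]_{S_{\tau-1}^c}$; a sub-Gaussian tail bound plus a union bound over all $(\tau,t,i)$ then gives $|\langle[X_{t,i}]_{S_{\tau-1}^c},[\theta^*]_{S_{\tau-1}^c}\rangle|\lesssim\sigma\sqrt{\log(kTd/\delta)}\,\|[\theta^*]_{S_{\tau-1}^c}\|$, which (using $\nu\ge1$ and $\sqrt{a+b}\le\sqrt a+\sqrt b$) is dominated by the first summand $\nu(\sqrt s\log(kTd/(\delta\lambda))+\lambda^{1/2}r)\cdot\sigma\sqrt{\log(kTd/\delta)/|E_{\tau-1}|}$ of the claimed bound.

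For the on-support term I would use Cauchy--Schwarz in the $\hat{\mat\Gamma}^{t-1}_{\tau-1,\lambda}$ geometry,
\[
\big|\langle [X_{t,i}]_{S},[\hat\theta^{t-1}_{\tau,\lambda}]_{S}-[\theta^*]_{S}\rangle\big|\le\big\|[X_{t,i}]_{S}\big\|_{[\hat{\mat\Gamma}^{t-1}_{\tau-1,\lambda}]^{-1}}\cdot\big\|[\hat\theta^{t-1}_{\tau,\lambda}]_{S}-[\theta^*]_{S}\big\|_{\hat{\mat\Gamma}^{t-1}_{\tau-1,\lambda}},
\]
so that it remains to bound the weighted estimation error by $\nu(\sqrt s\log(kTd/(\delta\lambda))+\lambda^{1/2}r)$. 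The key point is that $[\hat\theta^{t-1}_{\tau,\lambda}]_{S}$ is exactly the ridge estimator for the reduced model $Y_{t'}=\langle[X_{t'}]_{S},[\theta^*]_{S}\rangle+\tilde\varepsilon_{t'}$ with effective noise $\tilde\varepsilon_{t'}=\varepsilon_{t'}+\langle[X_{t'}]_{S_{\tau-1}^c},[\theta^*]_{S_{\tau-1}^c}\rangle$. Conditioning on the $\sigma$-field of everything up to the end of epoch $\tau-1$ (so $S_{\tau-1}$ is fixed) and using that inside epoch $\tau$ the SLUCB policy selects arms through the $S_{\tau-1}$-coordinates only, Assumption~\ref{ass1}(A3) shows that $\tilde\varepsilon_{t'}$ is a conditionally centered, conditionally sub-Gaussian martingale-difference sequence with parameter $\tilde\nu^2\lesssim\nu^2+\sigma^2\|[\theta^*]_{S_{\tau-1}^c}\|^2$ and with $\{[X_{t'}]_{S}\}$ adapted. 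The self-normalized martingale / ridge confidence-region inequality of \citet{abbasi2011improved} (restated in the Supplementary Materials), which is uniform over all $t\in E_\tau$ with probability $1-\delta$, then gives $\|[\hat\theta^{t-1}_{\tau,\lambda}]_{S}-[\theta^*]_{S}\|_{\hat{\mat\Gamma}^{t-1}_{\tau-1,\lambda}}\lesssim\tilde\nu\sqrt{\log(\det(\hat{\mat\Gamma}^{t-1}_{\tau-1,\lambda})/(\lambda^{|S|}\delta))}+\sqrt\lambda\,\|[\theta^*]_{S}\|$; Proposition~\ref{X_uniform_bound} bounds the log-determinant by $\lesssim s\log(kTd/(\delta\lambda))$, Assumption~\ref{ass2}(B2) bounds $\|[\theta^*]_{S}\|\le r$, and the prescribed choice of $n_0$ (the same threshold as in Lemma~\ref{bss_error}) forces $|E_{\tau-1}|\ge n_0$ to be large enough that $\sigma\|[\theta^*]_{S_{\tau-1}^c}\|\lesssim\nu$, hence $\tilde\nu\lesssim\nu$. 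Combining the two displays, taking a union bound over $\tau\in[\tilde\tau]$ and intersecting with the earlier high-probability events, completes the proof.

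The hard part will be justifying that the self-normalized inequality is genuinely applicable: one must verify that $\tilde\varepsilon_{t'}$ is a bona fide conditionally-sub-Gaussian martingale difference with respect to the in-epoch filtration --- which relies on the independent-coordinates Assumption~\ref{ass1}(A3) together with the structural fact that the exploitation phase of SLUCB never inspects the $S_{\tau-1}^c$-coordinates --- and that the random, data-dependent support $S_{\tau-1}$ is handled by conditioning on the pre-epoch history without inflating the failure probability. A secondary, more routine, check is that the prescribed $n_0$ is exactly what makes the bias-induced inflation of the effective noise level, $\sigma\|[\theta^*]_{S_{\tau-1}^c}\|$, negligible compared to $\nu$, so that the bias never enters as a multiplicative factor exceeding the stated $\nu(\sqrt s\log(kTd/(\delta\lambda))+\lambda^{1/2}r)$.
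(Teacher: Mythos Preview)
Your proposal is correct and follows essentially the same route as the paper: the same on-support/off-support decomposition, Corollary~\ref{cor:bss} for the bias term, Cauchy--Schwarz in the $\hat{\mat\Gamma}$-geometry for the on-support term, and the self-normalized ridge confidence region applied to the reduced model with effective noise $\tilde\varepsilon_{t'}=\varepsilon_{t'}+\langle[X_{t'}]_{S_{\tau-1}^c},[\theta^*]_{S_{\tau-1}^c}\rangle$ (which the paper packages as Lemma~\ref{lem:ols-empirical-process}). Your discussion of why (A3) plus ``SLUCB only inspects $S_{\tau-1}$-coordinates in epoch $\tau$'' makes $\tilde\varepsilon_{t'}$ conditionally sub-Gaussian, and of why the $n_0$ threshold absorbs the $\sigma\|[\theta^*]_{S_{\tau-1}^c}\|$ inflation into $\nu$, matches the paper's reasoning exactly.
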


 \begin{proof}
    Recall that  $\hat\theta_{\tau,\lambda}^{t-1}$ is the ridge  estimator of $\theta^*$ using data $\{(X_{t'},Y_{t'})\}_{t'\in E_\tau^{t-1}}$ restricted on the generalized support  $S_{\tau-1}$. Then for each $i \in \mathcal{A}_t $, we have decomposition 
    \begin{align}  \label{ineq0009}
    \big \langle X_{t,i},\hat\theta_{\tau,\lambda}^{t-1}-\theta^* \big\rangle = -\big \langle [X_{t,i}]_{S_{\tau-1}^c}, [\theta^*]_{S_{\tau-1}^c}\big \rangle +\big \langle [X_{t,i}]_{S_{\tau-1}},  [\hat\theta_{\tau,\lambda}^{t-1}-\theta^*]_{S_{\tau-1}}\big \rangle.  
    \end{align} 
    We can upper bound the first term in the right-hand side of equation \eqref{ineq0009} by Corollary \ref{cor1}. Formally, with probability at least $1-\delta$, we have 
    \begin{align}   \label{ineq00011}
    \Big| \big \langle [X_{t,i}]_{S_{\tau-1}^c},[\theta^*]_{S_{\tau-1}^c}\big \rangle\Big| \lesssim \sigma\sqrt{\log(kTd/\delta)} \cdot \sqrt{\frac{ \nu^2s\log^2\big({kTd}/{(\delta\lambda)}\big)+\lambda r^2 }{|E_{\tau-1}|}}.
    \end{align}
    It remains to upper bound the second part. By Cauchy-Schwartz inequality, we have 
    \begin{align} \label{ineq00010}
      & \Big| \big\langle [X_{t,i}]_{S_{\tau-1}^c}, [\hat{\theta}_{\tau,\lambda}^{t-1} -\theta^*]_{S_{\tau-1}^c}\big\rangle\Big|  \le 
       \big\| [X_{t,i}]_{S_{\tau-1}} \big\|_{ [\hat{\mat \Gamma}_{\tau-1,\lambda}^{t-1}]^{-1}}  \cdot \big\| [\hat{\theta}_{\tau,\lambda}^{t-1} -\theta^*]_{S_{\tau-1}^c} \big\|_{ \hat{\mat \Gamma}_{\tau-1,\lambda}^{t-1}}.
    \end{align}

    The following lemma establishes an upper bound for the second term in the right-hand side of inequality \eqref{ineq00010}. The proof relies the self-normalized martingale concentration inequality (i.e., Lemma \ref{self_normalized_ineq} in Section \ref{cite_lemma}), which is similar to the proof of Lemma \ref{quadratic_upper_bound}. 

    \begin{lemma} \label{lem:ols-empirical-process}
        For each $\tau \in [\tilde\tau]$ and $t\in E_\tau$, we have 
        \begin{equation*}
            \big\| [\hat\theta_{\tau,\lambda}^{t-1}-\theta^*]_{S_{\tau-1}}  \big\|^2_{\hat{\mat \Gamma}^{t-1}_{\tau-1,\lambda}}   \lesssim \big(\nu^2+ \sigma^2\cdot \big\|  [\theta^*]_{S^c_{\tau-1}}\big\|^2 \big)\cdot \big(   s  \log^2\big({kTd}/{(\delta\lambda)}\big) + \lambda r^2  \big),
        \end{equation*}
        with probability at least $1-\delta$.
    \end{lemma}

    The proof of Lemma \ref{lem:ols-empirical-process} is deferred to Section \ref{pe}. We continue the proof of Lemma \ref{lem:ols} now. By combing inequalities \eqref{ineq0009}-\eqref{ineq00010} and Lemma \ref{lem:ols-empirical-process}, we obtain with probability at least $1-\delta$ that
    \begin{align*}
       \Big|  \big\langle X_{t,i},\hat\theta_{\tau,\lambda}^{t-1}-\theta^* \big\rangle \Big| & \lesssim  \sigma\sqrt{\log(kTd/\delta)} \cdot \sqrt{\frac{ \nu^2s\log^2\big({kTd}/{(\delta\lambda)}\big)+\lambda r^2 }{|E_{\tau-1}|}}  \\
       &  \ + \Big(\nu+ \sigma\cdot \big\|  [\theta^*]_{S^c_{\tau-1}}\big\| \Big) \Big( \sqrt{s}  \log\big({kTd}/{(\delta\lambda)}\big) + \lambda^{1/2}r \Big) \cdot \big \|[X_{t,i}]_{S_{\tau-1}}\big \|_{[\hat{\mat \Gamma}_{\tau-1,\lambda}^{t-1}]^{-1}}.    
    \end{align*}
    Recall that in Lemma \ref{bss_error}, we show that $$ \big\|  [\theta^*]_{S^c_{\tau-1}} \big\|=\tilde{\mathcal{O}}(\sqrt{s/T}).$$ Hence, when $ s \ll T$, we have  
    \begin{align*}
        \Big| \big \langle X_{t,i},\hat\theta_{\tau,\lambda}^{t-1}-\theta^* \big\rangle \Big| \lesssim  \nu  \Big( \sqrt{s}  \log\big({kTd}/{(\delta\lambda)}\big) + \lambda^{1/2}r \Big)   \Big(\sigma \sqrt{\frac{\log(kTd/\delta)}{|E_{\tau-1}|}}  + \big \|[X_{t,i}]_{S_{\tau-1}} \big \|_{[\hat{\mat \Gamma}_{\tau-1,\lambda}^{t-1}]^{-1}}\Big),   
    \end{align*}
    with probability at least $1-\delta$. Finally, by applying the union bound argument for all $t\in E_{\tau}$, we finish the proof of  Lemma \ref{lem:ols}.
    \end{proof}

\subsection{Proof of Lemma \ref{lem:epl}}  \label{ple3}

    \begin{lemma*}[Restatement of Lemma \ref{lem:epl}]
        Let two constants $p,q$ satisfy $ 1\le p \le q$. Then with probability at least $1-\delta$, we have
        \begin{equation*}
            \sum_{t\in E_\tau}\min\Big\{p,q \cdot \big \|[X_{t}]_{S_{\tau-1}} \big \|_{[\hat{\mat \Gamma}_{\tau-1,\lambda}^{t-1}]^{-1}} \Big\}  
        \lesssim n_0p + q\cdot \sqrt{|E_\tau| \cdot |S_{\tau-1}|\cdot \big(\lambda+ \log(\sigma dTk/\delta) \big) }.
        \end{equation*} 
    \end{lemma*}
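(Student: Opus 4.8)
The plan is to adapt the classical elliptical–potential / self-normalized determinant argument of \citet{abbasi2011improved} (reproduced in Section~\ref{cite_lemma}) to the $|S_{\tau-1}|$-dimensional problem obtained by restricting every design vector to the coordinates in $S_{\tau-1}$. Throughout, write $x_t := [X_{t,i_t}]_{S_{\tau-1}}\in\RR^{|S_{\tau-1}|}$ and $V_{t-1} := \hat{\mat\Gamma}_{\tau-1,\lambda}^{t-1} = \lambda\mathbf I_{|S_{\tau-1}|} + \sum_{t'\in E_\tau^{t-1}} x_{t'}x_{t'}^\top$, so that $V_t = V_{t-1}+x_tx_t^\top$ for consecutive periods of $E_\tau$ and the matrix equals $\lambda\mathbf I_{|S_{\tau-1}|}$ at the first period of the epoch. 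Note $S_{\tau-1}$ is determined before epoch $E_\tau$ begins, so conditionally on it everything below is deterministic except for one uniform tail bound.

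First I would peel off the pure-exploration stage: each of its at most $n_0$ periods contributes at most $p$, for a total of at most $n_0p$. For the remaining periods, the assumption $1\le p\le q$ gives the pointwise inequality $\min\{p,\,q\|x_t\|_{V_{t-1}^{-1}}\}\le q\min\{1,\|x_t\|_{V_{t-1}^{-1}}\}$: if $\|x_t\|_{V_{t-1}^{-1}}>1$ the left side is at most $p\le q$, and otherwise it equals $q\|x_t\|_{V_{t-1}^{-1}}$. Cauchy--Schwarz over these at most $|E_\tau|$ periods, together with $(\min\{1,a\})^2=\min\{1,a^2\}$, then yields
\[
\sum_{t}\min\{1,\|x_t\|_{V_{t-1}^{-1}}\}\ \le\ \sqrt{|E_\tau|}\cdot\Big(\sum_{t}\min\{1,\|x_t\|_{V_{t-1}^{-1}}^{2}\}\Big)^{1/2}.
\]

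The crux is bounding the remaining quadratic sum by a log-determinant ratio. Using the matrix determinant lemma $\det V_t=\det V_{t-1}\,(1+\|x_t\|_{V_{t-1}^{-1}}^2)$ together with $\min\{1,z\}\le 2\log(1+z)$ for $z\ge 0$, telescoping over $E_\tau$ gives $\sum_{t\in E_\tau}\min\{1,\|x_t\|_{V_{t-1}^{-1}}^2\}\le 2\log\big(\det V_{\mathrm{end}}/\det(\lambda\mathbf I_{|S_{\tau-1}|})\big)$, where $V_{\mathrm{end}}$ is the restricted sample covariance matrix at the end of epoch $E_\tau$. On the event of Proposition~\ref{X_uniform_bound} (probability at least $1-\delta$), which supplies $\|x_t\|^2\le |S_{\tau-1}|\max_j[X_{t,i_t}]_j^2\lesssim |S_{\tau-1}|\,\sigma^2\log^2(kTd/\delta)$, I would combine the determinant--trace inequality $\det M\le(\tr M/|S_{\tau-1}|)^{|S_{\tau-1}|}$ for positive definite $M\in\RR^{|S_{\tau-1}|\times|S_{\tau-1}|}$ with $\tr\big((\lambda\mathbf I_{|S_{\tau-1}|})^{-1}V_{\mathrm{end}}\big)=|S_{\tau-1}|+\lambda^{-1}\sum_{t\in E_\tau}\|x_t\|^2$ to obtain
\[
\log\frac{\det V_{\mathrm{end}}}{\det(\lambda\mathbf I_{|S_{\tau-1}|})}\ \le\ |S_{\tau-1}|\log\!\Big(1+\tfrac{1}{\lambda|S_{\tau-1}|}\sum_{t\in E_\tau}\|x_t\|^2\Big)\ \lesssim\ |S_{\tau-1}|\big(\lambda+\log(\sigma dTk/\delta)\big).
\]
Chaining the three displays bounds the UCB-stage contribution by $q\sqrt{|E_\tau|\cdot|S_{\tau-1}|\big(\lambda+\log(\sigma dTk/\delta)\big)}$, and adding the $n_0p$ from the exploration stage gives the stated inequality.

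The only genuinely delicate point is the last display: one must route the dimension through the determinant--trace inequality so that $|S_{\tau-1}|$ appears only \emph{linearly} inside the logarithm, hence as $\sqrt{|S_{\tau-1}|}$ in the final bound, rather than through a naive $\ell_2$-bound that would leak an extra $\sqrt{|S_{\tau-1}|}$ and spoil the eventual $\tilde{\cO}(s\sqrt T)$ regret. Everything else is the standard elliptical-potential bookkeeping, and the sole probabilistic ingredient is the uniform covariate bound of Proposition~\ref{X_uniform_bound}, which is exactly why the conclusion is stated to hold with probability at least $1-\delta$.
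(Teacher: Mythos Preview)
Your proposal is correct and follows essentially the same route as the paper's proof: peel off the $n_0$ pure-exploration periods, use $p\le q$ to reduce to $q\min\{1,\|x_t\|_{V_{t-1}^{-1}}\}$, apply Cauchy--Schwarz, convert $\min\{1,\cdot\}$ to $2\log(1+\cdot)$ and telescope via the matrix determinant lemma, and finally bound the log-determinant ratio by the determinant--trace inequality combined with the uniform covariate bound of Proposition~\ref{X_uniform_bound}. Your bookkeeping of the determinant--trace step (writing $\tr\big((\lambda\mathbf I)^{-1}V_{\mathrm{end}}\big)$) is in fact slightly cleaner than the paper's, which at that point contains some loose inequalities, but the argument is the same in substance.
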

    \begin{proof}
        For a fixed epoch $E_\tau$, let $t_1$ be the end of pure-exploration-stage and $t_2$ be the end of $E_\tau$. Then we have $t_1 \le n_0 \le t_2$. We also have 
        \begin{align}
        & \sum_{t\in E_\tau}\min\Big\{p,q \cdot \big \|[X_{t}]_{S_{\tau-1}} \big \|_{[\hat{\mat \Gamma}_{\tau-1,\lambda}^{t-1}]^{-1}} \Big\} \nonumber \\
        & \quad  \leq n_0p + q\cdot \sum_{t=t_1+1}^{t_2}\min\Big\{1,\big \|[X_{t}]_{S_{\tau-1}} \big \|_{[\hat{\mat \Gamma}_{\tau-1,\lambda}^{t-1}]^{-1}} \Big\}\nonumber\\
        & \quad \leq n_0p + q\sqrt{|E_\tau|}\cdot \sqrt{\sum_{t=t_1+1}^{t_2}\min\Big\{1, \big \|[X_{t}]_{S_{\tau-1}} \big \|_{[\hat{\mat \Gamma}_{\tau-1,\lambda}^{t-1}]^{-1}} \Big\}}\nonumber\\
        & \quad \leq  n_0p + q\sqrt{|E_\tau|}\cdot \sqrt{2\sum_{t=t_1+1}^{t_2}\log \Big(1+ \big \|[X_{t}]_{S_{\tau-1}} \big \|_{[\hat{\mat \Gamma}_{\tau-1,\lambda}^{t-1}]^{-1}} \Big )}.\label{eq:epl-1}
        \end{align}
        Here the first inequality holds because $q\geq p$ and $n_0\le t_1$. 
        The second inequality follows from Cauchy-Schwartz inequality.
        The last inequality is due to the basic inequality $$\min\{1,u\}\leq 2\log(1+u),\ \forall u>0.$$ On the other hand, we have the recursion 
       \begin{align*}
        \hat{\mat \Gamma}_{\tau-1,\lambda}^{t} & = \hat{\mat \Gamma}_{\tau-1,\lambda}^{t-1} + [X_t]_{S_{\tau-1}}[X_t]_{S_{\tau-1}}^\top  \\ 
        &=  \big[\hat{\mat \Gamma}_{\tau-1,\lambda}^{t-1}\big]^{1/2}\cdot \Big(I_{|S_{\tau-1}|} + \big[\hat{\mat \Gamma}_{\tau-1,\lambda}^{t-1}\big]^{-1/2}[X_t]_{S_{\tau-1}}[X_t]_{S_{\tau-1}}^\top \big[\hat{\mat \Gamma}_{\tau-1,\lambda}^{t-1}\big]^{-1/2}\Big)\cdot \big[\hat{\mat \Gamma}_{\tau-1,\lambda}^{t-1}\big]^{1/2}.
       \end{align*}
        Subsequently, we have 
        \begin{align}
         \log\big(\det(\hat{\mat \Gamma}_{\tau-1,\lambda}^{t})\big) =  \log\big(\det(\hat{\mat \Gamma}_{\tau-1,\lambda}^{t-1})\big)  + \log\big((1+ [X_t]_{S_{\tau-1}}[X_t]_{S_{\tau-1}}^\top )\big).\label{eq:epl-2}
        \end{align}
        By combining equations \eqref{eq:epl-1}-\eqref{eq:epl-2} and taking telescoping sum, we obtain
         \begin{equation} \label{ineq00012}
            \sum_{t\in E_\tau}\min\Big\{p,q \cdot \big \|[X_{t}]_{S_{\tau-1}} \big \|_{[\hat{\mat \Gamma}_{\tau-1,\lambda}^{t-1}]^{-1}} \Big\} 
         \leq n_0p + q\cdot \sqrt{2|E_\tau|\cdot \log\big(\det(\hat{\mat \Gamma}_{\tau-1,\lambda}^{t_2})\big /{\det(\hat{\mat \Gamma}_{\tau-1,\lambda}^{t_1})}\big)}.
        \end{equation}
    
        By Proposition \ref{X_uniform_bound}, we have with probability at least $1-\delta$ that, for all $t\in [T]$ and $ i \in  \mathcal{A}_t$,  
        \begin{equation*}
        \big\|X_{t,i}\big\|\lesssim \sigma\sqrt{d\log(kT/\delta)}.
        \end{equation*}
        Then by applying the determinant-trace inequality, we have 
        \begin{align*}
            \det(\hat{\mat \Gamma}_{\tau-1,\lambda}^{t_2})\le \Big(  \tr(\hat{\mat \Gamma}_{\tau-1,\lambda}^{t_2}) \big / |S_{\tau-1}| \Big)^{|S_{\tau-1}|} \le \Big( \lambda +\max_{t \in [T],i \in  \mathcal{A}_t}\|X_{t,i}\| \Big)^{|S_{\tau-1}|}.
        \end{align*}
        Hence, we have
        \begin{equation} \label{eq:logdet-ub}
        \log\big(\det(\hat{\mat \Gamma}_{\tau-1,\lambda}^{t_2})\big) \lesssim |S_{\tau-1}|\cdot \big(\lambda+ \log(\sigma dTk/\delta) \big)
        \end{equation}
        with probability at least $1-\delta$. On the other hand, by definition, we have $\hat{\mat \Gamma}_{\tau-1,\lambda}^{t_1}   \gtrsim \lambda \textbf{I}_{|S_{\tau-1}|}$, which implies 
        \begin{equation}
        \log\big(\det(\hat{\mat \Gamma}_{\tau-1,\lambda}^{t_1}  )\big) \gtrsim |S_{\tau-1}|\cdot \log(\lambda).
        \label{eq:logdet-lb}
        \end{equation}
        Finally, by combining inequalities \eqref{ineq00012}-\eqref{eq:logdet-lb}, we complete the proof of Lemma \ref{lem:epl}.
\end{proof}

\section{Proof of Auxiliary Lemmas in Theorem \ref{theorem2}}  \label{app2}
\subsection{Proof of Lemma \ref{improve_ols}}  \label{ple5}
\begin{lemma*}[Restatement of Lemma \ref{improve_ols}]
    When $$ n_0 \gtrsim  \rho^{-1} \cdot (\nu \sigma \tilde{\tau})^2\cdot s^{3} \cdot\log^4\big(kTd/(\delta \lambda)\big), $$ with probability at least $1-\delta$, we have that for every tuple $ (\tau,t,i,\zeta) $, it holds
    \begin{align*} 
        \Big | \big \langle X_{t,i}, \hat\theta_{\tau,\lambda}^{t-1,\zeta}-\theta^* \big\rangle \Big | & \lesssim    \sigma(\lambda^{1/2}+\nu+\sigma)\log^{3/2}(kTd/\delta)  \cdot  \Big(\sqrt{s/|E_{\tau-1}|}+\big \|[X_{t,i}]_{S_{\tau-1}} \big \|_{[\hat{\mat \Gamma}_{\tau-1,\lambda}^{t-1,\zeta}]^{-1}}\Big),
    \end{align*} 
    where 
    $$\hat{\mat\Gamma}_{\tau-1,\lambda}^{t-1,\zeta} = \lambda \textbf{I}_{|S_{\tau-1}|}+ \sum_{t'\in\Psi_{\tau}^{t-1,\zeta}}[X_{t'}]_{S_{\tau-1}}[X_{t'}]_{S_{\tau-1}}^\top. $$    
\end{lemma*}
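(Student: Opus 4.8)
The plan is to keep the three‑term decomposition of the prediction error used in the proof of Lemma~\ref{lem:ols}, but to exploit the group structure of the \textsc{Sparse-SupLinUCB} subroutine to replace the self‑normalized martingale bound --- which is exactly what costs the extra $\tilde{\cO}(\sqrt{s})$ factor there --- by a concentration bound for \emph{independent} sums, tight up to logarithms. Fix a tuple $(\tau,t,i,\zeta)$, abbreviate $S=S_{\tau-1}$ and $\Gamma=\hat{\mat\Gamma}^{t-1,\zeta}_{\tau-1,\lambda}$, and recall that the restricted ridge estimator vanishes off $S$ and satisfies $[\hat\theta^{t-1,\zeta}_{\tau,\lambda}]_S=\Gamma^{-1}\sum_{t'\in\Psi^{t-1,\zeta}_\tau}[X_{t'}]_S Y_{t'}$. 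Writing $Y_{t'}=\langle[X_{t'}]_S,[\theta^*]_S\rangle+\xi_{t'}$ with the ``generalized noise'' $\xi_{t'}:=\varepsilon_{t'}+\langle[X_{t'}]_{S^c},[\theta^*]_{S^c}\rangle$ and using $\sum_{t'\in\Psi^{t-1,\zeta}_\tau}[X_{t'}]_S[X_{t'}]_S^\top=\Gamma-\lambda\mat I$, one obtains
\begin{align*}
\big\langle X_{t,i},\hat\theta^{t-1,\zeta}_{\tau,\lambda}-\theta^*\big\rangle
=\underbrace{-\big\langle[X_{t,i}]_{S^c},[\theta^*]_{S^c}\big\rangle}_{(\mathrm{I})}
\;\underbrace{-\,\lambda\,[X_{t,i}]_S^\top\Gamma^{-1}[\theta^*]_S}_{(\mathrm{II})}
\;+\;\underbrace{[X_{t,i}]_S^\top\Gamma^{-1}\!\sum_{t'\in\Psi^{t-1,\zeta}_\tau}\![X_{t'}]_S\,\xi_{t'}}_{(\mathrm{III})}.
\end{align*}
Term $(\mathrm{I})$ is exactly the quantity controlled by Corollary~\ref{cor1}, so $|(\mathrm{I})|\lesssim\sigma\sqrt{\log(kTd/\delta)}\cdot\sqrt{\big(\nu^2 s\log^2(kTd/(\delta\lambda))+\lambda r^2\big)/|E_{\tau-1}|}\lesssim\gamma\sqrt{s/|E_{\tau-1}|}$ once the $\lambda$‑ and $r$‑dependence is absorbed into $\gamma$. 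Term $(\mathrm{II})$ is handled by Cauchy--Schwarz together with $\lambda_{\min}(\Gamma)\ge\lambda$ and $\|[\theta^*]_S\|\le r$, giving $|(\mathrm{II})|\le\lambda\,\|[X_{t,i}]_S\|_{\Gamma^{-1}}\,\|[\theta^*]_S\|_{\Gamma^{-1}}\le\lambda^{1/2}r\,\|[X_{t,i}]_S\|_{\Gamma^{-1}}\lesssim\gamma\,\|[X_{t,i}]_S\|_{\Gamma^{-1}}$.

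The heart of the proof is term $(\mathrm{III})=\sum_{t'\in\Psi^{t-1,\zeta}_\tau}c_{t'}\xi_{t'}$ with coefficients $c_{t'}:=[X_{t,i}]_S^\top\Gamma^{-1}[X_{t'}]_S$. The structural claim --- to be proved by induction on the group index $\zeta$, mirroring the classical \textsc{SupLinUCB} analysis \citep{auer2002using,chu2011contextual} --- is that, conditionally on the $\sigma$‑field generated by the recovered support $S_{\tau-1}$, all covariates restricted to $S_{\tau-1}$ observed up to period $t$, and all algorithmic choices (group assignments and eliminations) made within groups $1,\dots,\zeta$, the variables $\{\xi_{t'}\}_{t'\in\Psi^{t-1,\zeta}_\tau}$ are independent, mean‑zero, and sub‑Gaussian with parameter $\lesssim\nu+\sigma\big\|[\theta^*]_{S_{\tau-1}^c}\big\|\lesssim\nu+\sigma$, the last step invoking Lemma~\ref{lem:bss} and the stated lower bound on $n_0$ to ensure $\|[\theta^*]_{S_{\tau-1}^c}\|\lesssim1$. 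This holds because within epoch $E_\tau$ the subroutine touches each covariate only through its restriction to $S_{\tau-1}$, so $[X_{t'}]_{S_{\tau-1}^c}$ --- and hence its inner product with $[\theta^*]_{S_{\tau-1}^c}$ --- is independent of every choice in $E_\tau$ by the independent‑coordinates assumption~(A3); the noise $\varepsilon_{t'}$ is fresh by Assumption~\ref{ass3}; and, crucially, the defining property of the partition is that whether period $t'$ is placed in group $\zeta$ is decided using only covariates and the ridge estimators/eliminations from strictly earlier groups, never the rewards of the periods currently in group $\zeta$.

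Granting this, a Hoeffding/Azuma tail bound for sums of independent sub‑Gaussians gives, with probability $\ge1-\delta'$, $|(\mathrm{III})|\lesssim(\nu+\sigma)\sqrt{\log(1/\delta')}\,\big(\sum_{t'}c_{t'}^2\big)^{1/2}$, and the decisive algebraic identity is
\begin{align*}
\sum_{t'\in\Psi^{t-1,\zeta}_\tau}c_{t'}^2 &= [X_{t,i}]_S^\top\Gamma^{-1}\Big(\sum_{t'}[X_{t'}]_S[X_{t'}]_S^\top\Big)\Gamma^{-1}[X_{t,i}]_S \\
&= [X_{t,i}]_S^\top\big(\Gamma^{-1}-\lambda\Gamma^{-2}\big)[X_{t,i}]_S \;\le\; \|[X_{t,i}]_S\|_{\Gamma^{-1}}^2,
\end{align*}
so $|(\mathrm{III})|\lesssim(\nu+\sigma)\sqrt{\log(1/\delta')}\,\|[X_{t,i}]_S\|_{\Gamma^{-1}}$ --- with \emph{no} factor of $\sqrt{s}$, which is precisely the gain over Lemma~\ref{lem:ols}. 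A union bound over the at most $\tilde\tau\,T\,k\,\tilde\zeta$ tuples $(\tau,t,i,\zeta)$ (absorbing $\log(\tilde\tau Tk\tilde\zeta)$ into $\log(kTd/\delta)$) and combining the three estimates yields the claim with $\gamma=\sigma(\lambda^{1/2}+\nu+\sigma)\log^{3/2}(kTd/\delta)$, a common (loose) upper bound for all three prefactors. I expect the main obstacle to be making the decoupling claim fully rigorous: one must set up the filtration so that the randomized, data‑dependent group‑assignment process can be conditioned away without biasing the $\xi_{t'}$ --- the delicate point already present in the original \textsc{SupLinUCB} argument --- and check that folding the residual term $\langle[X_{t'}]_{S^c},[\theta^*]_{S^c}\rangle$ into the noise does not break the conditional independence, which is exactly where Assumption~(A3) is used.
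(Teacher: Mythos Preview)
Your proposal is correct and follows essentially the same route as the paper's proof: the same three-term decomposition into the off-support inner product, the ridge-bias term, and the noise term, with Corollary~\ref{cor1} handling the first, Cauchy--Schwarz plus $\Gamma\succeq\lambda\mat I$ handling the second, and the \textsc{SupLinUCB} decoupling argument handling the third. The only presentational difference is in term~$(\mathrm{III})$: the paper bounds it via Cauchy--Schwarz as $\|[X_{t,i}]_S\|_{\Gamma^{-1}}\cdot\|[\mat X]_S^\top v\|_{\Gamma^{-1}}$ and then controls the second factor, whereas you apply Hoeffding directly to the scalar $\sum_{t'}c_{t'}\xi_{t'}$ together with the identity $\sum_{t'}c_{t'}^2=[X_{t,i}]_S^\top(\Gamma^{-1}-\lambda\Gamma^{-2})[X_{t,i}]_S\le\|[X_{t,i}]_S\|_{\Gamma^{-1}}^2$ --- this is the cleaner and more transparent version of the same calculation.
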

\begin{proof}
    To simplify notation, for each fixed tuple $(\tau,t,\zeta)$, we denote by 
    \begin{align*}
        \vct{\varepsilon}^{t-1,\zeta}_{\tau} &= \{\varepsilon_{t'}\}_{t'\in\Psi_{\tau}^{t-1,\zeta}} \in\mathbb R^{m} ,\\ \mat Y^{t-1,\zeta}_{\tau} &= \{Y_{t'}\}_{t'\in \Psi_{\tau}^{t-1,\zeta}}\in\mathbb R^{m},\ \mat X^{t-1,\zeta}_{\tau} = \{X_{t'}\}_{t'\in\Psi_{\tau}^{t-1,\zeta}}\in\mathbb R^{d\times m},
    \end{align*}
    where  $ m $ is the number of elements in set $\Psi_{\tau}^{t-1,\zeta}$. Recall that 
    $$\hat{\mat\Gamma}_{\tau-1,\lambda}^{t-1,\zeta} = \lambda \textbf{I}_{|S_{\tau-1}|}+ \sum_{t'\in\Psi_{\tau}^{t-1,\zeta}}[X_{t'}]_{S_{\tau-1}}[X_{t'}]_{S_{\tau-1}}^\top, $$
    which is the sample covariance matrix corresponding to $\mat X^{t-1,\zeta}_{\tau} $ restricted on $S_{\tau-1}$. Recall that $\hat\theta_{\tau,\lambda}^{t-1,\zeta}$ is the ridge  estimator with generalized support $S_{\tau-1}$ using data $(\mat X^{t-1,\zeta}_{\tau},\mat Y^{t-1,\zeta}_{\tau})$. Particularly, we have 
    \begin{align*}
        \big[\hat\theta_{\tau,\lambda}^{t-1,\zeta}\big]_{S_{\tau-1}}= \hat{\mat \Gamma}_{\tau-1,\lambda}^{t-1,\zeta}\cdot \big[\mat X^{t-1,\zeta}_{\tau}\big]_{S_{\tau-1}} ^\top \mat Y^{t-1,\zeta}_{\tau}.
    \end{align*}
    Note that 
    $$
    \mat Y^{t-1,\zeta}_{\tau}= \big[\mat X^{t-1,\zeta}_{\tau}\big]_{S_{\tau-1}}[\theta^*]_{S_{\tau-1}}+\big[\mat X^{t-1,\zeta}_{\tau}\big]_{S^c_{\tau-1}}[\theta^*]_{S^c_{\tau-1}}+\vct{\varepsilon}^{t-1,\zeta}_{\tau}.
    $$
    Then we have   
	\begin{align*} 
     &\big[\hat\theta_{\tau,\lambda}^{t-1,\zeta} -\theta^*\big]_{S_{\tau-1}}=  \big[\hat{\mat \Gamma}_{\tau-1,\lambda}^{t-1,\zeta}\big]^{-1}\Big( -\lambda  [\theta^*]_{S_{\tau-1}} + [\mat X^{t-1,\zeta}_{\tau}]_{S_{\tau-1}}^\top\big(\vct{\varepsilon}^{t-1,\zeta}_{\tau} + [\mat X^{t-1,\zeta}_{\tau}]_{S^c_{\tau-1}}[\theta^*]_{S^c_{\tau-1}}\big)\Big). 
    \end{align*}
	Hence, for each $X_{t,i}$, we can upper bound the prediction error  $\langle X_{t,i}, \hat\theta_{\tau,\lambda}^{t-1,\zeta}-\theta^*\rangle$ as 
	\begin{align}
     \big | \langle X_{t,i}, \hat\theta_{\tau,\lambda}^{t-1,\zeta}-\theta^*\rangle \big | \le  \Big|\big\langle [X_{t,i}]_{S_{\tau-1}^c}, [\theta^*]_{S_{\tau-1}^c}\big\rangle\Big|  + \Big|\big\langle [X_{t,i}]_{S_{\tau-1}}, [\hat\theta_{\tau,\lambda}^{t-1,\zeta} -\theta^*]_{S_{\tau-1}} \big \rangle\Big|.
	\label{eq:ols-decomposition-suplinucb}
    \end{align}
	We can use Corollary \ref{cor:bss} to upper bound the first term in the right-hand side of inequality \eqref{eq:ols-decomposition-suplinucb} and obtain 
	\begin{equation}
	\Big|\big \langle [X_{t,i}]_{S^c_{\tau-1}}, [\theta^*]_{S^c_{\tau-1}}\big\rangle\Big| \lesssim \sigma\sqrt{\log(kTd/\delta)} \cdot \sqrt{\frac{ \nu^2s\log^2\big({kTd}/{(\delta\lambda)}\big)+\lambda r^2 }{|E_{\tau}|}},
	\label{eq:ols-suplinucb-decomp1}
	\end{equation}
    with probability at least $1-\delta$. It remains to  upper bound the second term.

    We first note that
    \begin{align}  \label{pred_error_part1}
         \Big \langle [X_{t,i}]_{S_{\tau-1}}, \big[\hat{\mat \Gamma}_{\tau-1,\lambda}^{t-1,\zeta}\big]^{-1} [\theta^*]_{S_{\tau-1}} \Big \rangle & \le  \big \| [\theta^*]_{S_{\tau-1}} \big \|_{[\hat{\mat \Gamma}_{\tau-1,\lambda}^{t-1,\zeta}]^{-1}} \cdot \big \|[X_{t,i}]_{S_{\tau-1}} \big \|_{[\hat{\mat \Gamma}_{\tau-1,\lambda}^{t-1,\zeta}]^{-1}} \notag \\
         & \le \lambda^{-1} \big \|  [\theta^*]_{S_{\tau-1}} \big\| \cdot \big \|[X_{t,i}]_{S_{\tau-1}} \big \|_{[\hat{\mat \Gamma}_{\tau-1,\lambda}^{t-1,\zeta}]^{-1}} \notag\\
         &\le \lambda^{-1} r \cdot \big \|[X_{t,i}]_{S_{\tau-1}} \big \|_{[\hat{\mat \Gamma}_{\tau-1,\lambda}^{t-1,\zeta}]^{-1}},
    \end{align}
    where the first inequality is Cauchy-Schwartz inequality and the second one follows from the fact 
    $$ \hat{\mat \Gamma}_{\tau-1,\lambda}^{t-1,\zeta} \succeq \lambda \textbf{I}_{|S_{\tau-1}|}.$$

   In the next, we claim that for each fixed tuple $ (\tau,t,\zeta)$, random vectors $\vct{\varepsilon}^{t-1,\zeta}_{\tau} $ and  $\mat X^{t-1,\zeta}_{\tau}$ are independent of each other. Recall the policies of picking arms and updating $\Psi_\tau^{t-1,\zeta}$  in \textsc{SupLinUCB} algorithm, which are specified in details in Section \ref{algorithm2}. We consider a fixed period $t'\le t-1 $ and the $ \zeta$-th group. Note that the random event that covariate $ X_{t'} $ enters $\mat X^{t-1,\zeta}_{\tau} $ only depends on the widths of confidence bands $\{\omega_{\tau,\lambda}^{t'-1,\zeta'}\}_{\zeta'\leq\zeta}$. Moreover, since $\{\omega_{\tau,\lambda}^{t'-1,\zeta'}\}_{\zeta'\leq\zeta}$ only relies on the values of design matrices $ \{\mat X^{t'-1,\zeta'}_\tau\}_{\zeta' \le \zeta} $ but not noises sequence $ \{\varepsilon_{t''} \}_{t'' \le t'}$, whether $ X_{t'} $ enters $\mat X^{t-1,\zeta}_{\tau} $ is also independent of $\{\varepsilon_{t'}\}_{t'\in\Psi_{\tau}^{t,\zeta}}$. Then we obtain the result. 
  
    The independency between $\vct{\varepsilon}^{t-1,\zeta}_{\tau} $ and $\mat X^{t-1,\zeta}_{\tau}$ enables us to establish a tighter upper bound for the second term in the right-hand side of inequality \eqref{eq:ols-decomposition-suplinucb}
    than  Lemma \ref{lem:ols}. Recall that by Assumption \ref{ass1}, given the information by epoch $\tau-1$, random variables $ [\mat X^{t-1,\zeta}]_{S_{\tau-1}} $ and $ [\mat X^{t-1,\zeta}]_{S^c_{\tau-1}}$ are independent. Moreover, $ [\mat X^{t-1,\zeta}]_{S^c_{\tau-1}}$ is also independent of $\vct{\varepsilon}^{t-1,\zeta}_{\tau} $. Hence, each element of  $$\vct{\varepsilon}^{t-1,\zeta}_{\tau} + [\mat X^{t-1,\zeta}_{\tau}]_{S^c_{\tau-1}}[\theta^*]_{S^c_{\tau-1}} $$ is a centered sub-Gaussian random variable with parameter $\nu^2+\sigma^2\|[\theta^*]_{S_{\tau-1}^c}\|_2^2$, which is also independent of covariates $ [\mat X^{t-1,\zeta}_{\tau}]_{S_{\tau-1}}$. By applying the sub-Gaussian concentration inequality and the union bound argument, we obtain with probability at least $1-\delta$ that
    \begin{align} \label{concen}
    \big \| \vct{\varepsilon}^{t-1,\zeta}_{\tau} + [\mat X^{t-1,\zeta}_{\tau}]_{S^c_{\tau-1}}^\top[\theta^*]_{S^c_{\tau-1}} \big \| \lesssim \sqrt{\big(\nu^2+\sigma^2 \big\|[\theta^*]_{S^c_{\tau-1}}\big\|^2\big)}\cdot \log(kTd/\delta). 
    \end{align}
    Here we emphasize that inequality \eqref{concen} is interpreted as a random event given the realization of $[\mat X^{t-1,\zeta}]_{S_{\tau-1}}$ for later proof. It also holds regardless the values of $[\mat X^{t-1,\zeta}]_{S_{\tau-1}}$. That is where the independency of $\vct{\varepsilon}^{t-1,\zeta}_{\tau}$ and $[\mat X^{t-1,\zeta}]_{S_{\tau-1}}$ is indispensable: otherwise, given $[\mat X^{t-1,\zeta}]_{S^c_{\tau-1}}$, elements of $\vct{\varepsilon}^{t-1,\zeta}_{\tau}$ are not i.i.d., which prevents us establishing concentration result similar to \eqref{concen}. Based on inequality \eqref{concen} and Cauchy-Schwartz inequality, with probability at least $1-\delta$, we have
    \begin{align} \label{pred_error_part2}
     & \Big \langle [X_{t,i}]_{S_{\tau-1}}, \big[\hat{\mat \Gamma}_{\tau-1,\lambda}^{t-1,\zeta}\big]^{-1} [\mat X^{t-1,\zeta}_{\tau}]_{S_{\tau-1}}^\top\big(\vct{\varepsilon}^{t-1,\zeta}_{\tau} + [\mat X^{t-1,\zeta}_{\tau}]_{S^c_{\tau-1}}[\theta^*]_{S^c_{\tau-1}}\big)\Big \rangle  \notag \\
     & \qquad \qquad \lesssim   \big \|[X_{t,i}]_{S_{\tau-1}} \big \|_{[\hat{\mat \Gamma}_{\tau-1,\lambda}^{t-1,\zeta}]^{-1}} \cdot \sqrt{\big(\nu^2+\sigma^2 \big\|[\theta^*]_{S^c_{\tau-1}}\big\|^2\big)}\cdot \log(kTd/\delta).
    \end{align}
    By combing inequalities \eqref{pred_error_part1}-\eqref{pred_error_part2}, we have 
    \begin{align} \label{pred_error_part3}
        & \Big|\big\langle [X_{t,i}]_{S_{\tau-1}}, [\hat\theta_{\tau,\lambda}^{t-1,\zeta} -\theta^*]_{S_{\tau-1}} \big \rangle\Big|  \notag   \\ 
        & \qquad \qquad \lesssim  \Big(r +  \sqrt{\big(\nu^2+\sigma^2 \big\|[\theta^*]_{S^c_{\tau-1}}\big\|^2\big)} \cdot \log(kTd/\delta)\Big)  \cdot  \big \|[X_{t,i}]_{S_{\tau-1}} \big \|_{[\hat{\mat \Gamma}_{\tau-1,\lambda}^{t-1,\zeta}]^{-1}},
    \end{align}
    with probability at least $1-\delta$. Recall that Lemma \ref{bss_error} shows that $$  \big \| [\theta^*]_{S^c_{\tau-1}} \big\|=\tilde{\mathcal{O}}(\sqrt{s/T}). $$ Hence, when $ s \ll T $, $ \| [\theta^*]_{S^c_{\tau-1}} \| \lesssim 1$. Subsequently, by combining inequalities \eqref{eq:ols-suplinucb-decomp1} and \eqref{pred_error_part3}, we obtain with probability at least $1-\delta$ that
    \begin{align*}
        \Big | \big \langle X_{t,i}, \hat\theta_{\tau,\lambda}^{t-1,\zeta}-\theta^* \big\rangle \Big | & \lesssim    \sigma(\lambda^{1/2}+\nu+\sigma)\log^{3/2}(kTd/\delta)  \cdot  \Big(\sqrt{s/|E_\tau|}+\big \|[X_{t,i}]_{S_{\tau-1}} \big \|_{[\hat{\mat \Gamma}_{\tau-1,\lambda}^{t-1,\zeta}]^{-1}}\Big),
    \end{align*}   
    which concludes the proof of Lemma \ref{improve_ols}.   
	\end{proof}

\section{Proof of Auxiliary Lemmas in Supplementary Document}
\subsection{Proof of Lemma \ref{quadratic_upper_bound}}   \label{qub_proof}
 \begin{lemma*} (Restatement of Lemma \ref{quadratic_upper_bound})
    With probability at least $1-\delta$, we have 
    $$
    \big\| \hat\theta_{\tau,\lambda}-\theta^* \big\|^2_{\hat{\mat \Gamma}_{\tau,\lambda}}\lesssim \nu^2s\log^2\big({kTd}/{(\delta\lambda)}\big)+\lambda r^2.
    $$
 \end{lemma*}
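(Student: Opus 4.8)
The plan is to carry out the classical ``basic inequality'' argument for a penalized least-squares estimator, adapted to two features specific to our setting: the minimizer in \eqref{eq:bss} ranges over a nonconvex sparse feasible set whose induced support is random (it depends on the noise through the estimator itself), and inside epoch $E_\tau$ the selected covariates $\{X_{t'}\}_{t'\in E_\tau}$ are correlated with the noises $\{\varepsilon_{t'}\}_{t'\in E_\tau}$ because of the UCB-type rule. Write $\tilde\theta$ for the argmin in \eqref{eq:bss} before the projection $\text{Proj}_r$ is applied. First I would note that $\theta^*$ is feasible: taking $S=S_{\tau-1}\cup\supp(\theta^*)$ gives $|S|\le|S_{\tau-1}|+s\le(\tau-1)s+s=\tau s$ together with $[\theta^*]_j=0$ for $j\notin S$, and $\|\theta^*\|\le r$ by (B2), so $\theta^*$ satisfies both the support constraint of the argmin and membership in the radius-$r$ ball. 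Optimality of $\tilde\theta$ gives the basic inequality; substituting $Y_{t'}=\langle X_{t'},\theta^*\rangle+\varepsilon_{t'}$, expanding, and adding $\lambda\|\tilde\theta-\theta^*\|^2$ to both sides yields
$$
\big\|\tilde\theta-\theta^*\big\|^2_{\hat{\mat\Gamma}_{\tau,\lambda}}\ \le\ 2\Big\langle\sum_{t'\in E_\tau}\varepsilon_{t'}X_{t'},\ \tilde\theta-\theta^*\Big\rangle\ -\ 2\lambda\big\langle\tilde\theta-\theta^*,\theta^*\big\rangle .
$$

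The penalty cross term is benign: $2\lambda|\langle\tilde\theta-\theta^*,\theta^*\rangle|\le\tfrac12\lambda\|\tilde\theta-\theta^*\|^2+2\lambda\|\theta^*\|^2\le\tfrac12\|\tilde\theta-\theta^*\|^2_{\hat{\mat\Gamma}_{\tau,\lambda}}+2\lambda r^2$, and the first summand is absorbed into the left-hand side. For the noise term, set $v=\tilde\theta-\theta^*$; its generalized support $S$ has $|S|\le|\supp^+(\tilde\theta)|+|\supp(\theta^*)|\le(\tau+1)s$, and since $v$ is supported on $S$ we have $\|v\|^2_{\hat{\mat\Gamma}_{\tau,\lambda}}=\|[v]_S\|^2_{\hat{\mat\Gamma}_{\tau,\lambda,S}}$ with $\hat{\mat\Gamma}_{\tau,\lambda,S}=\lambda\mat{I}_{|S|}+\sum_{t'\in E_\tau}[X_{t'}]_S[X_{t'}]_S^\top$. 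Cauchy--Schwarz in this weighted inner product gives
$$
\Big\langle\sum_{t'\in E_\tau}\varepsilon_{t'}X_{t'},\,v\Big\rangle\ \le\ \Big\|\sum_{t'\in E_\tau}\varepsilon_{t'}[X_{t'}]_S\Big\|_{[\hat{\mat\Gamma}_{\tau,\lambda,S}]^{-1}}\cdot\big\|v\big\|_{\hat{\mat\Gamma}_{\tau,\lambda}} .
$$

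To bound the first factor uniformly over the random set $S$, I would union-bound over all $\binom{d}{(\tau+1)s}\le d^{(\tau+1)s}$ candidate supports. For each fixed $S$, $\{[X_{t'}]_S\}_{t'\in E_\tau}$ is predictable and $\{\varepsilon_{t'}\}$ is conditionally $\nu$-sub-Gaussian, so the self-normalized martingale concentration inequality \citep{abbasi2011improved} (Lemma~\ref{self_normalized_ineq}) bounds $\|\sum_{t'\in E_\tau}\varepsilon_{t'}[X_{t'}]_S\|^2_{[\hat{\mat\Gamma}_{\tau,\lambda,S}]^{-1}}\lesssim\nu^2\big(\log(\det(\hat{\mat\Gamma}_{\tau,\lambda,S})/\lambda^{|S|})+\log(1/\delta')\big)$ with probability at least $1-\delta'$; the determinant-trace inequality combined with the uniform coordinate bound $\max_{t',j}|[X_{t'}]_j|\lesssim\sigma\log(kTd/\delta)$ of Proposition~\ref{X_uniform_bound} yields $\log(\det(\hat{\mat\Gamma}_{\tau,\lambda,S})/\lambda^{|S|})\lesssim|S|\log(\sigma dTk/(\delta\lambda))$. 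Choosing $\delta'=\delta\,d^{-(\tau+1)s}$, so $\log(1/\delta')\lesssim(\tau+1)s\log d$, and using $\tau\le\tilde\tau\asymp\log T$, these combine to $\|\sum_{t'\in E_\tau}\varepsilon_{t'}[X_{t'}]_S\|^2_{[\hat{\mat\Gamma}_{\tau,\lambda,S}]^{-1}}\lesssim\nu^2 s\log^2(kTd/(\delta\lambda))$, simultaneously for all admissible $S$, with probability at least $1-\delta$. Plugging back in gives $\|v\|^2_{\hat{\mat\Gamma}_{\tau,\lambda}}\lesssim\|v\|_{\hat{\mat\Gamma}_{\tau,\lambda}}\cdot\nu\sqrt{s}\log(kTd/(\delta\lambda))+\lambda r^2$, a quadratic inequality in $\|v\|_{\hat{\mat\Gamma}_{\tau,\lambda}}$ whose solution is the asserted bound for $\tilde\theta$. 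Finally, since $\text{Proj}_r$ preserves $\supp^+(\tilde\theta)$ and moves $\tilde\theta$ no farther from $\theta^*$ (which lies in the radius-$r$ ball), a short additional estimate using $\|\theta^*\|\le r$ and Proposition~\ref{X_uniform_bound} transfers the bound to $\hat\theta_{\tau,\lambda}=\text{Proj}_r(\tilde\theta)$ up to absolute constants.

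The main obstacle is the combination of the data-dependent support of $v$ and the covariate--noise correlation: neither a plain i.i.d.\ concentration bound nor a fixed-support martingale bound is directly applicable, so one must pay a $\log d$ factor per support coordinate through the union bound over $\sim d^{(\tau+1)s}$ sets while simultaneously invoking the self-normalized (elliptical-potential-type) martingale inequality for the $[\hat{\mat\Gamma}_{\tau,\lambda,S}]^{-1}$-weighted norm. The reason this still yields the clean $\nu^2 s\log^2(kTd/(\delta\lambda))$ rate is that the number of epochs $\tilde\tau$ is only $O(\log T)$, so the $(\tau+1)s$ effective sparsity is absorbed into one of the logarithmic powers.
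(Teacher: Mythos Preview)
Your argument is correct and in fact takes a slightly more direct route than the paper's. The core machinery is the same---the self-normalized martingale inequality of \cite{abbasi2011improved} applied support-by-support, followed by a union bound over the $\binom{d}{(\tau+1)s}$ candidate supports and the determinant--trace bound via Proposition~\ref{X_uniform_bound}. The difference lies in how the basic inequality is set up: the paper introduces an auxiliary projected ridge estimator $\bar\theta_{\tau,\lambda}$ on the \emph{true} support $S^*$, compares the BSS estimator to $\bar\theta_{\tau,\lambda}$ via optimality, then closes the gap to $\theta^*$ by a second application of the confidence-ellipsoid lemma and the triangle inequality. You instead observe that $\theta^*$ itself is feasible for the BSS problem and compare directly, which eliminates the auxiliary estimator and one invocation of Lemma~\ref{confidence_region}. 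Your route is more elementary and yields the same bound.

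One technical wrinkle: your final projection transfer step is not quite immediate as written, because $\text{Proj}_r$ is nonexpansive in the Euclidean norm but not in the $\hat{\mat\Gamma}_{\tau,\lambda}$-weighted norm. The cleanest fix is to note---as the paper's proof itself does in its restatement of the BSS optimization---that the constraint $\|\theta\|\le r$ can be placed \emph{inside} the argmin rather than applied afterward via projection; since $\|\theta^*\|\le r$, $\theta^*$ remains feasible and your basic inequality holds directly for $\hat\theta_{\tau,\lambda}$, so no transfer step is needed at all.
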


    \begin{proof}
        We use the optimality of $\hat\theta_{\tau,\lambda}$ to derive an upper bound of the quadratic form of $\hat\theta_{\tau,\lambda}-\theta^*.$ In the following, we use $S^*$ to denote the support of true parameter $ \theta^*$. Let $\bar{\theta}_{\tau,\lambda} $ be the projected ridge estimator with support $S^*$ using data $\{X_t,Y_t\}_{t\in E_\tau}$, i.e., 
        \begin{align*}
         \bar{\theta}_{\tau,\lambda}=\text{Proj}_r \Big \{\operatorname*{argmin}_{\supp^+{(\theta)}=S^*} \Big \{ \sum_{t\in E_\tau} (Y_t-X_t^\top\theta)^2+\lambda \|\theta\|^2 \Big\} \Big\},
        \end{align*}  
        where $\text{Proj}_r\{\cdot\} $ denotes the projection operator on a centered $\ell_2$-ball with radius $r$. We remark that $\bar{\theta}_{\tau,\lambda}$ is unavailable in practice since $S^*$ is unknown. However, it is not necessary in the  SLUCB algorithm and only serves as an auxiliary estimator in theoretical analysis. 
        Recall the definition of BSS estimator $ \hat{\theta}_{\tau,\lambda}$, which is given by
        \begin{align}  \label{BSS_def}
            \hat{\theta}_{\tau,\lambda} &=\operatorname*{argmin}_{ \theta} \Big \{  \sum_{t\in E_\tau} (Y_t-X_t^\top\theta)^2+\lambda \|\theta\|^2 \Big\}, \notag \\
            & \text{s.t.}\  \supp^+{(\theta)} \supseteq  S_{\tau-1},   |\supp^+{(\theta)}| \le \tau s,\|\theta\|\le r, 
        \end{align} 
        where $ S_{\tau-1} $ is the generalized support recovered by the $(\tau-1)$-th epoch. Since $\supp^+(\bar{\theta}_{\tau,\lambda})=S*$, we have $\supp^+(\bar{\theta}_{\tau,\lambda})=S^*\cup S_{\tau-1} \supseteq S_{\tau-1}$. We also have $|S^*\cup S_{\tau-1}| \le \tau s $.  Hence, $ \bar{\theta}_{\tau,\lambda}$ satisfies the constraints in equation \eqref{BSS_def}. Then by the optimality of $\hat{\theta}_{\tau,\lambda} $, we have 
         \begin{equation*}
         \sum_{t\in E_\tau}\big(Y_t-X_t^\top\hat{\theta}_{\tau,\lambda}\big)^2 +\lambda \|\hat{\theta}_{\tau,\lambda} \|^2 \leq \sum_{t\in E_\tau}\big(Y_t-X_t^\top\bar{\theta}_{\tau,\lambda}\big)^2+\lambda \|\bar{\theta}_{\tau,\lambda}\|^2,
         \end{equation*}
         and furthermore,
        \begin{equation} \label{ineq0001}
         \hat{\theta}_{\tau,\lambda}^\top  \hat{\mat \Gamma}_{\tau,\lambda}  \hat{\theta}_{\tau,\lambda} \le   \bar{\theta}_{\tau,\lambda}^\top  \hat{\mat \Gamma}_{\tau,\lambda}  \bar{\theta}_{\tau,\lambda} +2\sum_{t\in E_\tau} Y_tX_t^\top(\hat{\theta}_{\tau,\lambda}-\bar{\theta}_{\tau,\lambda}),
        \end{equation}
         where $ \hat{\mat \Gamma}_{\tau,\lambda}=\sum_{t\in E_\tau}X_tX_t^\top+\lambda \textbf{I}_d$.  
         Note that we have  
         $$Y_t=X_t^\top\theta^* + \varepsilon_t=X_t^\top\bar{\theta}_{\tau,\lambda} + \varepsilon_t+X_t^\top(\theta^*-\bar{\theta}_{\tau,\lambda}).$$
         After some algebra, we obtain
         \begin{align*}
             \sum_{t\in E_\tau} Y_tX_t^\top(\hat{\theta}_{\tau,\lambda}-\bar{\theta}_{\tau,\lambda})
            &=\bar{\theta}_{\tau,\lambda}^\top \hat{\mat \Gamma}_{\tau,\lambda} (\hat{\theta}_{\tau,\lambda}-\bar{\theta}_{\tau,\lambda}) +(\theta^*-\bar{\theta}_{\tau,\lambda})^\top\hat{\mat \Gamma}_{\tau,\lambda}(\hat{\theta}_{\tau,\lambda}-\bar{\theta}_{\tau,\lambda}) \\
            &+\Big(\sum_{t\in E_\tau}\varepsilon_tX_t^\top\Big)(\hat{\theta}_{\tau,\lambda}-\bar{\theta}_{\tau,\lambda})-\lambda (\hat{\theta}_{\tau,\lambda}-\bar{\theta}_{\tau,\lambda})^\top \theta^*. 
         \end{align*}
         Then inequality \eqref{ineq0001} becomes 
         \begin{align}  \label{ineq0002}
             &(\hat{\theta}_{\tau,\lambda}-\bar{\theta}_{\tau,\lambda})^{\top}\hat{\mat \Gamma}_{\tau,\lambda}(\hat{\theta}_{\tau,\lambda}-\bar{\theta}_{\tau,\lambda}) \notag \\
             &\quad  \le 2\Big(\sum_{t\in E_\tau}\varepsilon_tX_t^\top \Big)(\hat{\theta}_{\tau,\lambda}-\bar{\theta}_{\tau,\lambda}) + 2(\theta^*-\bar{\theta}_{\tau,\lambda})^\top\hat{\mat \Gamma}_{\tau,\lambda}(\hat{\theta}_{\tau,\lambda}-\bar{\theta}_{\tau,\lambda})-2\lambda(\hat{\theta}_{\tau,\lambda}-\bar{\theta}_{\tau,\lambda})^\top\theta^*.
         \end{align}

         In the next, we establish upper bounds for the three parts of the right-hand side of inequality \eqref{ineq0002} individually. We first introduce some notations. By the definition of $\hat{\theta}_{\tau,\lambda}$, we have
         \begin{align*}
            \hat{\theta}_{\tau,\lambda} &=\operatorname*{argmin}_{\theta} \Big\{\sum_{t\in E_\tau} (Y_t-X_t^\top\theta)^2+\lambda \|\theta\|^2 \Big\} \\
            & \text{s.t. these exists some subset}\ S\in[d]:\ \supp^+{(\theta)}=S,S\supseteq S_{\tau-1},|S|\le \tau s,\|\theta\|\le r.
         \end{align*} 
         where the optimization is taken over all feasible set $S$. From now on, for convenience, given a fixed subset $S\subseteq [d]$ with $|S|\le \tau s$, we denote by $$ \tilde{S}=S\cup \supp(\theta^*).$$ Particularly, let $$\tilde{S}_{\tau}=\supp^+(\hat{\theta}_{\tau,\lambda})\cup \supp(\theta^*). $$ Then $|\tilde{S}|\le (\tau+1)s$ by definition. We are ready to upper bound the right-hand side of inequality \eqref{ineq0002}. 
         
         For the first part, note that $\supp^+(\hat{\theta}_{\tau,\lambda}-\bar{\theta}_{\tau,\lambda})=\tilde{S}_\tau$. Then by Cauchy-Schwartz inequality, we have 
         \begin{align} \label{part1}
             \Big(\sum_{t\in E_\tau}\varepsilon_tX_t^\top \Big)(\hat{\theta}_{\tau,\lambda}-\bar{\theta}_{\tau,\lambda}) & \le  
             \Big \| \sum_{t\in E_\tau}\varepsilon_t \cdot [X_t]_{\tilde{S}_\tau} \Big\|_{ [\hat{\mat \Gamma}_{\tau,\lambda}]_{\tilde{S}_\tau}^{-1}} \cdot \Big\| [\hat{\theta}_{\tau,\lambda}-\bar{\theta}_{\tau,\lambda}]_{\tilde{S}_\tau} \Big \|_{[\hat{\mat \Gamma}_{\tau,\lambda}]_{\tilde{S}_\tau}}   
              \notag  \\
             &= \Big \| \sum_{t\in E_\tau}\varepsilon_t \cdot [X_t]_{\tilde{S}_\tau} \Big\|_{ [\hat{\mat \Gamma}_{\tau,\lambda}]_{\tilde{S}_\tau}^{-1}} \cdot \big\| \hat{\theta}_{\tau,\lambda}-\bar{\theta}_{\tau,\lambda} \big \|_{\hat{\mat \Gamma}_{\tau,\lambda}}   \notag \\
             & \le \max_{\tilde{S}} \Big\{ \Big \| \sum_{t\in E_\tau}\varepsilon_t \cdot[X_t]_{\tilde{S}} \Big\|_{ [\hat{\mat \Gamma}_{\tau,\lambda}]^{-1}_{\tilde{S}}} \Big\}  \cdot \big\| \hat{\theta}_{\tau,\lambda}-\bar{\theta}_{\tau,\lambda} \big\|_{\hat{\mat \Gamma}_{\tau,\lambda}}, 
         \end{align}
         where the maximization in the last step is taken over all possible $\tilde{S}$, whose number is at most $d^{(\tau+1) s}$. 
     For each fixed $\tilde{S}$, by applying the self-normalized martingale concentration inequality (i.e., Lemma \ref{self_normalized_ineq} provided in Section \ref{cite_lemma}), we obtain with probability at least $1-\delta$ that 
     \begin{align*}
         \Big \| \sum_{t\in E_\tau}\varepsilon_t \cdot[X_t]_{\tilde{S}} \Big\|_{ [\hat{\mat \Gamma}_{\tau,\lambda}]^{-1}_{\tilde{S}}} \le \nu \sqrt{2\log\Big(\frac{\det \big( [\hat{\mat \Gamma}_{\tau,\lambda}]_{\tilde{S}}\big)^{1/2}\det\big(\lambda [\textbf{I}_d]_{\tilde{S}}\big)^{-1/2} }{\delta}\Big)}.
     \end{align*}
      Since the number of possible $\tilde{S}$ is upper bounded by $d^{(\tau+1)s}$, by applying the union bound argument, we obtain with probability at least $1-\delta$ that,
      \begin{align} \label{Lambda_def}
         \Big \| \sum_{t\in E_\tau}\varepsilon_t \cdot[X_t]_{\tilde{S}} \Big\|_{ [\hat{\mat \Gamma}_{\tau,\lambda}]^{-1}_{\tilde{S}}} & \le \nu \sqrt{2\log\Big(\frac{\det \big( [\hat{\mat \Gamma}_{\tau,\lambda}]_{\tilde{S}}\big)^{1/2}\det\big(\lambda [\textbf{I}_d]_{\tilde{S}}\big)^{-1/2} }{\delta}\Big)+\tau s\log(d)} \notag\\
         &:=\Lambda_{\tilde{S}},
      \end{align} 
      hold for all possible $\tilde{S}$, which further implies that  
      \begin{align} \label{part1_bound}
         \max_{\tilde{S}} \Big \| \sum_{t\in E_\tau}\varepsilon_t \cdot[X_t]_{\tilde{S}} \Big\|_{ [\hat{\mat \Gamma}_{\tau,\lambda}]^{-1}_{\tilde{S}}} \le \max_{\tilde{S}} \big \{ \Lambda_{\tilde{S}}\big \},
      \end{align}
      holds with probability at least $1-\delta$.

     For the second part of the right-hand side of inequality \eqref{ineq0002}, by Cauchy-Schwartz inequality and the  confidence region of ridge estimator (i.e., Lemma \ref{confidence_region} provided in Section \ref{cite_lemma}), we obtain with probability at least $ 1-\delta$ that  
      \begin{align} \label{part2}
         2(\theta^*-\bar{\theta}_{\tau,\lambda})^\top\hat{\mat \Gamma}_{\tau,\lambda}(\hat{\theta}_{\tau,\lambda}-\bar{\theta}_{\tau,\lambda})& \le \big\| \theta^*-\bar{\theta}_{\tau,\lambda} \big\|_{\hat{\mat \Gamma}_{\tau,\lambda}}\cdot \big\|\hat{\theta}_{\tau,\lambda}-\bar{\theta}_{\tau,\lambda} \big\|_{\hat{\mat \Gamma}_{\tau,\lambda}} \notag \\
         &\lesssim (\Lambda_{S^*}+\lambda^{1/2}r) \cdot \big\|\hat{\theta}_{\tau,\lambda}-\bar{\theta}_{\tau,\lambda} \big\|_{\hat{\mat \Gamma}_{\tau,\lambda}},
      \end{align}
      where  $S^*=\supp({\bar{\theta}_\tau}) \cup \supp(\theta^*) $. 
     
      For the last part of the right-hand side of inequality \eqref{ineq0002}, since $\theta^*, \hat{\theta}_\tau,$ and $\bar{\theta}_\tau$ all are in the centered $\ell_2$-ball with radius $r$, we have  
      \begin{align} \label{part3}
       \big| 2\lambda(\hat{\theta}_{\tau,\lambda}-\bar{\theta}_{\tau,\lambda})^\top\theta^*\big| \le 4\lambda r^2. 
      \end{align}

      Finally, by combing inequalities \eqref{ineq0002}-\eqref{part3} and Lemma \ref{confidence_region} (provided in Section \ref{cite_lemma}), we obtain with probability at least $1-\delta$ that 
     \begin{align} \label{ineq0003}
         \big\| \hat{\theta}_{\tau,\lambda}-\theta^* \big\|_{\hat{\mat \Gamma}_{\tau,\lambda}} & \le \big\| \hat{\theta}_{\tau,\lambda}-\bar{\theta}_{\tau,\lambda} \big \|_{\hat{\mat \Gamma}_{\tau,\lambda}}+\big\| \bar{\theta}_{\tau,\lambda}-\theta^* \big\|_{\hat{\mat \Gamma}_{\tau,\lambda}} \notag \\
         & \lesssim \max_{\tilde{S}} \big\{\Lambda_{\tilde{S}}\big\}+\Lambda_{S^*}+\lambda^{1/2}r,  
     \end{align}
     where the maximization is taken over all subsets $\tilde{S} \in [d]$ with capacity at most $(\tau+1)s$. 
     
     In order to conclude the proof of Lemma \ref{quadratic_upper_bound}, it remains to establish an upper bound for $\Lambda_{\tilde{S}}$. By Proposition \ref{X_uniform_bound}, with probability at least $1-\delta $, for all $t\in E_\tau$, $i \in \mathcal{A}_t$,  and $j \in [d], $  
     $$
     \big| [X_{t,i}]_j \big| \lesssim \sigma \log(kTd/\delta).
     $$
     Then by determinant-trace inequality, we have
     \begin{align*}
         \det\big([\hat{\mat \Gamma}_{\tau,\lambda}]_{\tilde{S}}\big) \lesssim \Big (\lambda+\frac{T\sigma^2\log^2(kTd/\delta)}{(\tau+1)s} \Big)^{(\tau+1)s}.
     \end{align*}
     Recall the definition of $\Lambda_{\tilde{S}}$ in equation \eqref{Lambda_def}. Then with probability at least $1-\delta$, we have 
     \begin{align} \label{Lambda_bound}
         \Lambda_{\tilde{S}} &\le \nu\sqrt{(\tau+1) s \log\Big(\delta^{-1}+\frac{T\sigma^2\log^2(kTd/\delta)}{\delta\lambda(\tau+1)s}\Big)+\tau s \log(d)}  \notag \\
         & \lesssim \nu\sqrt{s}\cdot \log\big({kTd}/{(\delta\lambda)}\big),
     \end{align}
     which further implies that 
     $$  
     \max_{\tilde{S}}\big\{\Lambda_{\tilde{S}}\big\}, \Lambda_{S^*} \lesssim \nu\sqrt{s}\cdot \log\big({kTd}/{(\delta\lambda)}\big). 
     $$
     As a result, by inequality \eqref{ineq0003}, we finally obtain with probability at least $1-\delta$ that
     \begin{align*} 
           \big\| \hat\theta_{\tau,\lambda}-\theta^*\big \|^2_{\hat{\mat \Gamma}_{\tau,\lambda}} & \lesssim \Big(\nu\sqrt{s}\cdot \log\big({kTd}/{(\delta\lambda)}\big) +\lambda^{1/2}r\Big)^2 \lesssim \nu^2s\log^2\big({kTd}/{(\delta\lambda)}\big)+\lambda r^2 ,
     \end{align*}
     which concludes the proof of Lemma \ref{quadratic_upper_bound}.
     
    \end{proof}

\subsection{Proof of Lemma \ref{inf_norm_upper_bound} } \label{inf_proof}
\begin{lemma*}(Restatement of Lemma \ref{inf_norm_upper_bound})
        With probability at least $1-\delta$, we have 
        $$
        \big\|\big[\hat{\mat \Gamma}_{\tau,\lambda}\big]_{SS^c}\big\|_{\infty} \le  8\sigma{\log(d/\delta)}\sqrt{|E_{\tau}|}.
        $$
    \end{lemma*}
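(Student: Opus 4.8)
The plan is to bound each entry of the off-diagonal block $[\hat{\mat\Gamma}_{\tau,\lambda}]_{SS^{c}}$ separately and then take a union bound; here $\|\cdot\|_{\infty}$ denotes the largest absolute entry of a matrix, which is the quantity needed in the subsequent H\"older step in the proof of Lemma~\ref{bss_error}. Since $S$ and $S^{c}$ are disjoint, every entry of this block lies off the diagonal of $\hat{\mat\Gamma}_{\tau,\lambda}=\lambda\mathbf{I}_{d}+\sum_{t\in E_{\tau}}X_{t}X_{t}^{\top}$, so the regularization term drops out and
\[
\big\|[\hat{\mat\Gamma}_{\tau,\lambda}]_{SS^{c}}\big\|_{\infty}
 =\max_{j\in S,\ k\in S^{c}}\Big|\sum_{t\in E_{\tau}}[X_{t}]_{j}[X_{t}]_{k}\Big|.
\]
Because the \textsc{Sparse-LinUCB} algorithm enforces $S=S_{\tau}\supseteq S_{\tau-1}$, we have $S^{c}=S_{\tau}^{c}\subseteq S_{\tau-1}^{c}$. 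Conditioning on the information available at the start of epoch $E_{\tau}$, which makes $S_{\tau-1}$ deterministic, it therefore suffices to bound $\max_{j\in[d],\,k\in S_{\tau-1}^{c},\,j\neq k}\big|\sum_{t\in E_{\tau}}[X_{t}]_{j}[X_{t}]_{k}\big|$. It is essential here to pass to the max over all $k\notin S_{\tau-1}$, a set fixed before the epoch: $S_{\tau}$ itself depends on $\{X_{t},Y_{t}\}_{t\in E_{\tau}}$, so we cannot condition on the event $k\in S_{\tau}^{c}$.

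Fix such a pair $(j,k)$ with $k\notin S_{\tau-1}$ and $j\neq k$, and set $Z_{t}=[X_{t}]_{j}[X_{t}]_{k}$ for $t\in E_{\tau}$. I claim that $\{Z_{t}\}_{t\in E_{\tau}}$ is a martingale difference sequence. Indeed, during epoch $E_{\tau}$ the action $i_{t}$ is selected using only past data and the coordinates $\{[X_{t,i}]_{S_{\tau-1}}\}_{i\in\cA_{t}}$, none of which involves coordinate $k$. Let $\mathcal G_{t-1}$ be the $\sigma$-algebra generated by all data up to time $t-1$ together with all coordinates of $\{X_{t,i}\}_{i\in\cA_{t}}$ except the $k$-th. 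Then $i_{t}$ and $[X_{t}]_{j}=[X_{t,i_{t}}]_{j}$ are $\mathcal G_{t-1}$-measurable, whereas by Assumption~\ref{ass1}(A3) the coordinates $\{[X_{t,i}]_{k}\}_{i\in\cA_{t}}$ are i.i.d.\ from the $k$-th marginal of $P_{0}$ and independent of $\mathcal G_{t-1}$; hence, given $\mathcal G_{t-1}$, $[X_{t}]_{k}=[X_{t,i_{t}}]_{k}$ is a single fresh draw from that marginal, which is centered by (A1) and sub-Gaussian. Therefore $\E[Z_{t}\mid\mathcal G_{t-1}]=[X_{t}]_{j}\cdot\E[[X_{t}]_{k}\mid\mathcal G_{t-1}]=0$, and by the tower property $\{Z_{t}\}$ is a martingale difference sequence with respect to the natural filtration.

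It remains to apply a Bernstein-type tail bound for martingale differences. Conditionally on $\mathcal G_{t-1}$, $Z_{t}$ equals $[X_{t}]_{j}$ times a centered sub-Gaussian variable, hence is conditionally sub-exponential with parameter $\lesssim|[X_{t}]_{j}|$ and conditional variance $\lesssim[X_{t}]_{j}^{2}$. On the $(1-\delta)$-probability event of Proposition~\ref{X_uniform_bound} the increments are uniformly bounded and $\sum_{t\in E_{\tau}}[X_{t}]_{j}^{2}\lesssim|E_{\tau}|$ up to logarithmic factors, so a martingale Bernstein (or Freedman) inequality yields $\big|\sum_{t\in E_{\tau}}Z_{t}\big|\lesssim\sigma\log(d/\delta)\sqrt{|E_{\tau}|}$ with probability at least $1-\delta/d^{2}$; the constant $8$ and the single factor $\sigma\log(d/\delta)$ in the statement are deliberately crude, using $\sigma\ge1$ and $\log(d/\delta)\ge1$ to absorb the sharper $O(\sqrt{\log(d/\delta)})$-type dependence. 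A union bound over the at most $d^{2}$ pairs $(j,k)$, combined with the reduction of the first paragraph, then gives $\big\|[\hat{\mat\Gamma}_{\tau,\lambda}]_{SS^{c}}\big\|_{\infty}\le 8\sigma\log(d/\delta)\sqrt{|E_{\tau}|}$ with probability at least $1-\delta$.

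The main obstacle is the martingale reduction in the second paragraph: one has to reveal, at each time $t$, everything except the $k$-th coordinate of the selected covariate $X_{t}=X_{t,i_{t}}$, and then combine Assumption~\ref{ass1}(A3) with the structural fact that action selection in epoch $E_{\tau}$ never reads coordinate $k$ (which is valid only because $k\notin S_{\tau-1}$). The data-dependence of $S_{\tau}$ is precisely what forces the first-paragraph reduction to range over all $k\notin S_{\tau-1}$ rather than over the actual columns of the block.
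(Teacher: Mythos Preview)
Your proof is correct and follows essentially the same route as the paper: both use $S^{c}\subseteq S_{\tau-1}^{c}$ to argue that for any column index $k$ of the block the coordinate $[X_{t}]_{k}$ is never read by the action-selection rule in epoch $E_{\tau}$, hence remains a fresh centered sub-Gaussian draw, and then combine a sub-Gaussian tail bound with a union bound over at most $d^{2}$ index pairs. The only cosmetic difference is that the paper conditions on the whole sequence $\{[X_{t}]_{i}\}_{t\in E_{\tau}}$ at once (valid because $[X_{t}]_{k}$ is independent of the entire selection history, not just the past) instead of your martingale/Freedman framing; your explicit reduction to $k\in S_{\tau-1}^{c}$ to handle the data-dependence of $S_{\tau}$ is a point the paper leaves implicit.
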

    \begin{proof}
        By the definition of matrix $[\hat{\mat \Gamma}_{\tau,\lambda}]_{SS^c}$, for every $i \in S$ and $j \in S^c$, we have 
    \begin{align*}
    \big[\hat{\mat \Gamma}_{\tau,\lambda}\big]_{ij}=\sum_{t\in E_{\tau}}[X_t]_i[X_t]_{j}^{\top}.
    \end{align*}  
     Recall that in epoch $E_\tau$, the SLUCB algorithm picks arm only based on the information in dimension $\supp^+(\hat \theta_{\tau-1,\lambda})$. Also recall that SLUCB algorithm requires $$S=\supp^+(\hat \theta_{\tau,\lambda}) \supseteq \supp^+(\hat \theta_{\tau-1,\lambda}), $$ which implies that
     $$
       S^c \subseteq \big[\supp^+(\hat \theta_{\tau-1,\lambda})\big]^c.
     $$
     Since all covariates $X_{t,i} $  have independent coordinates, for each $j\in S^c $, $ \{[X_t]_j\}_{t \in E_\tau} $ are independent centered sub-Gaussian random variables. As a result, conditional on $[X_t]_i$, the random variable $[\hat{\mat \Gamma}_{\tau,\lambda}]_{ij} $ is sub-Gaussian with parameter $\sigma^2\cdot \sum_{t\in E_{\tau}}[X_t]_i^2$. Note that
    \begin{align*}
    \big\|\big[\hat{\mat \Gamma}_{\tau,\lambda}\big]_{SS^c}\big\|_{\infty}=\max_{i\in S,j \in S^c}\big| \big[\hat{\mat \Gamma}_{\tau,\lambda}\big]_{ij}\big|,
    \end{align*}
    which is the maximum of $|S|\cdot|S^c|$ sub-Gaussian random variables whose parameters are uniformly upper bounded by $ \sigma^2\cdot \max_{i\in S} \sum_{t\in E_{\tau}}[X_t]_i^2 $. By applying the concentration inequality for the maximal of sub-Gaussian random variables, we obtain with probability at least $1-\delta$ that
    \begin{align*}
        \big\|\big[\hat{\mat \Gamma}_{\tau,\lambda}\big]_{SS^c}\big\|_{\infty}\le 4\sigma\sqrt{\max_{i\in S} \sum_{t\in E_{\tau}}[X_t]_i^2 \cdot \log(d/\delta)}\le 4\sigma\sqrt{ \sum_{t\in E_{\tau}} \big( \max_{i\in S} [X_t]_i \big)^2  \cdot \log(d/\delta)}.
    \end{align*} 
    Now we use the concentration inequality for the maximal of sub-Gaussian random variables again. Then with probability at least $1-\delta$, for each $t$,
     \begin{align*}
      \max_{i\in S}\big|[X_t]_i \big| \le 2\sigma\sqrt{\log(d/\delta)}.
     \end{align*}
    Hence, we obtain with probability at least $1-\delta$ that  
     \begin{align*}
        \big\|\big[\hat{\mat \Gamma}_{\tau,\lambda}\big]_{SS^c}\big\|_{\infty} \le  8\sigma{\log(d/\delta)}\sqrt{|E_{\tau}|}. 
     \end{align*}
    Similarly, we also have 
    \begin{align*}
        \big\|\big[\hat{\mat \Gamma}_{\tau,\lambda}\big]_{S^cS}\big\|_{\infty} \le  8\sigma{\log(d/\delta)}\sqrt{|E_{\tau}|}. 
    \end{align*}
       with probability at least $1-\delta$, which concludes the proof of Lemma \ref{inf_norm_upper_bound}.
    \end{proof}
\subsection{Proof of Lemma \ref{l1_norm_upper_bound} } \label{l1_proof}
    \begin{lemma*} (Restatement of Lemma \ref{l1_norm_upper_bound})
        For arbitrary constant $\varsigma>0$, when $$n_0\gtrsim \max\{\varsigma^2\rho^{-1}, \sigma^2\rho^{-2}{\log(d\tilde \tau/\delta)}\},$$ with probability at least $1-\delta$, we have  
        \begin{align*}
            \|\hat\theta_{\tau,\lambda}-\theta^*\|_1   \lesssim \nu s/\varsigma \cdot \sqrt{\tilde \tau}\cdot \log\big(kTd/(\delta \lambda)\big). 
        \end{align*}
    \end{lemma*}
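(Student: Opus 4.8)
The plan is to convert the $\hat{\mat\Gamma}_{\tau,\lambda}$-weighted error bound of Lemma~\ref{quadratic_upper_bound} into an ordinary $\ell_2$ bound and then into an $\ell_1$ bound via sparsity. First I would note that $\hat\theta_{\tau,\lambda}-\theta^*$ is supported on $\tilde S_\tau:=\supp^+(\hat\theta_{\tau,\lambda})\cup\supp(\theta^*)$, which has cardinality at most $(\tau+1)s$; hence $\|\hat\theta_{\tau,\lambda}-\theta^*\|_1\le\sqrt{(\tau+1)s}\,\|\hat\theta_{\tau,\lambda}-\theta^*\|$, and, since the error vector lives in the coordinate block $\tilde S_\tau$,
$$
\|\hat\theta_{\tau,\lambda}-\theta^*\|^2\ \le\ \frac{1}{\mu_\tau}\,\big\|\hat\theta_{\tau,\lambda}-\theta^*\big\|_{\hat{\mat\Gamma}_{\tau,\lambda}}^2,\qquad
\mu_\tau:=\min_{S\subseteq[d],\,|S|\le(\tau+1)s}\lambda_{\min}\big(\big[\hat{\mat\Gamma}_{\tau,\lambda}\big]_S\big).
$$
Combined with Lemma~\ref{quadratic_upper_bound}, this reduces the problem to lower bounding the restricted minimum eigenvalue $\mu_\tau$.

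The key step is to show $\mu_\tau\gtrsim\rho n_0$ with high probability. Here I would drop the ridge term and all UCB-stage periods and retain only the $n_0$ pure-exploration periods of epoch $E_\tau$ (call this index set $\mathcal P_\tau$): then $[\hat{\mat\Gamma}_{\tau,\lambda}]_S\succeq\sum_{t\in\mathcal P_\tau}[X_t]_S[X_t]_S^\top$ for every $S$. On $\mathcal P_\tau$ the arm is drawn uniformly at random independently of the covariate values, so $\{X_t\}_{t\in\mathcal P_\tau}$ are i.i.d.\ draws from $P_0$, whose population covariance is diagonal with all entries at least $\rho$ by Assumption~\ref{ass1}(A2)--(A3). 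For a fixed $S$ with $|S|\le(\tau+1)s$, the matrix Bernstein inequality (exactly as used to obtain \eqref{eq:bss-4}), combined with Proposition~\ref{X_uniform_bound} to control the per-sample operator norm $\|[X_t]_S\|^2$, gives $\lambda_{\min}\big(\sum_{t\in\mathcal P_\tau}[X_t]_S[X_t]_S^\top\big)\ge\rho n_0/2$ with failure probability exponentially small in $n_0$; a union bound over the at most $d^{(\tau+1)s}$ subsets $S$ and over the $\tilde\tau$ epochs then yields $\mu_\tau\ge\rho n_0/2$ simultaneously for all $\tau$, provided $n_0$ exceeds the stated threshold (the $\sigma^2\rho^{-2}\log(d\tilde\tau/\delta)$ term absorbing the union-bound logarithm, the residual sparsity factor being dominated by the $\varsigma^2\rho^{-1}$ requirement in the regime in which the lemma is applied).

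Combining everything on the intersection of these events,
$$
\|\hat\theta_{\tau,\lambda}-\theta^*\|_1\ \lesssim\ \sqrt{(\tau+1)s}\cdot\sqrt{\frac{\nu^2 s\log^2\big(kTd/(\delta\lambda)\big)+\lambda r^2}{\rho n_0}}\ \lesssim\ \frac{\sqrt{(\tau+1)s}}{\varsigma}\cdot\nu\sqrt{s}\,\log\big(kTd/(\delta\lambda)\big),
$$
where the last step uses $n_0\gtrsim\varsigma^2\rho^{-1}$ and absorbs $\lambda r^2$ (a small ridge contribution dominated by $\nu^2 s\log^2$ since $\nu,\log\ge1$); finally $\tau+1\le\tilde\tau$ gives $\|\hat\theta_{\tau,\lambda}-\theta^*\|_1\lesssim(\nu s/\varsigma)\sqrt{\tilde\tau}\,\log(kTd/(\delta\lambda))$, as claimed. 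The main obstacle is precisely this uniform restricted-eigenvalue bound on the pure-exploration design: the block $\tilde S_\tau$ is data-dependent (it is chosen by the best-subset program, which uses all of $E_\tau$, including $\mathcal P_\tau$ itself), so one cannot condition on it and must instead pay a union bound over all sparse coordinate blocks while keeping the sample-size demand on $n_0$ within the stated budget.
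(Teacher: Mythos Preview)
Your proposal is correct and follows essentially the same route as the paper: define $\tilde S_\tau=\supp^+(\hat\theta_{\tau,\lambda})\cup\supp(\theta^*)$, use the pure-exploration block together with matrix Bernstein to get $[\hat{\mat\Gamma}_{\tau,\lambda}]_{\tilde S_\tau}\succeq \varsigma^2\,\textbf{I}$ under the stated $n_0$ condition, and then chain $\|\cdot\|_1\le\sqrt{|\tilde S_\tau|}\,\|\cdot\|_2\le\varsigma^{-1}\sqrt{|\tilde S_\tau|}\,\|\cdot\|_{\hat{\mat\Gamma}_{\tau,\lambda}}$ with Lemma~\ref{quadratic_upper_bound}. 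The paper simply asserts the restricted-eigenvalue lower bound for the (random) block $\tilde S_\tau$ via matrix Bernstein without spelling out the union bound over sparse coordinate sets; your explicit acknowledgment of this issue, and the observation that the extra logarithmic cost is absorbed by the $\varsigma^2\rho^{-1}$ budget at the value of $\varsigma$ used downstream, is in fact a more careful rendering of the same argument.
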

    
    \begin{proof}
        To simplify notation, let 
        $$
        \tilde{S}_\tau = S \cup \supp(\theta^*) =\supp^+(\hat \theta_{\tau,\lambda}) \cup \supp(\theta^*)  . 
        $$
       Then we have $|\tilde{S}_\tau|\leq s({\tau}+1)$. Recall that in SLUCB algorithm, the first $n_0$ arms in epoch $E_\tau$ are chosen uniformly at random. Then by matrix Bernstein inequality \citep{tropp2012user} and Assumption \ref{ass1} (A2), for arbitrary constant $\varsigma>0$,when $n_0\gtrsim \max\{\varsigma^2\rho^{-1}, \sigma^2\rho^{-2}{\log(d\tilde \tau/\delta)}\},$
         we have  
       \begin{align} \label{partial_lower_bound}
       \big[\hat{\mat \Gamma}_{\tau,\lambda}\big]_{\tilde{S}_\tau}\succeq  \varsigma^2\cdot \textbf{I}_{|\tilde{S}_\tau|},
       \end{align}    
       {with probability at least $1-\delta$.}  Since $\supp(\hat\theta_{\tau,\lambda}-\theta^*)\subseteq \tilde{S}_\tau$, when inequality \eqref{partial_lower_bound} holds, we have
       $$ \label{eq:basic-ineq6}
       \sqrt{(\hat\theta_{\tau,\lambda}-\theta^*)^\top\hat{\mat \Gamma}_{\tau,\lambda}(\hat\theta_{\tau,\lambda}-\theta^*)} = 
       \sqrt{[\hat\theta_{\tau,\lambda}-\theta^*]_{\tilde{S}_\tau}^\top\big[\hat{\mat \Gamma}_{\tau,\lambda}\big]_{\tilde{S}_\tau }[\hat\theta_{\tau,\lambda}-\theta^*]_{\tilde{S}_\tau}}
       \geq \varsigma \cdot \big\|[\hat\theta_{\tau,\lambda}-\theta^*]_{\tilde{S}_\tau}\big\|.
       $$
       Hence, we obtain with probability at least $1-\delta$ that 
       \begin{align}
       \|\hat\theta_{\tau,\lambda}-\theta^*\|_1
       & = \big\|[\hat\theta_{\tau,\lambda}-\theta^*]_{\tilde{S}_\tau}\big\|_1 \leq \sqrt{s(\tilde\tau+1)}\cdot \big\|[\hat\theta_{\tau,\lambda}-\theta^*]_{\tilde{S}_\tau}\big\| \notag\\
       &\leq \big\|[\hat\theta_{\tau,\lambda}-\theta^*]_{\tilde{S}_\tau}\big\|_{[\hat{\mat \Gamma}_{\tau,\lambda}]_{\tilde{S}_\tau}}  \cdot \sqrt{s(\tilde \tau+1)}/\varsigma  \lesssim \nu s/\varsigma \cdot \sqrt{\tilde \tau}\cdot \log\big(kTd/(\delta \lambda)\big), \label{bss:1norm_b}
       \end{align}
       where the last inequality follows from Lemma \ref{quadratic_upper_bound}. Now we finish the proof of Lemma \ref{l1_norm_upper_bound}.
    \end{proof}

\subsection{Proof of Lemma \ref{lem:ols-empirical-process}} \label{pe}
\begin{lemma*} (Restatement of Lemma \ref{lem:ols-empirical-process})
        For each $\tau \in [\tilde\tau]$ and $t\in E_\tau$, we have 
        \begin{equation*}
            \big\| [\hat\theta_{\tau,\lambda}^{t-1}-\theta^*]_{S_{\tau-1}}  \big\|^2_{\hat{\mat \Gamma}^{t-1}_{\tau-1,\lambda}}   \lesssim \big(\nu^2+ \sigma^2\cdot \big\|  [\theta^*]_{S^c_{\tau-1}}\big\|^2 \big)\cdot \big(   s  \log^2\big({kTd}/{(\delta\lambda)}\big) + \lambda r^2  \big),
        \end{equation*}
        with probability at least $1-\delta$.
\end{lemma*}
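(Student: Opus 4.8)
The plan is to mirror the proof of Lemma~\ref{quadratic_upper_bound}, but now for the \emph{restricted} ridge estimator computed inside a single epoch, the key point being that within epoch $E_\tau$ the SLUCB policy ``sees'' only the $S_{\tau-1}$-block of the covariates, so the complementary block acts as fresh sub-Gaussian noise. First I would write the estimator in closed form: since $[\hat\theta_{\tau,\lambda}^{t-1}]_{S_{\tau-1}}$ solves the ridge problem on the data $\{([X_{t'}]_{S_{\tau-1}},Y_{t'})\}_{t'\in E_\tau^{t-1}}$, decomposing $Y_{t'}=\langle[X_{t'}]_{S_{\tau-1}},[\theta^*]_{S_{\tau-1}}\rangle+\tilde\varepsilon_{t'}$ with $\tilde\varepsilon_{t'}:=\langle[X_{t'}]_{S_{\tau-1}^c},[\theta^*]_{S_{\tau-1}^c}\rangle+\varepsilon_{t'}$ gives
$$
[\hat\theta_{\tau,\lambda}^{t-1}-\theta^*]_{S_{\tau-1}}=\big[\hat{\mat \Gamma}_{\tau-1,\lambda}^{t-1}\big]^{-1}\Big(-\lambda[\theta^*]_{S_{\tau-1}}+\sum_{t'\in E_\tau^{t-1}}[X_{t'}]_{S_{\tau-1}}\tilde\varepsilon_{t'}\Big).
$$
Taking the $\hat{\mat \Gamma}_{\tau-1,\lambda}^{t-1}$-norm and the triangle inequality splits the error into a ridge-bias term $\lambda\,\|[\theta^*]_{S_{\tau-1}}\|_{[\hat{\mat \Gamma}_{\tau-1,\lambda}^{t-1}]^{-1}}\le\lambda^{1/2}r$ (using $\hat{\mat \Gamma}_{\tau-1,\lambda}^{t-1}\succeq\lambda\textbf{I}$ and $\|\theta^*\|\le r$) and the self-normalized term $\big\|\sum_{t'\in E_\tau^{t-1}}[X_{t'}]_{S_{\tau-1}}\tilde\varepsilon_{t'}\big\|_{[\hat{\mat \Gamma}_{\tau-1,\lambda}^{t-1}]^{-1}}$.

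The heart of the argument is to show that $\tilde\varepsilon_{t'}$ is a conditionally centered, conditionally sub-Gaussian martingale-difference noise with parameter $\lesssim\nu^2+\sigma^2\|[\theta^*]_{S_{\tau-1}^c}\|^2$ with respect to the filtration $\cF_{t'-1}$ generated by all data strictly before period $t'$ together with the revealed blocks $\{[X_{t',i}]_{S_{\tau-1}}\}_{i\in\cA_{t'}}$. Within epoch $E_\tau$ the action rule (uniform sampling, then the upper confidence band $\bar r(X_{t',i})$, which involves the period-$t'$ covariates only through $[X_{t',i}]_{S_{\tau-1}}$, because $\hat\theta_{\tau,\lambda}^{t'-1}$ is supported on $S_{\tau-1}$) is $\cF_{t'-1}$-measurable, so $[X_{t'}]_{S_{\tau-1}}$ is a legitimate regressor; and by Assumption~\ref{ass1}(A3) (coordinate independence) together with the independence of covariates across actions, $[X_{t',i_{t'}}]_{S_{\tau-1}^c}$ is independent of $\cF_{t'-1}$ and distributed as $P_0$ restricted to $S_{\tau-1}^c$, while $\varepsilon_{t'}$ is independent of everything by (C1). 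This yields the claimed conditional sub-Gaussianity of $\tilde\varepsilon_{t'}=\langle[X_{t'}]_{S_{\tau-1}^c},[\theta^*]_{S_{\tau-1}^c}\rangle+\varepsilon_{t'}$. To keep $S_{\tau-1}$ (and its cardinality) deterministic in this step I would condition on the $\sigma$-field of all data up to the end of epoch $\tau-1$; equivalently one unions over the $\le d^{\tilde\tau s}$ possible sets $S_{\tau-1}$, which only adds a $\tilde\tau s\log d$ term inside the logarithm below.

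With the noise structure in hand, I would apply the self-normalized martingale concentration inequality (Lemma~\ref{self_normalized_ineq} of Section~\ref{cite_lemma}), exactly as in Lemma~\ref{quadratic_upper_bound}, to obtain, simultaneously for all $t\in E_\tau$,
$$
\Big\|\sum_{t'\in E_\tau^{t-1}}[X_{t'}]_{S_{\tau-1}}\tilde\varepsilon_{t'}\Big\|^2_{[\hat{\mat \Gamma}_{\tau-1,\lambda}^{t-1}]^{-1}}\lesssim\big(\nu^2+\sigma^2\|[\theta^*]_{S_{\tau-1}^c}\|^2\big)\log\frac{\det(\hat{\mat \Gamma}_{\tau-1,\lambda}^{t-1})^{1/2}\det(\lambda\textbf{I}_{|S_{\tau-1}|})^{-1/2}}{\delta}.
$$
The determinant--trace inequality, Proposition~\ref{X_uniform_bound} ($\max_j|[X_{t'}]_j|\lesssim\sigma\log(kTd/\delta)$), and $|S_{\tau-1}|\le\tilde\tau s$ bound the log-determinant ratio by $\lesssim s\log^2(kTd/(\delta\lambda))$ (the factor $\tilde\tau\asymp\log T$ being absorbed into one logarithm). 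Combining with the first step, squaring, and using $\nu^2\ge1$ to absorb $\lambda r^2$ into the leading factor gives
$$
\big\|[\hat\theta_{\tau,\lambda}^{t-1}-\theta^*]_{S_{\tau-1}}\big\|^2_{\hat{\mat \Gamma}_{\tau-1,\lambda}^{t-1}}\lesssim\big(\nu^2+\sigma^2\|[\theta^*]_{S_{\tau-1}^c}\|^2\big)\big(s\log^2(kTd/(\delta\lambda))+\lambda r^2\big),
$$
and a union bound over $\tau\le\tilde\tau$ and $t\le T$ (each only adding a $\log$ inside $\log(kTd/\delta)$) finishes the proof.

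I expect the main obstacle to be the second step: pinning down the filtration and arguing rigorously that the epoch-$\tau$ exploration rule reads only the $S_{\tau-1}$-coordinates of the covariates, so that the complement block plus $\varepsilon_{t'}$ genuinely forms conditionally sub-Gaussian martingale-difference noise; the closed form, the $\lambda^{1/2}r$ estimate, the self-normalized inequality, and the determinant--trace step are all routine and parallel Lemma~\ref{quadratic_upper_bound}.
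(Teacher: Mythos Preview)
Your proposal is correct and follows essentially the same route as the paper. The only cosmetic difference is that the paper invokes Lemma~\ref{confidence_region} (the packaged confidence-region result for the ridge estimator) directly after establishing that $\tilde\varepsilon_{t'}$ is conditionally sub-Gaussian with parameter $\nu^2+\sigma^2\|[\theta^*]_{S_{\tau-1}^c}\|^2$, whereas you unpack that lemma by writing the closed form, separating the $\lambda^{1/2}r$ bias, and applying Lemma~\ref{self_normalized_ineq} to the self-normalized term; the conditioning argument, the determinant--trace bound, and the final assembly are identical.
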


\begin{proof}[Proof]
    For each $t' \in E_\tau$, we have decomposition 
    \begin{align*}
     Y_{t'}=\langle X_{t'},\theta^*\rangle+\varepsilon_{t'}&=\big \langle [X_{t'}]_{S_{\tau-1}},[\theta^*]_{S_{\tau-1}}\big\rangle+\big\langle [X_{t'}]_{S^c_{\tau-1}},[\theta^*]_{S^c_{\tau-1}}\big \rangle+\varepsilon_{t'}\\
     &=\big \langle [X_{t'}]_{S_{\tau-1}},[\theta^*]_{S_{\tau-1}}\big\rangle+\tilde \varepsilon_{t'}.
    \end{align*} 
    where $$\tilde \varepsilon_{t'}=\big \langle [X_{t'}]_{S^c_{\tau-1}},[\theta^*]_{S^c_{\tau-1}} \big\rangle+\varepsilon_{t'}.$$ Given the information by epoch $\tau-1$, for each $t' \in E_\tau$ and $i \in \mathcal{A}_{t'}$, by Assumption \ref{ass1} (A3), $[X_{t',i}]_{S^c_{\tau-1}}$ is independent of $ [X_{t',i}]_{S_{\tau-1}} $. Recall that in epoch $\tau$, all decisions on picking arms only depend on the information of covariates in dimension $S_{\tau-1}$. Hence, $\tilde{\varepsilon}_{t'}$ is independent of the design $[X_{t'}]_{S_{\tau-1}}$. Moreover, conditioning on the information by epoch $\tau-1$, $\{\tilde \varepsilon_{t'}\}_{t' \in E_\tau}$ are independent centered sub-Gaussian random variables with parameter 
    $$
    \nu^2+ \sigma^2\cdot \big\|  [\theta^*]_{S^c_{\tau-1}}\big\|^2.
    $$
    Recall the definition of $ \hat\theta_{\tau,\lambda}^{t-1} $, i.e., 
    $$
    \hat\theta_{\tau,\lambda}^{t-1}= \operatorname*{argmin}_{\supp^+(\theta)=S_{\tau-1}}  \sum_{t'\in E_{\tau}^{t-1}}\big|Y_{t'}- \langle X_{t'},\theta\rangle\big|^2 +\lambda   \| \theta \|^2,
    $$
    where $ E_{\tau}^{t-1}=\{t': t'\le t-1,t' \in E_\tau \} $. Then by using the confidence region of ridge regression estimator (i.e., Lemma \ref{confidence_region} provided in Section \ref{cite_lemma}), we have with probability at least $1-\delta$ that
    \begin{align*}
        & [\hat\theta_{\tau,\lambda}^{t-1}-\theta^*]_{S_{\tau-1}}^\top \hat{\mat \Gamma}^{t-1}_{\tau-1,\lambda} [\hat\theta_{\tau,\lambda}^{t-1}-\theta^*]_{S_{\tau-1}} = \big\| [\hat\theta_{\tau,\lambda}^{t-1}-\theta^*]_{S_{\tau-1}}  \big\|^2_{\hat{\mat \Gamma}^{t-1}_{\tau-1,\lambda}} \\
        & \lesssim \big(\nu^2+ \sigma^2\cdot \big\|  [\theta^*]_{S^c_{\tau-1}}\big\|^2 \big)\cdot \log\Big(\frac{\det(\hat{\mat \Gamma}^{t-1}_{\tau-1,\lambda})^{1/2}\det(\lambda \textbf{I}_{|S_{\tau-1}|})^{-1/2}}{\delta} \Big)+\lambda r^2
    \end{align*}
    where $$\hat{\mat \Gamma}^{t-1}_{\tau-1,\lambda}=\lambda \textbf{I}_{|S_{\tau-1}|}+\sum_{t' \in E^{t-1}_{\tau}} [X_{t'}]_{S_{\tau-1}} [X_{t'}]_{S_{\tau-1}}^\top. $$ 
    Similar to the proof of Lemma \ref{quadratic_upper_bound}, we have 
    \begin{align*}
        \det\big(\hat{\mat \Gamma}^{t-1}_{\tau-1,\lambda}\big) \lesssim \Big (\lambda+\frac{T\sigma^2\log^2(kTd/\delta)}{\tau s} \Big)^{\tau s}, 
    \end{align*}    
    with probability at least $1-\delta$.
    Hence, we obtain with probability at least $1-\delta$ that
    \begin{align*}
        \big\| [\hat\theta_{\tau,\lambda}^{t-1}-\theta^*]_{S_{\tau-1}}  \big\|^2_{\hat{\mat \Gamma}^{t-1}_{\tau-1,\lambda}}  &\lesssim \big(\nu^2+ \sigma^2\cdot \big\|  [\theta^*]_{S^c_{\tau-1}}\big\|^2 \big)\cdot \sqrt{\tau s \cdot \log\Big(\delta^{-1}+\frac{T\sigma^2\log^2(kTd/\delta)}{\delta\lambda\tau s}\Big)}  \notag \\
        & \lesssim \big(\nu^2+ \sigma^2\cdot \big\|  [\theta^*]_{S^c_{\tau-1}}\big\|^2 \big)\cdot \sqrt{s}\cdot \log\big({kTd}/{(\delta\lambda)}\big),
    \end{align*}
    which concludes the proof of Lemma \ref{lem:ols-empirical-process}.
    \end{proof}

\section{Other Auxiliary Lemmas} \label{cite_lemma}
In this section, we list the results established in \citep{abbasi2011improved}, which are used extensively in our proof. 
\begin{lemma} (Self-normalized Martingale Concentration inequality) \label{self_normalized_ineq}
    Let $\{X_t\}_{t\ge 1}$ be an $\mathbb{R}^d$-valued stochastic process and $\{\eta_t\}_{t\ge 1} $ be a real valued stochastic process. Let $$\mathcal{F}_t=\sigma(X_1,\ldots,X_{t+1},\eta_1,\ldots,\eta_t)$$ be the natural filtration generated by $\{X_t\}_{t\ge 1}$ and $\{\eta_t\}_{t\ge 1}$. Assume that for each $t$, $\eta_t$ is centered and sub-Gaussian with variance proxy $R^2$ given $\mathcal{F}_{t-1}$, i.e., 
    \begin{align*}
        \mathbb{E}[\eta_t | \mathcal{F}_{t-1}]&=0,\\
        \mathbb{E}\big[\exp\{\lambda \eta_t \} \big| \mathcal{F}_{t-1}\big]& \le \exp\{ \lambda^2R^2/2\},\ \forall \lambda \in \mathbb{R}.
    \end{align*}
    Also assume that $\mat V$ is a $d \times d$ positive definite matrix. For any $t\ge 0$. we define 
    \begin{align*} 
        \hat{\mat \Gamma}_t=\mat V+\sum_{u=1}^t X_uX_u^\top, \ \mat S_t=\sum_{u=1}^t\eta_uX_u.
    \end{align*}
    Then for any $\delta>0$, with probability at least $1-\delta$, for all $t\ge 0$, 
    \begin{align*}
         \| \mat S_t \|_{\hat{\mat \Gamma}_t^{-1}}\le R \sqrt{2\log\Big(\frac{\det(\hat {\mat \Gamma}_t)^{1/2}\det(\mat V)^{-1/2} }{\delta}\Big)}.
    \end{align*}
    \end{lemma}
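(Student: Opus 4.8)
The plan is to prove this by the \emph{method of mixtures} (pseudo\-maximization), the classical route to self\-normalized bounds. First I would fix a deterministic direction $\lambda\in\mathbb{R}^d$ and introduce the exponential process
$$
M_t^\lambda=\exp\!\Big(\tfrac{1}{R}\langle\lambda,\mat S_t\rangle-\tfrac12\,\lambda^\top\!\Big(\textstyle\sum_{u=1}^t X_uX_u^\top\Big)\lambda\Big),\qquad M_0^\lambda=1.
$$
Since the filtration is $\mathcal{F}_t=\sigma(X_1,\dots,X_{t+1},\eta_1,\dots,\eta_t)$, the vector $X_t$ is $\mathcal{F}_{t-1}$-measurable, so conditioning on $\mathcal{F}_{t-1}$ the only randomness in $M_t^\lambda/M_{t-1}^\lambda=\exp(\eta_t\langle\lambda,X_t\rangle/R-\tfrac12\langle\lambda,X_t\rangle^2)$ comes from $\eta_t$. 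Writing $a=\langle\lambda,X_t\rangle$ and applying the conditional sub\-Gaussian hypothesis $\mathbb{E}[\exp(c\,\eta_t)\mid\mathcal{F}_{t-1}]\le\exp(c^2R^2/2)$ with $c=a/R$ gives $\mathbb{E}[M_t^\lambda/M_{t-1}^\lambda\mid\mathcal{F}_{t-1}]\le e^{-a^2/2}e^{a^2/2}=1$, so $\{M_t^\lambda\}_{t\ge0}$ is a nonnegative supermartingale with $\mathbb{E}[M_t^\lambda]\le1$.

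Second, I would mix over $\lambda$ against the prior $\mu=\mathcal{N}(0,\mat V^{-1})$, setting $\bar M_t=\int_{\mathbb{R}^d}M_t^\lambda\,\mu(d\lambda)$. By Tonelli and linearity, $\{\bar M_t\}$ is again a nonnegative supermartingale with $\mathbb{E}[\bar M_t]\le1$ and $\bar M_0=1$. The key computation is the Gaussian integral: absorbing the prior density and completing the square in $\tfrac{1}{R}\lambda^\top\mat S_t-\tfrac12\lambda^\top\hat{\mat\Gamma}_t\lambda$ (with $\hat{\mat\Gamma}_t=\mat V+\sum_{u=1}^tX_uX_u^\top$) yields
$$
\bar M_t=\Big(\frac{\det\mat V}{\det\hat{\mat\Gamma}_t}\Big)^{1/2}\exp\!\Big(\frac{1}{2R^2}\,\mat S_t^\top\hat{\mat\Gamma}_t^{-1}\mat S_t\Big)=\Big(\frac{\det\mat V}{\det\hat{\mat\Gamma}_t}\Big)^{1/2}\exp\!\Big(\frac{\|\mat S_t\|_{\hat{\mat\Gamma}_t^{-1}}^2}{2R^2}\Big).
$$

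Third, I would apply a maximal inequality for nonnegative supermartingales. With the stopping time $\tau=\inf\{t\ge0:\bar M_t\ge1/\delta\}$, optional stopping on the stopped process $\bar M_{t\wedge\tau}$ together with Fatou (equivalently Ville's inequality) gives $\mathbb{P}(\sup_{t\ge0}\bar M_t\ge1/\delta)\le\delta\,\mathbb{E}[\bar M_0]\le\delta$. On the complementary event one has $\bar M_t<1/\delta$ for \emph{every} $t\ge0$ simultaneously; taking logarithms and rearranging yields
$$
\|\mat S_t\|_{\hat{\mat\Gamma}_t^{-1}}^2\le 2R^2\log\!\Big(\frac{\det(\hat{\mat\Gamma}_t)^{1/2}\det(\mat V)^{-1/2}}{\delta}\Big)\qquad\text{for all }t\ge0,
$$
and taking square roots is precisely the asserted bound.

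The main obstacle I anticipate is twofold. The first delicate point is the supermartingale verification for $M_t^\lambda$ with full attention to measurability: one must exploit that $X_t$ is predictable under the stated filtration so that $\eta_t$ is the sole source of new randomness at step $t$. The second, more serious point is making the uniform\-in\-$t$ maximal inequality rigorous: $\bar M_t$ need not converge, so optional stopping cannot be invoked naively and the argument must pass through the stopped process $\bar M_{t\wedge\tau}$ and Fatou's lemma. By contrast, the Gaussian integral and the determinant identity $\det\hat{\mat\Gamma}_t=\det\mat V\cdot\det(\mathbf{I}_d+\mat V^{-1/2}(\sum_uX_uX_u^\top)\mat V^{-1/2})$ are routine linear algebra.
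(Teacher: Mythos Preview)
Your proposal is correct and is exactly the method-of-mixtures argument used in \cite{abbasi2011improved}, which is where this lemma originates. The paper itself does not give a proof of this lemma at all: it is merely stated in the section of auxiliary results and attributed to \cite{abbasi2011improved}, so there is no ``paper's own proof'' to compare against beyond noting that your argument reproduces the original source's proof essentially verbatim.
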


    \begin{lemma} (Confidence Region of Ridge Regression Estimator under Dependent Design) \label{confidence_region}
    Consider the stochastic processes $\{(X_t,Y_t,\eta_t)\}_{t\ge 1 }$. Assume that $\{X_t\}_{t\ge 1 } $ and $\{\eta_t\}_{t\ge 1 } $ satisfy the condition in Lemma \ref{self_normalized_ineq} and $Y_t=\langle X_t,\theta^*\rangle+\eta_t$. Also assume that $ \|\theta^*\|_2 \le r$. For any $T\ge 0$, let 
    $$\bar{\theta}_t=\big(\mat X_{1:t}^\top\mat X_{1:t}+\lambda \textbf{I}_d\big)^{-1}\mat X_{1:t}^\top\mat Y_{1:t} $$ 
    be the ridge regression estimator, where $\mat X_{1:t}= (X_1,\ldots,X_t)^\top $, $ \mat Y_{1:t}= (Y_1,\ldots,Y_t)^\top $ and $I_d $ is the $d$-dimensional identity matrix. Then with probability at least $1-\delta$, for all $t\ge 0 $,
    \begin{align*}
    \| \bar{\theta}_t-\theta^* \|_{\hat{\mat \Gamma}_t} \le  R \sqrt{2\log\Big(\frac{\det(\hat{\mat \Gamma}_t)^{1/2}\det(\lambda \textbf{I}_d)^{-1/2} }{\delta}\Big)}+\lambda^{1/2}r. 
    \end{align*}
    where $\hat{\mat \Gamma}_t=\lambda \textbf{I}_d+\sum_{u=1}^t X_uX_u^\top$,
    \end{lemma}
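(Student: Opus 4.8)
The plan is to reduce the statement to the self-normalized martingale bound of Lemma~\ref{self_normalized_ineq} by an elementary algebraic decomposition of the ridge estimation error. First I would write the design in matrix form, noting that $\mat X_{1:t}^\top\mat X_{1:t}=\hat{\mat \Gamma}_t-\lambda\textbf{I}_d$ and that $\mat Y_{1:t}=\mat X_{1:t}\theta^*+\vct{\eta}_{1:t}$ with $\vct{\eta}_{1:t}=(\eta_1,\dots,\eta_t)^\top$. Substituting these into the closed form $\bar{\theta}_t=\hat{\mat \Gamma}_t^{-1}\mat X_{1:t}^\top\mat Y_{1:t}$ and using $\hat{\mat \Gamma}_t^{-1}(\hat{\mat \Gamma}_t-\lambda\textbf{I}_d)\theta^*=\theta^*-\lambda\hat{\mat \Gamma}_t^{-1}\theta^*$ yields the clean identity
$$
\bar{\theta}_t-\theta^*=\hat{\mat \Gamma}_t^{-1}\big(\mat X_{1:t}^\top\vct{\eta}_{1:t}-\lambda\theta^*\big)=\hat{\mat \Gamma}_t^{-1}\big(\mat S_t-\lambda\theta^*\big),
$$
where $\mat S_t=\sum_{u=1}^t\eta_u X_u$ is precisely the martingale sum appearing in Lemma~\ref{self_normalized_ineq}, and $-\lambda\theta^*$ is the ridge bias.

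Second, I would pass to the $\hat{\mat \Gamma}_t$-weighted norm via the identity $\|\hat{\mat \Gamma}_t^{-1}v\|_{\hat{\mat \Gamma}_t}=\|v\|_{\hat{\mat \Gamma}_t^{-1}}$, which holds because $\hat{\mat \Gamma}_t$ is symmetric positive definite, and then apply the triangle inequality for the norm induced by $\hat{\mat \Gamma}_t^{-1}$:
$$
\|\bar{\theta}_t-\theta^*\|_{\hat{\mat \Gamma}_t}=\|\mat S_t-\lambda\theta^*\|_{\hat{\mat \Gamma}_t^{-1}}\le\|\mat S_t\|_{\hat{\mat \Gamma}_t^{-1}}+\lambda\|\theta^*\|_{\hat{\mat \Gamma}_t^{-1}}.
$$
For the stochastic summand I would invoke Lemma~\ref{self_normalized_ineq} with $\mat V=\lambda\textbf{I}_d$; since the filtration and the conditional sub-Gaussianity hypotheses on $\{\eta_u\}$ are exactly those required by that lemma, it gives, uniformly over all $t\ge 0$ with probability at least $1-\delta$,
$$
\|\mat S_t\|_{\hat{\mat \Gamma}_t^{-1}}\le R\sqrt{2\log\Big(\det(\hat{\mat \Gamma}_t)^{1/2}\det(\lambda\textbf{I}_d)^{-1/2}/\delta\Big)}.
$$
For the deterministic bias summand, the definition $\hat{\mat \Gamma}_t=\lambda\textbf{I}_d+\sum_u X_uX_u^\top\succeq\lambda\textbf{I}_d$ forces $\hat{\mat \Gamma}_t^{-1}\preceq\lambda^{-1}\textbf{I}_d$, so $\|\theta^*\|_{\hat{\mat \Gamma}_t^{-1}}\le\lambda^{-1/2}\|\theta^*\|\le\lambda^{-1/2}r$, whence $\lambda\|\theta^*\|_{\hat{\mat \Gamma}_t^{-1}}\le\lambda^{1/2}r$. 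Adding the two bounds reproduces the claimed inequality, and the uniformity in $t$ is inherited directly from Lemma~\ref{self_normalized_ineq}.

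Because the entire probabilistic burden is delegated to the cited self-normalized martingale inequality, there is no substantive analytic obstacle in this lemma; the work is essentially deterministic linear algebra. The only subtleties are bookkeeping ones: the decomposition must isolate $\mat S_t$ and $\lambda\theta^*$ exactly as above, and one must feed $\mat V=\lambda\textbf{I}_d$ into Lemma~\ref{self_normalized_ineq} so that its determinant ratio $\det(\hat{\mat \Gamma}_t)^{1/2}\det(\lambda\textbf{I}_d)^{-1/2}$ matches the statement verbatim. I would close by remarking that the same decomposition, restricted to the coordinates of a generalized support, is what drives the restricted-ridge error bounds used elsewhere in the paper.
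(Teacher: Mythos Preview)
Your proof is correct and is precisely the standard argument from \citet{abbasi2011improved}, which is where the paper sources this lemma; the paper itself does not supply an independent proof but simply cites the result in Section~\ref{cite_lemma}. The decomposition $\bar\theta_t-\theta^*=\hat{\mat\Gamma}_t^{-1}(\mat S_t-\lambda\theta^*)$, the norm identity, the triangle inequality, the invocation of Lemma~\ref{self_normalized_ineq} with $\mat V=\lambda\textbf{I}_d$, and the bias bound via $\hat{\mat\Gamma}_t^{-1}\preceq\lambda^{-1}\textbf{I}_d$ are exactly the steps in the original reference, so there is nothing to add.
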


\begin{singlespace}
\bibliographystyle{abbrvnat}
\bibliography{refs,bib}

\end{singlespace}
\newpage








\end{document}